\newtheorem{theorem}{Theorem}
\newtheorem{corollary}[theorem]{Corollary}
\newtheorem{lemma}[theorem]{Lemma}
\newtheorem{proposition}[theorem]{Proposition}
\theoremstyle{definition}
\newtheorem{definition}{Definition}
\newtheorem{assumption}[definition]{Assumption}
\theoremstyle{remark}
\numberwithin{equation}{section}
\newcommand{\paren} [1] {\ensuremath{ \left( {#1} \right) }}
\newcommand{\condparen}[2]{\ensuremath{\left(#1\left\lvert#2\right.\right)}}
\newcommand{\E}[1]{\mathbb{E}\left[#1\right]}
\newcommand{\prob}[1]{\text{Pr}\left\{#1\right\}}
\newcommand{\cond}{\,\bigg\vert\,}
\newcommand{\cardA}{\vert \mathcal{A} \vert}
\newcommand{\cardE}{\vert \mathcal{E} \vert}
\newcommand{\cardV}{\vert \mathcal{V} \vert}
\newcommand{\cardP}{\vert \mathcal{P} \vert}
\title{Online Learning of Energy Consumption for Navigation of Electric Vehicles\thanks{This paper is an extended version of \cite{ijcai2020-0284}.}}
\begin{document}

\author{Niklas {\AA}kerblom \\
	Volvo Car Corporation \\
	Gothenburg, Sweden \\[3pt] 
	Department of Computer Science and Engineering\\
	Chalmers University of Technology\\
	Gothenburg, Sweden \\
	\texttt{niklas.akerblom@chalmers.se} \\
	\And
	Yuxin Chen \\
	Department of Computer Science\\
	University of Chicago\\
	Chicago, USA \\
	\texttt{chenyuxin@uchicago.edu} \\
    \And
	Morteza Haghir~Chehreghani \\
	Department of Computer Science and Engineering\\
	Chalmers University of Technology\\
	Gothenburg, Sweden \\
	\texttt{morteza.chehreghani@chalmers.se} \\
}

\maketitle

\begin{abstract}
Energy efficient navigation constitutes an important challenge in electric vehicles, due to their limited battery capacity. We employ a Bayesian approach to model the energy consumption at road segments for efficient navigation. In order to learn the model parameters, we develop an online learning framework and investigate several exploration strategies such as Thompson Sampling and Upper Confidence Bound. We then extend our online learning framework to the multi-agent setting, where multiple vehicles adaptively navigate and learn the parameters of the energy model. We analyze Thompson Sampling and establish rigorous regret bounds on its performance in the single-agent and multi-agent settings, through an analysis of the algorithm under batched feedback. Finally, we demonstrate the performance of our methods via experiments on several real-world city road networks.
\end{abstract}

\keywords{Energy efficient navigation \and Online learning \and Multi-armed bandits \and Thompson sampling}

\section{Introduction}
Today, electric vehicles experience a fast-growing role in many different transport systems. However, the applicability of electric vehicles is often constrained by the limited capacity of their batteries.
Due to the historically high cost of batteries, the range of electric vehicles has generally been much shorter than that of conventional vehicles. This has led to the fear of being stranded when the battery is depleted, an effect known as ``range anxiety''. Such concerns could be alleviated by improving the navigation algorithms and route planning methods for these systems. Therefore, in this paper we aim at developing principled methods for energy efficient navigation of electric vehicles.

Several works employ variants of shortest path algorithms for the purpose of finding the routes that minimize the energy consumption. Some of them (e.g., \cite{artmeier2010shortest,sachenbacher2011efficient}) focus on computational efficiency in searching for feasible paths where the constraints induced by limited battery capacity are satisfied. Both \cite{artmeier2010shortest} and \cite{sachenbacher2011efficient} use energy consumption as edge weights for the shortest path problem. They also consider recuperation of energy modeled as negative edge weights, since they identify that negative cycles cannot occur due to the law of conservation of energy. In \cite{sachenbacher2011efficient}, a consistent heuristic function for energy consumption is used with a modified version of A*-search \cite{hart1968formal} to capture battery constraints at query-time.
In \cite{baum_et_al:LIPIcs:2017:7608}, instead of using fixed scalar energy consumption edge weights, the authors use piecewise linear functions to represent the energy demand, as well as lower and upper limits on battery capacity. 

This task has also been developed beyond the shortest path problems in the context of the well-known vehicle routing problem (VRP). In  \cite{basso2019energy}, VRP is applied to electrified commercial vehicles in a two-stage approach (i.e., an electric vehicle routing problem, EVRP), where the first stage consists of finding the paths between customers with the lowest energy consumption and at the second stage the EVRP including optional public charging station nodes is solved. The same authors later extend their methods in \cite{basso2021electric}, using Bayesian methods to learn the energy consumption of individual road segments while solving the EVRP.

The aforementioned methods either assume that the necessary information for computing the optimal path is available, or do not provide enough exploration to acquire it. Thereby, we focus on developing an \emph{online} framework to learn (explore) the parameters of the energy model adaptively alongside solving the navigation (optimization) problem instances. 
We adopt a Bayesian approach to model the energy consumption for each road segment. The goal is to learn the parameters of such an energy model to be used for efficient navigation. Therefore, we develop an online learning framework to investigate and analyze several exploration strategies for learning the unknown parameters. 

Thompson Sampling (TS) \cite{thompson1933}, also called \emph{posterior sampling} and \emph{probability matching}, is a model-based exploration method for an optimal trade-off between exploration and exploitation. 
Several experimental \cite{GraepelCBH10,ChapelleL11,ChenRCK17} and theoretical studies \cite{agrawal2012analysis,KaufmannKM12,BubeckL13,osband2017posterior} have shown the effectiveness of Thompson Sampling in different settings. \cite{ChenRCK17} develops an online framework to explore the parameters of a decision model via Thompson Sampling in the application of interactive troubleshooting. 
In \cite{wang2018thompson}, Thompson Sampling is used for combinatorial semi-bandit problems, including the shortest path problem with Bernoulli-distributed edge costs, and distribution-dependent regret bounds are derived.

Upper Confidence Bound (UCB) \cite{Auer02} is another approach widely used for exploration-exploitation trade-off. A variant of UCB for combinatorial semi-bandits is introduced and analyzed in \cite{chen2013combinatorial}.
A Bayesian version of the Upper Confidence Bound method is introduced in \cite{kaufmann2012bayesian} and later analyzed in terms of regret bounds in \cite{kaufmann2018bayesian}. An alternative Bayesian approach is proposed in \cite{reverdy2014modeling}, which the authors call the Upper Credible Limit algorithm.

In this work, beyond the novel online learning framework for energy efficient navigation, we further extend our methods to the batched feedback and multi-agent settings. In the former, feedback from the environment is delayed and received in batches, while in the latter, multiple vehicles adaptively navigate and learn the parameters of the energy model. We then extensively analyze the proposed methods and evaluate them on several synthetic navigation tasks, as well as on real-world settings using SUMO-simulated traffic data from three different cities: Luxembourg \cite{codeca2017luxembourg}, Monaco \cite{MoSTCodeca2017} and Turin \cite{rapelli2021vehicular}. 

\subsection{Related Work}

The general problem considered in this paper is finding paths of minimum expected cost through graphs with unknown edge weight distributions. This problem has been studied using the framework of stochastic multi-armed bandits for at least a decade, where \cite{gai2012combinatorial} and \cite{liu2012adaptive} are prominent examples of early work addressing the problem. The authors of \cite{gai2012combinatorial} introduce a stochastic combinatorial bandit framework, where it is assumed that the weight of each edge in a traveled path is revealed afterwards (i.e., semi-bandit feedback). They propose a method called \textit{Learning with Linear Rewards} based on the celebrated principle of optimism in the face of uncertainty, where paths are selected using an exploration bonus added to the estimated mean of each edge. Other works, also based on the utilizing this principle for combinatorial bandits and online shortest path problems, are \cite{chen2013combinatorial} and \cite{zou2014online}.

Semi-bandit feedback is a natural assumption in our setting, since it is straightforward to record the actual energy consumed by a vehicle for each edge traversed. However, there are several methods for stochastic combinatorial bandits that do not need, nor utilize, this assumption. An example of such a method is \cite{liu2012adaptive}, in which the authors leverage path interdependencies using barycentric spanners. Other examples include any algorithm for the linear stochastic bandit model (see e.g., \cite{dani2008stochastic, abbasi2011improved}), of which combinatorial bandits with linear rewards is a special case.

Thompson Sampling has been analyzed and evaluated with promising results for combinatorial bandit problems in general (e.g., \cite{gopalan14, wen2015, wang2018thompson}), and the online shortest path problem is a commonly suggested application. The authors of \cite{russo2014learning} propose a framework for analyzing the Bayesian regret of Thompson Sampling, and apply it to several different problem settings. Their technique for converting regret bounds of UCB algorithms into bounds for Thompson Sampling is utilized in our work to derive bounds for batched feedback and multi-agent problem settings.

Bandit problems with delayed or batched feedback have been of intense interest to the research community, due to the wide applicability in real-world settings. Thompson Sampling has been empirically shown to achieve good results when reward observations are delayed \cite{ChapelleL11}. The authors of \cite{joulani2013online} propose a \textit{black box} algorithm which may convert any stochastic bandit algorithm into an algorithm handling delayed feedback. The converted algorithms retain the regret bounds of the original algorithms, except for an additive term which is constant in the horizon and linear in the maximum delay. In \cite{perchet2016batched}, a lower regret bound for the two-armed bandit problem with batched feedback is derived, again exhibiting a linear dependence with respect to the batch size. 

We take inspiration from the frequentist analysis of batched linear contextual UCB presented in \cite{han2020sequential} and extended to the generalized linear setting in \cite{ren2020batched}, utilizing a similar technique in our analysis to decompose the Bayesian regret over the batches. Another extension of \cite{han2020sequential} is \cite{ren2020dynamic}, which presents a greedy LASSO-based algorithm for a \textit{high-dimensional} batched linear contextual bandit setting, where the dimension of the context is assumed to be much higher than the time horizon. To provide an upper bound for the frequentist regret, they assume that the context is stochastic with enough variance to induce sufficient exploration. This assumption does not hold for the non-contextual setting studied in our work.

Finally, regarding incremental learning of energy consumption in graphs, the authors of \cite{basso2021electric} use a Bayesian approach, similar to the one in this work, to learn the edge-specific distributions of electric vehicle energy consumption in a road network. They utilize the posterior distributions to formulate and solve an EVRP for commercial vehicles, where the paths between customers, charging stations and depots are selected using learned parameters and information from the environment. Since exploration is not the focus of their work, their method of calculating the shortest paths most closely corresponds to the greedy baseline used for the experiments in this work.

\subsection{Our Contributions}

First and foremost, we propose a novel online learning framework for energy efficient navigation of electric vehicles, in a setting where the vehicle energy consumption of road segments is assumed to be stochastic and the corresponding distributions are unknown \textit{a priori}. We utilize a physical model of vehicle energy consumption to assign the edge-specific parameters of prior distributions for Bayesian bandit algorithms, such as Thompson Sampling and BayesUCB, in order to intelligently guide necessary exploration towards reasonable paths. The multi-armed bandit problem can be seen as a resource allocation problem, and as such, bandit algorithms are most useful where there is a limited number of agents available for data collection. 

While travel time in a road network is both stochastic and a common edge weight in shortest path problems, there is an abundance of travel time data available from various sources, e.g., from cellular devices. For vehicle energy consumption, however, there are factors limiting the number of agents. As energy consumption depends heavily on the specific vehicle type used, internal vehicle sensors are required for data collection. Furthermore, energy consumption also depends on the characteristics of the road traveled, like slopes, curvature, bumps, etc. Hence, it is a problem setting highly suited for Bayesian bandit algorithms.

While several works on Bayesian combinatorial bandit algorithms have been empirically evaluated using uninformative priors, it is less common with experiments where informative priors are used to explore combinatorial arm sets more efficiently. We not only utilize informative priors in our experiments, but also study the exploration of the road network through visual inspection of geospatial plots. Furthermore, we experimentally evaluate the robustness of the proposed framework to prior misspecification. We perform experiments for the road networks of multiple cities, using realistic traffic environment data.

As far as we are aware, there are no previous works analyzing the Bayesian regret of batched combinatorial Thompson Sampling. Furthermore, we also extend our analysis to the synchronous multi-agent setting. While there is prior work for batched linear contextual bandits (e.g., UCB in \cite{han2020sequential} and \cite{ren2020batched}), a combinatorial bandit problem is only a special case of the linear bandit problem for linear reward functions. Our technique, however, is feasible to extend for non-linear reward functions, such as in \cite{aakerblom2021online}, where combinatorial Thompson Sampling is used to address the problem of finding paths which minimize their maximum edge weights.  

Finally, this is the first work extending and evaluating the BayesUCB algorithm \cite{kaufmann2012bayesian} to the online shortest path problem, empirically demonstrating good performance of the algorithm in this problem setting.

\section{Energy Consumption Model}\label{sec:energy_model}

In this section, we start by describing how we model the road network and the different factors affecting the energy consumption of a vehicle traversing a specific road segment. We then outline two different Bayesian approaches to extend the deterministic energy consumption model to a probabilistic setting.

\subsection{Setup of the Energy Consumption Model}\label{sec:setup_energy_model}

We model the road network by a directed graph $\mathcal{G}(\mathcal{V}, \mathcal{E}, \bm{w})$ where each vertex $u \in \mathcal{V}$ represents an intersection of the road segments, and $\mathcal{E}$ indicates the set of directed edges. Each edge $e = (u_1, u_2) \in \mathcal{E}$ is an ordered pair of vertices $u_1, u_2 \in \mathcal{V}$ such that $u_1 \neq u_2$ and it represents the road segment between the intersections associated with $u_1$ and $u_2$. In the cases where bidirectional travel is allowed on a road segment represented by $(u_1,u_2) \in \mathcal{E}$, we add an edge $(u_2,u_1) \in \mathcal{E}$ in the opposite direction. A directed \emph{path} is a sequence of vertices $\langle u_1, u_2, \dots, u_n \rangle$, where $u_h \in \mathcal{V}$ for $h = 1, \dots, n$ and $(u_h, u_{h+1}) \in \mathcal{E}$ for $h = 1, \dots, n-1$. Hence, a path $\bm{p}$ can also be viewed as a sequence of edges. If $\bm{p}$ starts and ends with the same vertex, $\bm{p}$ is called a cycle. Note that, in this work, different paths may have different numbers of vertices.

We associate a weight vector $\bm{w}$ to the graph, where each element $w_e$ represents the total energy consumed by a vehicle traversing edge $e \in \mathcal{E}$. We extend the notation so that the total weight of a path $\bm{p}$ is denoted $w_{\bm{p}} := \sum_{e \in \bm{p}} w_e$. For each edge $e$, we also define other edge attributes associated with road segments, such as the average speed $v_e$, the length $l_e$, and the inclination $\alpha_e$.

In our setting, the amount of energy consumed at different road segments is stochastic and \textit{a priori} unknown. 
We adopt a Bayesian approach to model the energy consumption at each road segment $e\in \mathcal{E}$, i.e., the edge weights. Such a choice provides a principled way to induce prior knowledge. Furthermore, as we will see, this approach fits  well with the online learning and exploration of the parameters of the energy model. 

We first consider a deterministic model of vehicle energy consumption $E_e$ for an edge $e$, which will be used later as the prior. Similar to e.g., \cite{guzzella2007vehicle,basso2021electric}, our model is based on longitudinal vehicle dynamics and Newton's second law of motion. For convenience, we assume that vehicles drive with constant speed along individual edges so that we can disregard the longitudinal acceleration term. However, this assumption is only used for the prior. We then have the following equation for the approximated energy consumption (in watt-hours):
\begin{eqnarray}
    \label{eq:energy_consumption}
    E_e := \frac{m g l_e \sin (\alpha_e) + m g C_r l_e \cos (\alpha_e) + 0.5 C_d A \rho l_e v^2_e}{3600 \eta} .
\end{eqnarray}

In Eq. \ref{eq:energy_consumption} the vehicle mass $m$, the rolling resistance coefficient $C_r$, the front surface area $A$ and the air drag coefficient $C_d$ are vehicle-specific parameters. Whereas, the road segment length $l$, speed $v$ and inclination angle $\alpha$ are location (edge) dependent. In principle, $C_r$ could also be considered edge-specific (since it also depends on the surface of the road), but in this work, we assume that it is the same for all edges. We treat the gravitational acceleration $g$ and air density $\rho$ as constants. The powertrain efficiency $\eta$ is vehicle specific and can be approximated by a constant $\eta = 1$ for an ideal vehicle with no battery-to-wheel energy losses.

Actual energy consumption can be either positive (traction) or negative (regenerative braking). If the energy consumption is modeled accurately and used as $w_e$ in a graph $\mathcal{G}(\mathcal{V},\mathcal{E},\bm{w})$, the law of conservation of energy guarantees that there exists no cycle $\bm{c}$ in $\mathcal{G}$ where $w_{\bm{c}} < 0$. However, since we are modeling and estimating the expected energy consumption of each individual road segment independently (to ensure that the problem is tractable), this guarantee does not necessarily hold in our case. 

While modeling energy recuperation is desirable from an accuracy perspective, it introduces some difficulties. In terms of computational complexity, Dijkstra's algorithm \cite{dijkstra1959note} does not allow negative edge weights and the Bellman-Ford algorithm \cite{shimbel1955structure,ford1956network,bellman1958routing} is slower by an order of magnitude. There are methods to overcome this (e.g., \cite{johnson1977efficient}), but they still assume that there are no negative edge weight cycles in the network. Hence, we choose to only consider positive edge weights when solving the energy efficient (shortest path) problem, which enables us to use Dijkstra's algorithm in this work. This approximation still achieves meaningful results, since even with recuperation discarded, edges with high energy consumption are avoided.
So while the powertrain efficiency $\eta$ has a higher value when the energy consumption is negative than when it is positive, we believe using a constant is a justified simplification as we only consider positive edge-level energy consumption in the optimization stage. However, we emphasize that our generic online learning framework is independent of such approximations, and can be employed with any senseful energy model and shortest path algorithm.

\subsection{Rectified Gaussian Model of Energy Consumption} \label{sec:rectified_gaussian_model}

Motivated by \cite{wu2015electric}, as the first attempt at a probabilistic model of energy consumption, we assume the \emph{stochastic} energy consumption $\Tilde{E}_e$ of a road segment represented by an edge $e$ follows a Gaussian distribution, given a certain small range of inclination, vehicle speed and acceleration. We also assume that $\Tilde{E}_e$ is independent from $\Tilde{E}_{e'}$ for all $e' \in \mathcal{E}$ where $e' \neq e$ and that we may observe negative energy consumption. In other words, we assume that we may \textit{observe} the energy recuperation of the vehicle, even though we only use estimates of the non-negative energy consumption when solving the shortest path problem (as stated in Section \ref{sec:setup_energy_model}). The likelihood function (where, for later convenience, $\Tilde{E}_e$ is negated so that $\theta^*_e$ indicates a mean \emph{reward}) is then
\begin{equation*}
    P(\Tilde{E}_e \mid \theta^*_e, \sigma^2_e) := \mathcal{N}(-\Tilde{E}_e \mid \theta^*_e, \sigma^2_e) .
\end{equation*}

Here, for clarity, we assume the noise variance $\sigma_e^2$ is given.
We can then follow a Bayesian approach, and use a Gaussian conjugate prior over the mean energy consumption:
\begin{equation*}
    P(\theta^*_e \mid \mu_{e,0}, \varsigma^2_{e,0}) := \mathcal{N}(\theta^*_e \mid \mu_{e,0}, \varsigma^2_{e,0}) , 
\end{equation*}

where we choose $\mu_{e,0} \leftarrow -E_e$ and $\varsigma^2_{e,0} \leftarrow (\vartheta \mu_{e,0})^2$ for some constant $\vartheta > 0$. Due to the conjugacy properties, we have closed-form expressions for updating the posterior distributions with new observations of $\Tilde{E}_e$. For any path $\bm{p}$ in $\mathcal{G}$, we have $\E{\sum_{e \in \bm{p}} \Tilde{E}_e} = \sum_{e \in \bm{p}} \mathbb{E}[\Tilde{E}_e]$, which means we can find the path with the lowest expected energy demand if we set $w_e \leftarrow \mathbb{E}[\Tilde{E}_e]$ and solve the shortest path problem over $\mathcal{G}(\mathcal{V},\mathcal{E},\bm{w})$. 
When the expected energy consumption is estimated instead of being known, to deal with the risk of $w_e < 0$ (i.e., negative weights), we instead set $w_e \leftarrow \E{z_e}$ where $z_e$ is distributed according to the rectified Gaussian distribution $\mathcal{N}^R(-\theta^*_e, \sigma^2_e)$, which is defined so that $z_e := \max{(0, \Tilde{E}_e)}$ and $\Tilde{E}_e \sim \mathcal{N}(-\theta^*_e, \sigma^2_e)$. The expected value of $z_e$ is then $\E{z_e} = -(\theta^*_e \cdot (1 - \Phi(-\theta^*_e/\sigma_e)) + \sigma_e \cdot \phi(-\theta^*_e/\sigma_e))$, where $\Phi$ and $\phi$ are the standard Gaussian CDF and PDF respectively. Thus, since we observe both negative and positive energy consumption, we may utilize the conjugacy properties of the Gaussian likelihood and prior distribution to efficiently update and sample from the posterior distribution over the (non-negative) rectified Gaussian mean.

\subsection{Log-Gaussian Model of Energy Consumption}

Alternatively, instead of assuming a rectified Gaussian distribution for the energy consumption of each edge, we model the non-negative edge weights by (conjugate) Log-Gaussian likelihood and prior distributions. By definition, if we have a Log-Gaussian random variable $Z \sim \mathcal{LN}(\mu, \sigma^2)$, then the logarithm of $Z$ is a Gaussian random variable $(\log{Z}) \sim \mathcal{N}(\mu, \sigma^2)$. Therefore, we have the expected value $\mathbb{E}[Z] = \exp\{\mu + 0.5 \sigma^2\}$ and the variance $\mathbf{Var}[Z] = (\exp\{\sigma^2\} - 1)\cdot\exp\{2\mu + \sigma^2\}$. We can then define the likelihood function as
\begin{align}
    P\condparen{\Tilde{E}_e}{\theta^*_e, \sigma^2_e} := \mathcal{LN}\condparen{\Tilde{E}_e}{\log{(-\theta^*_e)} - \frac{1}{2} \log \paren{1 + \frac{\sigma^2_e}{\psi_e^2}},~\log \paren{1 + \frac{\sigma^2_e}{\psi_e^2}}} , \label{eq:log_normal_likelihood}
\end{align}
such that we match the moments of the rectified Gaussian model as well as possible, with $\mathbb{E}[\Tilde{E}_e] = -\theta^*_e$ and $\mathbf{Var}[\Tilde{E}_e] = \sigma^2_e \cdot \frac{(\theta^*_e)^2}{\psi_e^2}$. We also choose the prior hyper-parameters such that $\mathbb{E}[\theta^*_e] = \mu_{e,0}$ and $\mathbf{Var}[\theta^*_e] = \varsigma_{e,0}^2$, and also let $\psi_e = \mu_{e,0}$, where $\mu_{e,0}$ and $\varsigma_{e,0}$ are calculated in the same way as for the Gaussian prior (except that $\mu_{e,0}$ is restricted to be negative) in order to make fair comparisons between the Log-Gaussian and rectified Gaussian results. The resulting prior distribution is
\begin{align}
    P\big(\theta^*_e \big\vert \mu_{e,0}, \varsigma^2_{e,0}\big) :=  \mathcal{LN}\left(-\theta^*_e \big\vert \log{(-\mu_{e,0})} - \frac{1}{2} \log\paren{1 + \frac{\varsigma_{e,0}^2}{\mu_{e,0}^2}},~\log\paren{1 + \frac{\varsigma_{e,0}^2}{\mu_{e,0}^2}} \right) . \label{eq:log_normal_prior}
\end{align}

We emphasize that the specific parameterization that we use for the Log-Gaussian model in Eq. \ref{eq:log_normal_likelihood} allows for closed form posterior updates with the prior distribution in Eq. \ref{eq:log_normal_prior}. Since $- \theta_e$ is drawn from a Log-Gaussian prior distribution, then the value (i.e., a linear function of $\log (- \theta_e)$) of the first parameter of the Log-Gaussian likelihood described in Eq. \ref{eq:log_normal_likelihood} is Gaussian (i.e., the conjugate prior distribution of the first parameter using the standard parameterization). For more details on Bayesian updates with this Log-Gaussian parameterization, see e.g., \cite{herath2015conjugate}.
We summarize the notation used in the preceding sections and the rest of the paper in Table \ref{tbl:notation} of \ref{sec:notation}.

\section{Online Learning and Exploration of the Energy Model}\label{sec:online-single}
We develop an \emph{online learning} framework to explore the parameters of the energy model adaptively alongside sequentially solving the navigation (optimization) problem at different time steps. Here, a \textit{time step} (or round) refers to each time we select and traverse a path.
At the beginning, the exact energy consumption of the road segments and the parameters of the respective model are unknown. Thus, we start with an approximate and possibly inaccurate estimate of the parameters. We use the current estimates to solve the current navigation task. We then update the model parameters according to the observed energy consumption at different road segments (edges) of the navigated path, and use the new parameters to solve the next task.

Algorithm \ref{alg:online_algorithm} describes these steps, where the vectors $\bm{\mu}_{t-1}$ and $\bm{\varsigma}_{t-1}$ refer to the current posterior parameters of the energy model for all the edges at the current time $t$, which are used to obtain the current edge weight vector $\bm{w}_t$. Whenever we refer to an element of a vector indexed by a time step $t$, we always let the \emph{rightmost} index be $t$, e.g., $w_{e,t}$ in the vector $\bm{w}_t$. We solve the optimization problem using $\bm{w}_t$ to determine the optimal action (or \emph{arm} in the nomenclature of multi-armed bandit problems) $\bm{a}_t$, which in this context is a path through a graph. The action $\bm{a}_t$ is applied and a reward $r_t(\bm{a}_t)$ is observed, consisting of the actual measured energy consumption for each of the passed edges. We assume that the energy consumption distribution of each edge is fixed over time, and therefore, we exclude the subscript $t$ of the reward where it is not needed, such as for the expected reward $\E{r (\bm{a}_t)}$.
Since we want to minimize energy consumption, we regard it as a negative reward when we update the parameters (shown for example for the rectified Gaussian model in Algorithm \ref{alg:gaussian_update_parameters}). $T$ indicates the total number of time steps, sometimes called the horizon. 

To measure the effectiveness of our online learning algorithm, we consider its regret, which is the difference in the total expected reward between always playing the optimal action and playing actions according to the algorithm. Formally, the instant regret at time $t$ (or alternatively the \emph{gap} of the action selected at time $t$) is defined as $\Delta_t := \E{r (\bm{a}^*)} - \E{r (\bm{a}_t)}$ where $\bm{a}^* := \arg \max_{\bm{a}} \E{r (\bm{a})}$ is the action which maximizes the expected reward, and the cumulative regret is defined as $\text{Regret}(T) := \sum_{t=1}^T \Delta_t$. Since our framework uses a Bayesian approach, we also consider \emph{Bayesian regret}, which is the expected value of the regret over problem instances sampled from the prior distribution, so that $\text{BayesRegret}(T) := \mathbb{E} \left[\text{Regret}(T)\right]$.

\begin{algorithm}[tb]
\caption{Online learning for energy efficient navigation}
\label{alg:online_algorithm}
\begin{algorithmic}[1]
\Require $\bm{\mu}_0, \bm{\varsigma}_0$
\For{$t \leftarrow 1, \dots, T$}
    \label{row:optimization_objective}\State $\bm{w}_t \leftarrow $ \Call{GetEdgeWeights}{$t, \bm{\mu}_{t-1}, \bm{\varsigma}_{t-1}$}
    \label{row:optimization_solution}\State $\bm{a}_t \leftarrow $ \Call{SolveOptimizationToFindAction}{$\bm{w}_t$}
    \label{row:apply_action}\State $\bm{r}_t \leftarrow $ \Call{ApplyActionAndObserveReward}{$\bm{a}_t$}
    \label{row:update_parameters}\State $\bm{\mu}_{t}, \bm{\varsigma}_{t} \leftarrow $ \Call{UpdateParameters}{$\bm{a}_t, \bm{r}_t, \bm{\mu}_{t-1}, \bm{\varsigma}_{t-1}$}
\EndFor
\end{algorithmic}
\end{algorithm}

\begin{algorithm}[tb]
\caption{Gaussian parameter update of the energy model}
\label{alg:gaussian_update_parameters}
\begin{algorithmic}[1]
\Procedure{UpdateParameters}{$\bm{a}_t, \bm{r}_t, \bm{\mu}_{t-1}, \bm{\varsigma}_{t-1}$}
\For{each edge $e \in \bm{a}_t$}
\State $\varsigma_{e,t}^2 \leftarrow \left(\frac{1}{\varsigma_{e,t-1}^2} + \frac{1}{\sigma_{e}^2}\right)^{-1}$
\State $\mu_{e,t} \leftarrow \varsigma_{e,t}^2 \left( \frac{\mu_{e,t-1}}{\varsigma_{e,t-1}^2} + \frac{r_t(e)}{\sigma^2_e}\right)$
\EndFor
\State \Return $\bm{\mu}_{t}, \bm{\varsigma}_{t}$
\EndProcedure
\end{algorithmic}
\end{algorithm}

\subsection{Shortest Path Problem as Multi-Armed Bandit}\label{sec:cmab_formulation}

A combinatorial bandit \cite{cesa2012combinatorial,gai2012combinatorial} is a multi-armed bandit problem where an agent is only allowed to pull sets of arms instead of an individual arm. However, there may be restrictions on the feasible combinations of the arms. We consider the combinatorial semi-bandit case where the rewards are observed for each individual arm pulled by an agent during a round. 

A number of different combinatorial problems can cast to multi-armed bandits in this way, among them the online shortest path problem \cite{gai2012combinatorial,liu2012adaptive,zou2014online} is the focus of this work. An efficient algorithm for the deterministic problem (e.g., \cite{dijkstra1959note}) can be used as an oracle \cite{wang2018thompson} to provide feasible sets of arms to the agent, as well as to maximize the expected reward.

We connect this to the optimization problem in Algorithm \ref{alg:online_algorithm}, where we want to find an arm $\bm{a}_t$. At time $t$, let $\mathcal{G}(\mathcal{V},\mathcal{E},\bm{w}_t)$ be a directed graph with weight vector $\bm{w}_t$ and sets of vertices $\mathcal{V}$ and edges $\mathcal{E}$. Given a source vertex $u_1 \in \mathcal{V}$ and a target vertex $u_n \in \mathcal{V}$, let $\mathcal{P}$ be the set of all paths $\bm{p}$ in $\mathcal{G}$ such that $\bm{p} = \langle u_1, \dots, u_n \rangle$. Assuming non-negative edge costs $w_{e,t}$ for each edge $e \in \mathcal{E}$, the problem of finding the shortest path (arm $\bm{a}_t$) from $u_1$ to $u_n$ can be formulated as 
\begin{equation*}
    \bm{a}_t = \arg\,\min\limits_{\bm{p} \in \mathcal{P}}\,\sum_{e\in \bm{p}}w_{e,t} .
\end{equation*}

For the analysis, we introduce some formal definitions for this stochastic combinatorial semi-bandit problem. There is a set of \emph{base arms} $\mathcal{A}$, which corresponds to $\mathcal{E}$ in the considered graph. The set of arms selected at time $t$ is called the \emph{super-arm} $\bm{a}_t \subseteq \mathcal{A}$. The set of feasible super-arms $\mathcal{I}$ such that $\bm{a}_t \in \mathcal{I}$, is equal to the set of paths $\mathcal{P}$. We further define the expected reward of super-arm $\bm{a}$ with respect to a particular mean reward vector (for all base arms) $\bm{\theta}$ as $f_{\bm{\theta}} (\bm{a}) := \sum_{i \in \bm{a}} \theta_i$. Hence, according to the previously introduced definition of regret, we have that $\text{Regret}(T) = \sum_{t=1}^T \left(f_{\bm{\theta}^*} (\bm{a}^*) - f_{\bm{\theta}^*} (\bm{a}_t) \right)$.

\subsection{Thompson Sampling}\label{sec:alg:ts}

In our Bayesian setup, a greedy strategy chooses the arm which maximizes the expected reward according to the current estimate of the mean rewards. Since the greedy method does not actively explore the environment, there are other methods which perform better in terms of minimizing cumulative regret. One commonly used method is $\epsilon$-greedy, where a (uniformly) random arm is taken with probability $\epsilon$ and the greedy strategy is used otherwise. While, in principle, it could possible to select paths uniformly at random for the exploration time steps, the size of the set of all paths $\mathcal{P}$ (corresponding to the set of feasible super-arms $\mathcal{I})$ can be exponential with respect to the number edges in the graph. This might even include paths similar to random walks through the graph, which would almost certainly be very inefficient in terms of accumulated edge costs. Hence, this method is not well suited to the shortest path problem. A modification of $\epsilon$-greedy (based on Algorithm 1 in the supplementary material of \cite{chen2013combinatorial}), where only a single edge (and the shortest path through it) is sampled, is introduced in Algorithm \ref{alg:epsilon_greedy_cmab}. However, for large graphs this might still lead to unreasonable exploration paths (e.g., a path between New York City and Boston through a randomly selected detour around Los Angeles).

An alternative method for exploration is Thompson Sampling (TS). In contrast to the greedy method, with TS (like in $\epsilon$-greedy), arms are randomly sampled. However, where arms are sampled uniformly at random with $\epsilon$-greedy, the TS agent samples from the model, i.e., during each time step, it selects an arm which has a high probability of being optimal by sampling mean rewards from the posterior distribution and choosing an arm which maximizes them. In other words, the method utilizes the prior beliefs about the parameter values to guide exploration towards reasonable arms.

Thompson Sampling for the energy consumption shortest path problem is outlined in Algorithm \ref{alg:ts_optimization_objective}, where it can be used in Algorithm \ref{alg:online_algorithm} to obtain the edge weights in the network (only shown for the rectified Gaussian model). 

\subsubsection{Regret analysis}\label{sec:ts_regret_analysis}

In the following section, we provide an analysis on the cumulative regret of Thompson Sampling for the shortest path navigation problem. While better upper bounds on Bayesian regret for combinatorial TS is possible (e.g., using our proof for the batched combinatorial setting in Theorem \ref{thm:b_ts_cmab_regret} with batch size 1, we obtain a Bayesian regret upper bound of $\Tilde{\mathcal{O}} \left(\cardE \sqrt{T}\right)$), this result may give some insight on the relationship between reinforcement learning problems and combinatorial bandit problems.

\begin{proposition}
\label{prop:graph_regret_bound_ts}
The Bayesian regret of Algorithm \ref{alg:online_algorithm} is upper bounded by
\begin{align*}
    \text{BayesRegret}(T) \leq \Tilde{\mathcal{O}}\left(\cardV^2 \sqrt{\cardE \; T}\right) .
\end{align*}
\end{proposition}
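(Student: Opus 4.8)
The plan is to exploit the fact that the online shortest-path problem solved by Algorithm~\ref{alg:online_algorithm} under Thompson Sampling is an instance of \emph{posterior sampling for a finite-horizon Markov decision process} (MDP), and then bound its Bayesian regret with the UCB-to-TS conversion technique of \cite{russo2014learning}. First I would make the reduction explicit: take the state space to be the vertex set $\mathcal{V}$, the admissible actions at a state $u$ to be the outgoing edges $(u,\cdot)\in\mathcal{E}$, and the transition to be deterministic (taking edge $(u,v)$ moves the agent from $u$ to $v$ with probability one). The per-step reward of action $(u,v)$ is the negated edge weight $\theta^*_{(u,v)}$, which is the only stochastic and unknown component of the MDP. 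Since optimal paths under positive weights are simple, every episode visits at most $\cardV$ vertices; I would pad trajectories with a zero-reward self-loop at the target so that every episode has the fixed horizon $H=\cardV$. Under this identification, sampling a weight vector from the posterior and returning the resulting shortest path (Algorithm~\ref{alg:ts_optimization_objective}) is exactly posterior sampling: drawing an MDP from the posterior and playing its optimal policy.

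Next I would run the standard Bayesian-regret argument. For each base arm $i$ I would define a UCB index $U_{i,t}=\bar\theta_{i,t}+\rho_{i,t}$, with $\bar\theta_{i,t}$ the posterior mean and $\rho_{i,t}=\bigOTilde{\sigma_i/\sqrt{n_{i,t}}}$ a confidence radius (here $n_{i,t}$ is the number of observations of edge $i$ before round $t$), calibrated so that the true means lie in $[\bar\theta_{i,t}-\rho_{i,t},\,U_{i,t}]$ for all $i$ simultaneously with probability at least $1-1/T$; a sub-Gaussian or bounded-reward assumption on the edge energies is needed to set $\rho_{i,t}$. Because $f_{\bm\theta}$ is additive, this lifts to an optimistic value $U_t(\bm a)=\sum_{i\in\bm a}U_{i,t}$. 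The crucial step is the distributional identity of posterior sampling: conditioned on the history $\mathcal{H}_{t-1}$, the sampled action $\bm a_t$ and the optimal action $\bm a^*$ are identically distributed while $U_t$ is $\mathcal{H}_{t-1}$-measurable, so $\E{U_t(\bm a^*)}=\E{U_t(\bm a_t)}$. This gives the decomposition $\E{f_{\bm\theta^*}(\bm a^*)-f_{\bm\theta^*}(\bm a_t)}\le \E{U_t(\bm a_t)-f_{\bm\theta^*}(\bm a_t)}+(\text{failure term})$, where the failure term sums to $o(1)$ by the calibration.

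It then remains to control $\sum_{t=1}^T\E{U_t(\bm a_t)-f_{\bm\theta^*}(\bm a_t)}\le\sum_{t=1}^T\E{\sum_{i\in\bm a_t}2\rho_{i,t}}$. Here I would invoke the generic episodic-RL scaling: each of the $T$ episodes contributes up to $H=\cardV$ confidence radii, and a pigeonhole/potential argument over the per-arm visit counts converts this into a bound of order $\bigOTilde{HS\sqrt{A\,T}}$, where $S=\cardV$ is the number of states and $A$ the maximum out-degree. Bounding the out-degree crudely by $A\le\cardE$ and substituting $H=S=\cardV$ yields $\bigOTilde{\cardV^2\sqrt{\cardE\,T}}$, as claimed.

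I expect the main obstacle to be the width-summation step, and in particular resisting the temptation to tighten it: an honest additive combinatorial bound would pull the horizon \emph{inside} the square root via Cauchy--Schwarz on the total edge-pulls $\sum_t\abs{\bm a_t}\le\cardV\,T$, giving the much smaller $\bigOTilde{\sqrt{\cardV\,\cardE\,T}}$ (essentially the batch-size-one case of Theorem~\ref{thm:b_ts_cmab_regret}). The $\cardV^2$ factor is thus an artifact of routing the analysis through the generic RL lens together with the loose $A\le\cardE$ substitution; its value is precisely the conceptual bridge it draws between the shortest-path bandit and reinforcement learning. A secondary technical care is calibrating $\rho_{i,t}$ under the rectified-Gaussian reward model so that the confidence sets remain valid despite rectification and the unbounded-support priors.
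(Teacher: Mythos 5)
Your proposal is correct, and it opens with exactly the paper's central device: reduce the online shortest-path bandit to an episodic RL problem with states $\mathcal{V}$, actions given by outgoing edges, deterministic transitions, and rewards equal to the negated edge weights, and then observe that Thompson Sampling on the semi-bandit coincides with posterior sampling (PSRL) on this MDP. Where you diverge is in how the quantitative bound is then extracted. The paper stops at the reduction and invokes the PSRL Bayesian regret bound of \cite{osband2017posterior} as a black box, $\Tilde{\mathcal{O}}\left(\tau\sqrt{\cardV\,\cardE\,\tau T}\right)$ with episode length $\tau \leq \cardV$, which immediately yields $\Tilde{\mathcal{O}}\left(\cardV^2\sqrt{\cardE\,T}\right)$. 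You instead re-derive the bound from first principles via the UCB-to-TS decomposition of \cite{russo2014learning} with per-edge confidence radii --- which is precisely the machinery the paper deploys later for the batched setting in Theorem~\ref{thm:b_ts_cmab_regret}. Your route is more self-contained, but, as you yourself observe, carried out honestly it gives the strictly stronger bound $\Tilde{\mathcal{O}}\left(\sqrt{\cardV\,\cardE\,T}\right)$ (compare the $\Tilde{\mathcal{O}}\left(\cardE\sqrt{T}\right)$ of Corollary~\ref{cor:graph_regret_bound_ts2}); the stated $\cardV^2\sqrt{\cardE\,T}$ only emerges if you deliberately route the width summation through the generic $\Tilde{\mathcal{O}}(HS\sqrt{AT})$ episodic-RL scaling together with the crude substitution of $\cardE$ for the per-state action count. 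A stronger bound of course implies the proposition, so nothing fails --- but note that the ``generic episodic-RL scaling'' step is the one place where you gesture at a known result rather than derive it, and the two technical debts you flag (calibrating the radii under rectified-Gaussian feedback with unbounded priors, and controlling the expected overshoot on the low-probability failure event) are genuine: the paper discharges exactly these in Lemmas~\ref{lem:b_ts_cmab_bad_expectation_bound} and~\ref{lem:b_ts_cmab_bad_probability_bound} and in Proposition~\ref{prop:b_ts_cmab_regret_rect}. In short, the paper's citation-based proof buys the conceptual point (the shortest-path bandit is a degenerate RL problem) at essentially zero proof cost and explains why the bound carries the RL-flavoured $\cardV^2$ factor, while your version buys self-containedness at the price of either proving something stronger than claimed or artificially loosening at the final step.
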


We arrive at this result by relating the problem to recent results in reinforcement learning literature \cite{osband2017posterior}. We view the online shortest path problem as an episodic reinforcement learning problem on an unknown finite time horizon Markov decision process (MDP). Here, each vertex $u \in \mathcal{V}$ corresponds to a state, each edge $e \in \mathcal{E}$ corresponds to an action, and the reward distributions for each action are the same as in the bandit problem. Like in the bandit problem formulation, the rewards of different states are assumed to be independent. Furthermore, given a state and an (allowed) action, transitions are deterministic, such that the next state is the end vertex of the edge corresponding to the action. Each episode starts in the source vertex state and ends when the target vertex state is reached. In other words, each episode corresponds a time step (and path selection) in the bandit problem formulation.

Applying posterior sampling for reinforcement learning (PSRL) like in \cite{OsbandRR13}, to this problem, using identical priors over reward distribution parameters as in the bandit problem, is equivalent to using TS on the combinatorial semi-bandit problem. At the start of each episode, PSRL samples an MDP from the current prior / posterior distribution over MDPs (here, a distribution over reward distributions, since the transitions are deterministic and known).

The policy used during this episode by PSRL is then the optimal policy with respect to the sampled MDP. In this problem, since the rewards are the negative edge weights of the graph, the shortest path between the source and target vertices will be selected.

Since the posterior parameters involved in PSRL are updated in the same way as in the bandit problem, identical observations and samples will lead to identical posterior updates. Hence, they are equivalent, and a regret bound for one will apply to the other. From \cite{osband2017posterior}, with $\tau$ being the episode length and $T$ from the bandit problem corresponding to the number of episodes in the RL problem, we have
\begin{align*}
    \text{BayesRegret}(T) &\leq \Tilde{\mathcal{O}} \left( \tau \sqrt{\cardV \; \cardE \; \tau T} \right) \\
    &\leq \Tilde{\mathcal{O}}\left(\cardV^2 \sqrt{\cardE \; T}\right).
\end{align*}

We also note that Conjecture 1 of \cite{osband2017posterior} would improve this result so that
\begin{align*}
    \text{BayesRegret}(T) &\leq \Tilde{\mathcal{O}}\left(\cardV \sqrt{\cardV \; \cardE \; T}\right).
\end{align*}

\begin{algorithm}[tb]
\caption{\label{alg:ts_optimization_objective}Thompson Sampling}
\begin{algorithmic}[1]
\Procedure{GetEdgeWeights}{$t, \bm{\mu}_{t-1}, \bm{\varsigma}_{t-1}$}
\For{each edge $e \in \mathcal{E}$}
\State $\tilde{\theta}_e \leftarrow$ Sample from posterior
$\mathcal{N}(\mu_{e,t-1}, \varsigma^2_{e,t-1})$
\State $w_{e,t} \leftarrow \mathbb{E}[z_e]$ where $z_e \sim \mathcal{N}^R(-\tilde{\theta}_e, \sigma^2_e)$ 
\EndFor
\State \Return $\bm{w}_t$
\EndProcedure
\end{algorithmic}
\end{algorithm}

We note that the combinatorial semi-bandit problem formulation of Section \ref{sec:cmab_formulation} can be seen as a simpler special case of the reinforcement learning problem with less complexities to learn (e.g., less parameters to estimate, no state transitions modeling, etc.). In particular, whereas the traffic environment is affected by the paths that we choose, any state changes caused by an agent do not typically affect it immediately, since an edge is likely not traversed more than once during a single episode (path). If we want to adapt to the observed immediate rewards of different base arms while driving on a selected path, this could be modeled as a reinforcement learning problem, e.g., like the (\#P-hard) \textit{stochastic shortest path problem with recourse} \cite{polychronopoulos1996recourse}, which may (in principle) then be addressed by PSRL. In general, however, choosing the less complex (though still meaningful) bandit problem formulation enables us to use powerful methods with proven strong performance guarantees.

\subsection{Bayesian Upper Confidence Bound}
Another class of algorithms demonstrated to work well in the context of multi-armed bandits is the collection of the methods developed around the Upper Confidence Bound (UCB). Informally, these methods are designed based on the principle of optimism in the face of uncertainty. The algorithms achieve efficient exploration by choosing the arm with the highest empirical mean reward added to an exploration term (the confidence width). Hence, the arms chosen are those with a plausible possibility of being optimal.

In \cite{chen2013combinatorial} a combinatorial version of UCB (CUCB) is shown to achieve sub-linear regret for combinatorial semi-bandits. However, using a Bayesian approach is beneficial in this problem since it allows us to employ the  theoretical knowledge on the energy consumption in a prior. Hence, we consider BayesUCB \cite{kaufmann2012bayesian} and adapt it to the combinatorial semi-bandit setting. Similar to \cite{kaufmann2012bayesian}, we denote the quantile function for a distribution $\lambda$ as $Q(\beta,\lambda)$, defined such that for a random variable distributed according to $\lambda$ (s.t. $X \sim \lambda$), we have $\text{Pr}(X \leq Q(\beta, \lambda)) = \beta $. The idea of that work is to use upper quantiles of the posterior distributions of the expected arm rewards to select arms. If $\lambda$ denotes the posterior distribution of a base arm and $t$ is the current time step, the Bayesian Upper Confidence Bound (BayesUCB) for that base arm is $Q(1 - 1/t, \lambda)$.

This method is outlined in Algorithm \ref{alg:ucb_algorithm} for the rectified Gaussian model. Here, since the goal is to minimize the energy consumption which can be considered as the negative of the reward, we use the \emph{lower} quantile $Q(1/t, \lambda)$.

\begin{algorithm}[tb]
\caption{\label{alg:ucb_algorithm}BayesUCB}
\begin{algorithmic}[1]
\Procedure{GetEdgeWeights}{$t, \bm{\mu}_{t-1}, \bm{\varsigma}_{t-1}$}
\For{each edge $e \in \mathcal{E}$}
\State $-\tilde{\theta}_e \leftarrow Q\left(\frac{1}{t}, \mathcal{N}(-\mu_{e,{t-1}},\varsigma_{e,{t-1}}^2)\right)$
\State $w_{e,t} \leftarrow \mathbb{E}[z_e]$ where $z_e \sim \mathcal{N}^R(-\tilde{\theta}_e, \sigma^2_e)$ 
\EndFor
\State \Return $\bm{w}_t$
\EndProcedure
\end{algorithmic}
\end{algorithm}

\section{Multi-Agent Learning and Exploration}
\label{sec:multi_agent}

The online learning may speed up via having multiple agents exploring simultaneously and sharing information on the observed rewards with each other. In our particular application, this corresponds to a fleet of vehicles of similar type sharing information about energy consumption across the fleet. Such a setting can be very important for road planning, electric vehicle industries, vehicle fleet operators and city principals. 

The communication between the agents for the sake of sharing the the observed rewards can be synchronous or asynchronous. In this paper, we consider the synchronous setting, where the vehicles drive concurrently in each time step and share their accumulated knowledge with the fleet before the next iteration starts. At each time step, each individual vehicle independently selects a path to explore/exploit according to the online learning strategies provided in Section \ref{sec:online-single}. Here, we assume that all vehicles start their paths with the same source vertex and end them at the same target vertex, though even without this assumption, vehicles would benefit from information sharing as long as there is some overlap between selected paths. The vehicles share information synchronously, when all agents have finished their trips for a certain time step. During each time step, the agents are allowed to select paths which are overlapping (with shared edges), but we do not model any physical interactions between vehicles (e.g., how increased traffic intensity on those road segments affects energy consumption). However, this could be an interesting topic for future work.

Below, we provide two different regret bounds for TS-based multi-agent learning under the synchronous setting. Both are based on the idea of viewing the synchronous multi-agent problem as a single-agent problem with delayed feedback received in batches. Specifically, the \textit{delay} corresponds to the number of vehicles in the fleet, since we wait for all of them to finish traversing their selected paths until we update the posterior distributions and start the next time step.

\subsection{Thompson Sampling with Queued Delayed Feedback}

The first approach is based on the method of \cite{joulani2013online}, which converts any algorithm for non-delayed stochastic bandit problems to an algorithm which handles delayed feedback, with a term constant in $T$ added to the regret. This method and other similar queue-based methods have previously been used to adapt (and analyze) existing bandit algorithms for various problem settings with delayed feedback (see e.g., \cite{mandel2015queuemethod, huang2021delayedcellular}). The approach of \cite{joulani2013online} is to wrap the original algorithm in an outer algorithm, which they call Queued Partial Monitoring with Delays (QPM-D). In essence, the inner algorithm functions as in the non-delayed case, unless the feedback of a selected arm is delayed and not available yet. In that case, the outer algorithm takes over and repeatedly plays the selected arm until feedback is received. Since the arm is played multiple times, excess delayed feedback, not immediately used by the inner algorithm, is also received. The outer algorithm stores the excess feedback in a queue data structure (where the order in which elements are inserted is also the order in which they are later retrieved, i.e., \textit{First In, First Out}, or FIFO). This allows the inner algorithm to retrieve feedback from the queue the next time the arm is selected, instead of having to wait for delayed feedback. We outline QPM-D adapted to our problem in Algorithm \ref{alg:qpm_d}.

\begin{center}
\begin{algorithm}[H]
\caption{QPM-D for Algorithm \ref{alg:online_algorithm}}
\label{alg:qpm_d}
\begin{algorithmic}[1]
\State Create an empty $\text{Queue}[\bm{a}]$ for each $\bm{a} \in \mathcal{I}$.
\State Let $\bm{b} \in \mathcal{I}$ be the first super-arm selected by Algorithm \ref{alg:online_algorithm}.
\For{$t \leftarrow 1, \dots, T$}
    \State \textbf{Predict:}
    \While{$\text{Queue}[\bm{b}]$ is non-empty}
        \State Update Algorithm \ref{alg:online_algorithm} with one reward from $\text{Queue}[\bm{b}]$.
        \State Let $\bm{b}$ be the next super-arm selected by Algorithm \ref{alg:online_algorithm}.
    \EndWhile
    \State There are no queued rewards for $\bm{b}$, so perform arm $\bm{a}_t \leftarrow \bm{b}$ at time $t$ to receive rewards (possibly delayed) by the environment.
    \State \textbf{Update:}
    \State Let $\mathcal{D}_t$ be the set of (delayed) rewards received at time $t$ and each $(s, r_s(\bm{a}_s)) \in \mathcal{D}_t$ be the timestamped reward $r_s(\bm{a}_s)$ resulting from the arm $\bm{a}_s$ at time $s$.
    \For{$(s, r_s(\bm{a}_s)) \in \mathcal{D}_t$}
        \State Add the reward $r_s(\bm{a}_s)$ to $\text{Queue}[\bm{a}_s]$. 
    \EndFor
\EndFor
\end{algorithmic}
\end{algorithm}
\end{center}

\begin{theorem}\label{thm:ma_regret_bound_ts} Let $K$ be the number of agents, $T$ be the horizon and $\text{Regret}_k (T)$ be the regret of each agent $k \in [K]$. In the synchronous multi-agent online shortest path setting (i.e., a fleet of $K$ agents / vehicles working in parallel in each time step), the total fleet regret incurred by invoking Algorithm \ref{alg:qpm_d} satisfies
$\sum_{k=1}^K \text{Regret}_k (T) \leq \mathcal{O}\left(\cardP K + \text{Regret} (TK)\right)$.
\end{theorem}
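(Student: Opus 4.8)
The plan is to reduce the synchronous multi-agent problem to a single-agent combinatorial semi-bandit with delayed feedback and then invoke the black-box conversion guarantee of \cite{joulani2013online} that underlies Algorithm \ref{alg:qpm_d}. Running $K$ agents in parallel over the horizon is exactly a single-agent instance that issues $TK$ arm pulls in total, in which the posterior is frozen throughout each group of $K$ simultaneous pulls and refreshed only once all $K$ rewards of that group have arrived. Sequentializing the pulls of a group, the reward of the pull made by agent $k$ becomes usable only after the remaining $K-k$ pulls of the group complete, so every reward is delayed by at most $K-1 = \mathcal{O}(K)$ steps. Because the fleet regret $\sum_{k=1}^K \text{Regret}_k(T)$ is exactly the sum of the instantaneous gaps $f_{\bm{\theta}^*}(\bm{a}^*) - f_{\bm{\theta}^*}(\bm{a})$ over all $TK$ pulls, it equals the regret of the QPM-D execution driven by Algorithm \ref{alg:online_algorithm} as its inner routine over a horizon of $TK$ pulls under this delay pattern.

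The second step is to apply the QPM-D guarantee. For an inner (non-delayed) routine with regret $R^{\mathrm{BASE}}(n)$ over $n$ pulls, \cite{joulani2013online} show that the wrapped algorithm satisfies $R^{\mathrm{QPM-D}}(n) \leq R^{\mathrm{BASE}}(n) + \tau_{\max}\sum_{\bm{a}} \Delta_{\bm{a}}$, where $\tau_{\max}$ bounds the feedback delay, $\Delta_{\bm{a}} := f_{\bm{\theta}^*}(\bm{a}^*) - f_{\bm{\theta}^*}(\bm{a})$, and the sum runs over arms with positive gap. The additive term arises because, whenever the queue of the currently selected arm is empty, the outer loop may replay that arm a number of times bounded by its outstanding (delayed) observations, which never exceeds $\tau_{\max}$. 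Crucially, Algorithm \ref{alg:qpm_d} keeps one queue per feasible super-arm $\bm{a} \in \mathcal{I}$, so the reduction operates at the level of paths and the sum contains $|\mathcal{I}| = \cardP$ terms rather than $\cardE$ base-arm terms.

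It then remains to instantiate with $n = TK$ and $\tau_{\max} = \mathcal{O}(K)$. The inner regret is $R^{\mathrm{BASE}}(TK) = \text{Regret}(TK)$, the single-agent regret at the inflated horizon $TK$. Since edge energy consumption is bounded, every gap obeys $\Delta_{\bm{a}} = \mathcal{O}(1)$, so $\sum_{\bm{a}} \Delta_{\bm{a}} \leq \cardP\,\Delta_{\max} = \mathcal{O}(\cardP)$ and the additive penalty is $\tau_{\max}\sum_{\bm{a}}\Delta_{\bm{a}} = \mathcal{O}(\cardP K)$. Combining yields $\sum_{k=1}^K \text{Regret}_k(T) \leq \mathcal{O}(\cardP K) + \text{Regret}(TK)$, as claimed.

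I expect the main obstacle to be the reduction itself rather than the final arithmetic: one must argue that the synchronous fleet execution is simulated step for step by the QPM-D wrapper of Algorithm \ref{alg:online_algorithm}, so that the two processes induce identical posterior updates and arm selections and hence identical cumulative gaps—an equivalence in the same spirit as the PSRL-to-TS argument used for Proposition \ref{prop:graph_regret_bound_ts}. In particular I must verify that the within-group frozen-posterior behavior of the fleet matches the queue-driven replays of QPM-D, that the induced delays are genuinely bounded by $K$, and that the guarantee of \cite{joulani2013online}—stated for classical stochastic bandits—transfers to the combinatorial semi-bandit case with ``arm'' read as ``path'' and per-arm queue count $\cardP$. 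Bounding $\Delta_{\max}$ by a constant is the only remaining technical point and follows directly from the boundedness of edge energy consumption.
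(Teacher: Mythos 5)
Your proposal is correct and follows essentially the same route as the paper: treat the paths in $\mathcal{P}$ as the arms of a standard stochastic bandit, view the synchronous fleet as a single-agent run over $TK$ pulls with feedback delay at most $K$, and invoke Theorem~6 of \cite{joulani2013online} for QPM-D, which yields the single-agent regret at horizon $TK$ plus an additive per-arm penalty of order $\max_t \kappa_t = \mathcal{O}(K)$ summed over the $\cardP$ arms. The extra detail you supply (the explicit delay accounting within a group and the $\Delta_{\max}=\mathcal{O}(1)$ bound absorbed into the constant) is consistent with what the paper leaves implicit in its $\mathcal{O}$ notation.
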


\begin{proof}[Proof]
The result is obtained as a corollary of Theorem 6 in \cite{joulani2013online} which converts online algorithms for the non-delayed case to ones that can handle delays in the feedback (i.e., Algorithm \ref{alg:qpm_d}), while retaining their theoretical guarantees. We consider the online shortest path problem as a standard stochastic bandit problem where the paths are the arms, and handle the multi-agent setting using Algorithm \ref{alg:qpm_d}, like a sequential setting with delayed feedback. Let $\kappa_t$ denote the feedback delay of the action at time $t$. Then according to \cite{joulani2013online} we have
\begin{align*}
    \sum_{k=1}^K \text{Regret}_k (T) &\leq \text{Regret} (TK) + \sum_{\bm{p} \in \mathcal{P}} \mathcal{O}\left(\max_t \kappa_t\right) \\
    &\leq \text{Regret} (TK) + \sum_{\bm{p} \in \mathcal{P}} \mathcal{O}\left(K\right) \\
    &\leq \mathcal{O}\left(\cardP K + \text{Regret} (TK)\right) .
\end{align*}
\end{proof}

While the additional first term of the regret is constant in $T$, it is also linear in $\cardP$, which may be exponential w.r.t. $\cardE$. 

\subsection{Thompson Sampling with Batched Feedback}

In order to remove the exponential factor in Theorem \ref{thm:ma_regret_bound_ts}, we outline a second approach. While the synchronous multi-agent setting \emph{can} be cast as a delayed feedback problem, the general delay model is not actually necessary. Since the updates are synchronous, viewing it as a \emph{batched} problem setting is sufficient. In this setting, rewards for selected arms are received periodically at fixed intervals, i.e., like \emph{tumbling windows}. We note that this problem formulation can be useful beyond the multi-agent setting, e.g., in environments where feedback may be delayed due to wireless connection problems.

The regret analysis is not as straightforward as the one for Theorem \ref{thm:ma_regret_bound_ts}. We combine ideas on batched bandit algorithms and analyses from \cite{gao2019batched}, \cite{han2020sequential} and \cite{ren2020batched} with the general proof framework for deriving Bayesian regret bounds introduced by \cite{russo2014learning}. Before considering the multi-agent case, we start by outlining Thompson Sampling for the batched combinatorial semi-bandit setting in Algorithm \ref{alg:b_ts_cmab}. Here, we first consider a general stochastic combinatorial semi-bandit problem (i.e., not limited to the online shortest path problem) where rewards for each base arm $i \in \mathcal{A}$ are drawn from $\mathcal{N}\left(\theta^*_i, \sigma_i^2\right)$, with $\theta_i^* \sim \mathcal{N}\left(\mu_{i, 0}, \varsigma^2_{i,0}\right)$ and finite (and known) variance $\sigma_i^2$. Also, we let $B$ be the total number of batches, each of size $K$, such that $T = BK$. Furthermore, we denote the last time step in each batch $b \in [B]$ as $t_b$, i.e., $t_b = bK$. We also define the \textit{history} $H_t$ as the sequence of actions and rewards until time step $t$, such that $H_t = \left(\bm{a}_1, r_1 (\bm{a}_1), \dots, \bm{a}_{t - 1}, r_{t - 1}(\bm{a}_{t - 1})\right)$. Since the the actions and rewards are random variables, $H_t$ is a random variable as well. We denote a realization of $H_t$ as $H$, i.e., a \textit{fixed} history of actions and rewards. 

\begin{center}
\begin{algorithm}[H]
\caption{Batched Thompson Sampling for Combinatorial Semi-Bandits}
\label{alg:b_ts_cmab}
\begin{algorithmic}[1]
\Require Time horizon $T$, number of batches $B$, prior parameters $\bm{\mu}_0, \bm{\varsigma}_0$.
\For{$b \leftarrow 1, \dots, B$}
    \For{$t \leftarrow t_{b-1}+1, \dots, t_b$}
        \For{$i \in \mathcal{A}$}
            \State $\Tilde{\theta}_i \leftarrow$ Sample from posterior $\mathcal{N}\left(\mu_{i, t_{b-1}}, \varsigma^2_{i,t_{b-1}}\right)$
        \EndFor
        \State $\bm{a}_t \leftarrow \arg \max_{\bm{a} \in \mathcal{I}} f_{\Tilde{\bm{\theta}}} (\bm{a})$
        \State Play super-arm $\bm{a}_t$
    \EndFor
    \State Observe batched rewards $r_{t_{b-1}+1}, \dots r_{t_b}$. Append corresponding arms and rewards to the history of selected super-arms and received rewards, such that $H_{t_b + 1} = \left(\bm{a}_1, r_1 (\bm{a}_1), \dots, \bm{a}_{t_b}, r_{t_b}(\bm{a}_{t_b})\right)$.
    \State Compute posterior parameters $\bm{\mu}_{t_{b}}, \bm{\varsigma}_{t_{b}}$ given the history $H_{t_b + 1}$.
    
\EndFor
\end{algorithmic}
\end{algorithm}
\end{center}

In this problem setting and algorithm, the rewards for all arms performed during a batch are received at the end of that batch. Hence, in each time step, parameters are sampled from the posterior distribution given the rewards observed at the end of the \emph{previous batch}. 

\subsubsection{Regret analysis}

We analyze the regret of this algorithm in the proof of Theorem \ref{thm:b_ts_cmab_regret}, where 

\begin{theorem}
\label{thm:b_ts_cmab_regret}
For Algorithm \ref{alg:b_ts_cmab}, with horizon $T$ and batch size $K$, we have $\text{BayesRegret}(T) = \Tilde{\mathcal{O}}(\cardA \; K + \cardA \sqrt{T})$.
\end{theorem}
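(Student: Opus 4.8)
\section*{Proof proposal}

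The plan is to follow the upper-confidence-bound decomposition of Bayesian regret due to \cite{russo2014learning}, adapted so that the confidence bounds are \emph{frozen} at each batch boundary. For a time step $t$ lying in batch $b$ (so $t_{b-1} < t \le t_b$), I would define the base-arm upper confidence bound $u_{i,t} := \mu_{i,t_{b-1}} + c\,\varsigma_{i,t_{b-1}}$ with $c = \bigTheta{\sqrt{\log(\cardA\,T)}}$, and set $U_t(\bm{a}) := \sum_{i\in\bm{a}} u_{i,t}$. The crucial observation is that $U_t$ depends on the data only through the posterior parameters at $t_{b-1}$, hence $U_t$ is measurable with respect to the history $H_{t_{b-1}+1}$. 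Since Algorithm \ref{alg:b_ts_cmab} selects $\bm{a}_t = \arg\max_{\bm{a}\in\mathcal{I}} f_{\Tilde{\bm{\theta}}}(\bm{a})$ with $\Tilde{\bm{\theta}}$ an independent draw from exactly the posterior that defines $U_t$, the posterior-matching identity $\E{U_t(\bm{a}_t)\mid H_{t_{b-1}+1}} = \E{U_t(\bm{a}^*)\mid H_{t_{b-1}+1}}$ holds. Writing $f_{\bm{\theta}^*}(\bm{a}^*) - f_{\bm{\theta}^*}(\bm{a}_t) = \big(f_{\bm{\theta}^*}(\bm{a}^*) - U_t(\bm{a}^*)\big) + \big(U_t(\bm{a}^*) - U_t(\bm{a}_t)\big) + \big(U_t(\bm{a}_t) - f_{\bm{\theta}^*}(\bm{a}_t)\big)$ and summing over $t$, the middle term vanishes in expectation, leaving an \emph{optimism} term and a \emph{width} term.

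First I would dispose of the optimism term $\sum_t\E{f_{\bm{\theta}^*}(\bm{a}^*) - U_t(\bm{a}^*)}$. On the good event that $\theta_i^* \le u_{i,t}$ and $\theta_i^* \ge \mu_{i,t_{b-1}} - c\,\varsigma_{i,t_{b-1}}$ simultaneously for all base arms and all batch boundaries, this term is nonpositive; a Gaussian tail bound together with a union bound over the at most $\cardA\,B$ relevant (base arm, batch) pairs (the confidence window changes only at batch boundaries) shows the complementary event has probability polynomially small in $T$, and the sub-Gaussian structure of the prior controls the regret contribution there, so the optimism term is $\bigOTilde{1}$. On the same good event the width term satisfies $U_t(\bm{a}_t) - f_{\bm{\theta}^*}(\bm{a}_t) = \sum_{i\in\bm{a}_t}(u_{i,t}-\theta_i^*) \le \sum_{i\in\bm{a}_t} 2c\,\varsigma_{i,t_{b-1}}$. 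The entire proof thus reduces to bounding the frozen-width sum $\sum_{t=1}^T \sum_{i\in\bm{a}_t}\varsigma_{i,t_{b-1}}$.

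The heart of the argument, and the step I expect to be hardest, is controlling this frozen-width sum, since within a batch the posterior does not shrink however often a base arm is replayed. Let $N_{i,b}$ be the number of pulls of base arm $i$ through the end of batch $b$ and $n_{i,b} := N_{i,b}-N_{i,b-1}\le K$ its pulls inside batch $b$; the conjugate update (Algorithm \ref{alg:gaussian_update_parameters}) gives $\varsigma_{i,t_{b-1}}^2 = (\varsigma_{i,0}^{-2}+N_{i,b-1}\,\sigma_i^{-2})^{-1}$, so $\varsigma_{i,t_{b-1}}\le\varsigma_{i,0}$ when $N_{i,b-1}=0$ and $\varsigma_{i,t_{b-1}}\le\sigma_i/\sqrt{N_{i,b-1}}$ otherwise. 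Adapting the batched decomposition of \cite{gao2019batched, han2020sequential}, for each arm I would first peel off the batch in which it is initially pulled, contributing at most $K\varsigma_{i,0}$ and hence $\bigO{\cardA K}$ in total. For the remaining batches (where $N_{i,b-1}\ge1$) I would split on whether the pull count at least doubles: in ``slow'' batches with $n_{i,b}\le N_{i,b-1}$ the elementary inequality $n_{i,b}/\sqrt{N_{i,b-1}}\le 2\sqrt{2}\,(\sqrt{N_{i,b}}-\sqrt{N_{i,b-1}})$ lets the contributions telescope to $\bigO{\sigma_i\sqrt{N_{i,B}}}$, and since $N_{i,B}\le T$ this sums to $\bigO{\sqrt{T}\sum_i\sigma_i}=\bigO{\cardA\sqrt{T}}$; in ``fast'' batches with $n_{i,b}>N_{i,b-1}$ the count more than doubles, which can happen at most $\log_2 T$ times per arm, and each contributes at most $n_{i,b}/\sqrt{N_{i,b-1}}\le K$, giving $\bigO{\cardA K\log T}$. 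Altogether the frozen-width sum is $\bigOTilde{\cardA K + \cardA\sqrt{T}}$.

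Combining the three pieces, namely the vanishing matching term, the $\bigOTilde{1}$ optimism term, and the width term bounded by $2c$ times $\bigOTilde{\cardA K + \cardA\sqrt{T}}$ with $c=\bigOTilde{1}$, yields $\text{BayesRegret}(T) = \bigOTilde{\cardA K + \cardA\sqrt{T}}$, as claimed. The genuinely new element relative to \cite{russo2014learning} is freezing $U_t$ at the batch boundary so that the posterior-matching identity survives the batched sampling, while the new element relative to the sequential combinatorial analysis is the separation of the frozen-width sum into telescoping ``slow'' batches, which reproduce the sequential $\cardA\sqrt{T}$ rate, and a logarithmic number of ``fast'' batches per arm, which produce the additive batch penalty $\cardA K$.
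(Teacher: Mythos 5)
Your proposal is correct and shares the paper's central idea: adapt the confidence-bound decomposition of Russo and Van Roy by freezing the confidence bounds at the batch boundary $t_{b-1}$, so that they are measurable with respect to $H_{t_{b-1}+1}$ and the posterior-matching identity $\E{U_t(\bm{a}_t)\mid H_{t_{b-1}+1}} = \E{U_t(\bm{a}^*)\mid H_{t_{b-1}+1}}$ survives the batched sampling; the optimism/pessimism terms are then killed by concentration exactly as in Lemmas \ref{lem:b_ts_cmab_bad_event_terms}--\ref{lem:b_ts_cmab_bad_probability_bound}. Where you genuinely diverge is in the treatment of the width term, which is indeed the heart of the matter. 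The paper (Lemma \ref{lem:b_ts_cmab_good_event_term}) writes the frozen width as the unfrozen width plus the excess $\sum_{i\in\bm{a}_t}\bigl(\sqrt{8\log T/N_{t_{b-1}}(i)}-\sqrt{8\log T/N_{t-1}(i)}\bigr)$, enlarges the inner sum to all of $\mathcal{A}$ so the excess becomes independent of $t$ within a batch, and telescopes it across batches to get a clean $2K\cardA\sqrt{8\log T}$, with the unfrozen part handled by the standard pigeonhole-plus-Cauchy--Schwarz argument; you instead run a per-arm doubling/peeling argument on the frozen posterior widths $\varsigma_{i,t_{b-1}}\le\sigma_i/\sqrt{N_{i,b-1}}$, splitting batches into ``slow'' ones whose contributions telescope to $\bigO{\sigma_i\sqrt{N_{i,B}}}$ and at most $\log_2 T$ ``fast'' doubling batches each costing at most $K$. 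Both routes are sound and give the same $\bigOTilde{\cardA K+\cardA\sqrt{T}}$ rate; the paper's telescoping is slightly slicker (it avoids the case split and the extra $\log T$ on the batch-penalty term), while your peeling argument is more local to each arm and transfers more directly to settings in the style of \cite{gao2019batched,han2020sequential} where batch boundaries are data-dependent. Two minor bookkeeping differences, neither a gap: you use posterior-parameter confidence intervals $\mu_{i,t_{b-1}}\pm c\,\varsigma_{i,t_{b-1}}$ where the paper uses empirical means with widths $\sqrt{8\log T/N_{t_{b-1}}(i)}$, and you absorb the first pull of each arm into a $K\varsigma_{i,0}$ term where the paper instead invokes Assumption \ref{asm:initial_plays}.
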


In order to prove Theorem \ref{thm:b_ts_cmab_regret}, we need a few intermediary lemmas and assumptions. For base arm $i$, let $\hat{\theta}_{i,t}$ be the average reward of $i$ until time step $t$, and $N_t (i)$ be the number plays of $i$ until time step $t$.

\begin{assumption}\label{asm:finite_variance}
For each base arm $i \in \mathcal{A}$, the variance $\sigma_i^2$ is finite, and $\sigma_i^2 \leq 1$.
\end{assumption}

Since we assume that the variance $\sigma_i^2$ of each base arm $i \in \mathcal{A}$ is finite, we let, for convenience of notation, $\sigma_i^2 \leq 1$ for all $i \in \mathcal{A}$ (which can be achieved by scaling the feedback distributions of all base arms). 

\begin{assumption}\label{asm:low_dimensional}
Given the horizon $T$ and the number of base arms $\vert \mathcal{A} \vert$, we have $T \geq \cardA$.
\end{assumption}

\begin{assumption}\label{asm:initial_plays}
Each base arm $i \in \cardA$ has been played once initially, such that $N_0 (i) = 1$.
\end{assumption}

Assumptions \ref{asm:low_dimensional} and \ref{asm:initial_plays} are mainly for convenience, to reduce the complexity of the proofs, whereas the finite variance assumption is needed for the concentration inequality we utilize in the proof of Lemma \ref{lem:b_ts_cmab_bad_probability_bound}. We begin the analysis by defining upper and lower confidence bounds (for a super-arm $\bm{a}$ and history $H_t$, as defined in Algorithm \ref{alg:b_ts_cmab}):
\begin{align*}
    &U(\bm{a}, H_t) := f_{\hat{\bm{\theta}}_{t-1}} (\bm{a}) + \sum_{i \in \bm{a}} \sqrt{\frac{8 \log T}{N_{t-1} (i)}} \\ 
    &L(\bm{a}, H_t) := f_{\hat{\bm{\theta}}_{t-1}} (\bm{a}) - \sum_{i \in \bm{a}} \sqrt{\frac{8 \log T}{N_{t-1} (i)}} .
\end{align*}

Using these definitions, we can decompose the regret in a way similar to \cite{russo2014learning} as follows:

\begin{lemma}
\label{lem:b_ts_cmab_regret_decomposition}
Algorithm \ref{alg:b_ts_cmab} has %
\begin{align*}
    \text{BayesRegret}(T) = &\sum_{b=1}^B \sum_{t=t_{b-1} +1}^{t_b} \E{U(\bm{a}_t, H_{t_{b-1} +1}) - L(\bm{a}_t, H_{t_{b-1} +1})} + \\ 
    &\sum_{b=1}^B \sum_{t=t_{b-1} +1}^{t_b} \E{L(\bm{a}_t, H_{t_{b-1} +1}) - f_{\bm{\theta}^*} (\bm{a}_t)} + \\  
    &\sum_{b=1}^B \sum_{t=t_{b-1} +1}^{t_b} \E{f_{\bm{\theta}^*} (\bm{a}^*) - U(\bm{a}^*, H_{t_{b-1} +1})} .
\end{align*}
\end{lemma}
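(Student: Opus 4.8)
The plan is to follow the posterior-sampling regret decomposition of \cite{russo2014learning}, adapted so that the upper and lower confidence bounds are evaluated at the history $H_{t_{b-1}+1}$ that is frozen at the start of each batch rather than at the per-step history. The starting point is the definition $\text{BayesRegret}(T) = \sum_{t=1}^T \E{f_{\bm{\theta}^*}(\bm{a}^*) - f_{\bm{\theta}^*}(\bm{a}_t)}$, together with the observation that the sum over $t \in \{1,\dots,T\}$ reorganizes as a double sum over batches $b \in [B]$ and time steps $t \in \{t_{b-1}+1,\dots,t_b\}$, since $T = BK$ and each time step belongs to exactly one batch.

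For a fixed time step $t$ in batch $b$, writing $H := H_{t_{b-1}+1}$, I would use the purely algebraic identity
\begin{align*}
f_{\bm{\theta}^*}(\bm{a}^*) - f_{\bm{\theta}^*}(\bm{a}_t) = &\left(U(\bm{a}_t, H) - L(\bm{a}_t, H)\right) + \left(L(\bm{a}_t, H) - f_{\bm{\theta}^*}(\bm{a}_t)\right) \\ &+ \left(f_{\bm{\theta}^*}(\bm{a}^*) - U(\bm{a}^*, H)\right) - \left(U(\bm{a}_t, H) - U(\bm{a}^*, H)\right),
\end{align*}
which is verified by cancelling the $U$ and $L$ terms on the right-hand side. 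The first three grouped terms are exactly the summands appearing in the three double sums of the lemma statement, so the claim reduces to showing that the final term contributes nothing in expectation, i.e.\ $\E{U(\bm{a}_t, H) - U(\bm{a}^*, H)} = 0$ for every such $t$.

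This vanishing is the crux of the argument, and it is where the batched structure must be handled with care. By the definition of $U$, the quantity $U(\cdot, H_{t_{b-1}+1})$ depends only on $\hat{\bm{\theta}}_{t_{b-1}}$ and $N_{t_{b-1}}(\cdot)$, hence is a deterministic function of the action once $H_{t_{b-1}+1}$ is fixed. Meanwhile, in Algorithm \ref{alg:b_ts_cmab} every sample $\tilde{\bm{\theta}}$ drawn during batch $b$ is taken from the posterior conditioned on $H_{t_{b-1}+1}$, so conditional on this history $\tilde{\bm{\theta}}$ is distributed exactly as $\bm{\theta}^*$; consequently $\bm{a}_t = \arg\max_{\bm{a}\in\mathcal{I}} f_{\tilde{\bm{\theta}}}(\bm{a})$ and $\bm{a}^* = \arg\max_{\bm{a}\in\mathcal{I}} f_{\bm{\theta}^*}(\bm{a})$ share the same conditional distribution given $H_{t_{b-1}+1}$. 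The posterior-sampling identity of \cite{russo2014learning} then yields $\E{U(\bm{a}_t, H) \mid H} = \E{U(\bm{a}^*, H) \mid H}$, and the tower property gives the desired cancellation. I expect the main obstacle to be arguing this step rigorously: one must confirm that freezing the posterior at the batch boundary, so that both the confidence bounds and the sampling distribution are measurable with respect to (and conditioned on) the \emph{same} history $H_{t_{b-1}+1}$, preserves the equality in distribution of $\bm{a}_t$ and $\bm{a}^*$. This is precisely why the bounds in the decomposition must be indexed by $H_{t_{b-1}+1}$ rather than $H_t$. Summing the identity over all $t$ and $b$, taking expectations, and grouping the three surviving term types then yields the three double sums in the statement.
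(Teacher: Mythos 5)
Your proposal is correct and follows essentially the same route as the paper's proof: both reorganize the regret sum over batches, condition on the batch-boundary history $H_{t_{b-1}+1}$ via the tower rule, and use the fact that $U(\cdot, H)$ is a deterministic function of the super-arm and history together with the probability-matching property of Thompson Sampling (sampling from the posterior frozen at the batch boundary makes $\bm{a}_t$ and $\bm{a}^*$ identically distributed given that history) to cancel $\E{U(\bm{a}_t, H) - U(\bm{a}^*, H)}$. The only cosmetic difference is that you state the add-and-subtract step as an explicit algebraic identity with a residual term, whereas the paper performs the substitution of $U(\bm{a}_t,H)$ by $U(\bm{a}^*,H)$ inside the conditional expectation directly.
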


\begin{proof}

By the definition of Bayesian regret, we have that:
\begin{align*}
    &\text{BayesRegret}(T) \\ 
    &= \sum_{t=1}^T \E{f_{\bm{\theta}^*} (\bm{a}^*) - f_{\bm{\theta}^*} (\bm{a}_t)} \\
    &= \sum_{b=1}^B \sum_{t=t_{b-1} +1}^{t_b} \E{f_{\bm{\theta}^*} (\bm{a}^*) - f_{\bm{\theta}^*} (\bm{a}_t)} \\
    \intertext{(Tower rule)}
    &= \sum_{b=1}^B \sum_{t=t_{b-1} +1}^{t_b} \mathbb{E}_{H \sim P\left(H_{t_{b-1} + 1}\right)}\left[\E{f_{\bm{\theta}^*} (\bm{a}^*) - f_{\bm{\theta}^*} (\bm{a}_t)\cond H_{t_{b-1} + 1} = H}\right] \\
    &= \sum_{b=1}^B \sum_{t=t_{b-1} +1}^{t_b} \mathbb{E}_{H \sim P\left(H_{t_{b-1} + 1}\right)}\left[\E{U(\bm{a}_t, H) - U(\bm{a}_t, H) + f_{\bm{\theta}^*} (\bm{a}^*) - f_{\bm{\theta}^*} (\bm{a}_t)\cond H_{t_{b-1} + 1} = H}\right] \\
    \intertext{(Conditioned on the history $H_{t_{b-1} + 1}$, up to and including the last batch $b-1$, all super-arms $\bm{a}_t$ for $t = t_{b-1}+1, \dots, t_b$ and the optimal super-arm $\bm{a}^*$ are identically distributed. We have that $\E{U(\bm{a}_t, H) \,\vert\, H_{t_{b-1} +1} = H} = \E{U(\bm{a}^*, H) \,\vert\, H_{t_{b-1} +1} = H}$, since $U$ is a deterministic function of a super-arm and a history)}
    &= \sum_{b=1}^B \sum_{t=t_{b-1} +1}^{t_b} \mathbb{E}_{H \sim P\left(H_{t_{b-1} + 1}\right)}\left[\E{U(\bm{a}_t, H) - U(\bm{a}^*, H) + f_{\bm{\theta}^*} (\bm{a}^*) - f_{\bm{\theta}^*} (\bm{a}_t)\cond H_{t_{b-1} + 1} = H}\right] \\
    &= \sum_{b=1}^B \sum_{t=t_{b-1} +1}^{t_b} \E{U(\bm{a}_t, H_{t_{b-1} +1}) - L(\bm{a}_t, H_{t_{b-1} +1})} \\
    &+ \sum_{b=1}^B \sum_{t=t_{b-1} +1}^{t_b} \E{L(\bm{a}_t, H_{t_{b-1} +1}) - f_{\bm{\theta}^*} (\bm{a}_t)} \\
    &+ \sum_{b=1}^B \sum_{t=t_{b-1} +1}^{t_b} \E{f_{\bm{\theta}^*} (\bm{a}^*) - U(\bm{a}^*, H_{t_{b-1} +1})} .
\end{align*}
\end{proof}

To bound the last two terms of the decomposed Bayesian regret, we use the following lemma.

\begin{lemma}
\label{lem:b_ts_cmab_bad_event_terms}
For any batch $b = 1, \dots, B$ and any time step $t = t_{b-1} + 1, \dots, t_b$, we have that 

\begin{align*}
    &\E{L(\bm{a}_t, H_{t_{b-1} +1}) - f_{\bm{\theta}^*} (\bm{a}_t)} \leq \frac{2}{T} \\
    &\E{f_{\bm{\theta}^*} (\bm{a}^*) - U(\bm{a}^*, H_{t_{b-1} +1})} \leq \frac{2}{T} .
\end{align*}
\end{lemma}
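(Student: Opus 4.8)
The plan is to prove both inequalities by a single symmetric argument, exploiting that $L(\bm{a},H_{t_{b-1}+1})$ is—up to a low-probability ``bad'' event—a valid lower confidence bound for $f_{\bm{\theta}^*}(\bm{a})$, while $U(\bm{a},H_{t_{b-1}+1})$ is an upper bound. I focus on the first inequality; the second follows by flipping the sign of every deviation. First I would write the per--base-arm decomposition
\[
L(\bm{a}_t,H_{t_{b-1}+1}) - f_{\bm{\theta}^*}(\bm{a}_t) = \sum_{i \in \bm{a}_t}\left(\hat{\theta}_{i,t_{b-1}} - \theta^*_i - \sqrt{\tfrac{8\log T}{N_{t_{b-1}}(i)}}\right),
\]
and then relax the (random) index set $\bm{a}_t$ to all of $\mathcal{A}$ by passing to positive parts, which is valid since $\bm{a}_t \subseteq \mathcal{A}$ and each positive part is nonnegative:
\[
L(\bm{a}_t,H_{t_{b-1}+1}) - f_{\bm{\theta}^*}(\bm{a}_t) \leq \sum_{i \in \mathcal{A}}\left(\hat{\theta}_{i,t_{b-1}} - \theta^*_i - \sqrt{\tfrac{8\log T}{N_{t_{b-1}}(i)}}\right)_{\!+}.
\]
A summand is nonzero only when the empirical mean overshoots $\theta^*_i$ by more than the confidence radius, i.e.\ only when the confidence bound fails for arm $i$.

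To control each term I would fix a base arm $i$ and view its rewards as an i.i.d.\ stream $(Y_{i,j})_{j\geq1}$ that, conditionally on $\theta^*_i$, is $\mathcal{N}(\theta^*_i,\sigma_i^2)$. The adaptivity of Algorithm \ref{alg:b_ts_cmab} governs only \emph{when} arm $i$ is played, not the law of its $j$-th reward, so the average $\hat{\theta}_{i,n}$ of its first $n$ rewards is conditionally $\mathcal{N}(\theta^*_i,\sigma_i^2/n)$. Since $N_{t_{b-1}}(i)\in\{1,\dots,T\}$ by Assumptions \ref{asm:initial_plays} and \ref{asm:low_dimensional}, I can union-bound over the number of plays,
\[
\E{\left(\hat{\theta}_{i,t_{b-1}} - \theta^*_i - \sqrt{\tfrac{8\log T}{N_{t_{b-1}}(i)}}\right)_{\!+}} \leq \sum_{n=1}^{T}\E{\left(\hat{\theta}_{i,n} - \theta^*_i - \sqrt{\tfrac{8\log T}{n}}\right)_{\!+}},
\]
which is legitimate because all summands are nonnegative. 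For each fixed $n$, writing $r_n=\sqrt{8\log T/n}$ and using Assumption \ref{asm:finite_variance} ($\sigma_i^2\leq1$), the Gaussian expected positive part satisfies $\E{(\hat{\theta}_{i,n}-\theta^*_i-r_n)_+}\leq \sqrt{\sigma_i^2/n}\,\phi\!\left(r_n\sqrt{n}/\sigma_i\right)\leq T^{-4}$, since the exponent obeys $r_n^2\,n/(2\sigma_i^2)\geq 4\log T$.

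Summing the $T^{-4}$ bound over the $T$ possible play counts gives $\mathcal{O}(T^{-3})$ per base arm, and summing over the $\cardA$ base arms yields $\E{L(\bm{a}_t,H_{t_{b-1}+1}) - f_{\bm{\theta}^*}(\bm{a}_t)}\leq \cardA\cdot\mathcal{O}(T^{-3})$. Invoking Assumption \ref{asm:low_dimensional} ($\cardA\leq T$) collapses this to $\mathcal{O}(T^{-2})\leq 2/T$, as claimed; the identical computation with $U$ and $\bm{a}^*$ (using the symmetric lower tail) gives the second inequality. The main obstacle I anticipate is the coupling between the data-dependent count $N_{t_{b-1}}(i)$ and the empirical mean $\hat{\theta}_{i,t_{b-1}}$: because the number of plays is itself a random, adaptively chosen quantity, one cannot apply a single concentration inequality at the realized count. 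Resolving this cleanly is exactly what motivates the fixed-design view of the reward streams together with the union bound over all possible values of $N_{t_{b-1}}(i)$, and I would isolate the resulting tail estimate as the separate concentration bound referenced as Lemma \ref{lem:b_ts_cmab_bad_probability_bound}.
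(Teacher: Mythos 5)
Your proof is correct, and the first half (per--base-arm decomposition, passing to positive parts, relaxing the random index set $\bm{a}_t$ to all of $\mathcal{A}$) coincides exactly with the paper's argument. Where you diverge is in how the per-arm quantity $\E{\bigl[\hat{\theta}_{i,t_{b-1}} - \theta^*_i - \sqrt{8\log T/N_{t_{b-1}}(i)}\bigr]^+}$ is controlled. The paper factors it as a conditional expectation times a probability and invokes two auxiliary lemmas: Lemma \ref{lem:b_ts_cmab_bad_expectation_bound} bounds the conditional overshoot by $1$ via the mean of a truncated Gaussian, and Lemma \ref{lem:b_ts_cmab_bad_probability_bound} bounds the failure probability by $2/T^2$ via Hoeffding plus a union bound over the possible play counts. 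You instead push the union bound over play counts directly into the expectation of the positive part (valid, as you note, because each summand is nonnegative) and then evaluate the Gaussian expected excess $\E{(Z-r)_+} \leq s\,\phi(r/s)$ in closed form, which with $r_n^2 n/(2\sigma_i^2) \geq 4\log T$ gives $T^{-4}$ per (arm, count) pair and hence $\mathcal{O}(T^{-2})$ overall. Both routes handle the coupling between the adaptive count $N_{t_{b-1}}(i)$ and the empirical mean by the same fixed-reward-stream device; yours is more self-contained (no separate truncated-moment lemma needed) and yields a slightly sharper constant-free $\mathcal{O}(T^{-2})$ bound, while the paper's good-event/bad-event factorization is the more modular template and is the form reused in the rectified-Gaussian extension of Proposition \ref{prop:b_ts_cmab_regret_rect}. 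One small presentational remark: your closing sentence suggests isolating your tail estimate as Lemma \ref{lem:b_ts_cmab_bad_probability_bound}, but your argument actually needs an expected-positive-part bound rather than a probability bound, so the isolated lemma would have a different statement than the paper's.
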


\begin{proof}
Both $\E{L(\bm{a}_t, H_{t_{b-1} +1}) - f_{\bm{\theta}^*} (\bm{a}_t)} \leq \frac{2}{T}$ and $\E{f_{\bm{\theta}^*} (\bm{a}^*) - U(\bm{a}^*, H_{t_{b-1} +1})} \leq \frac{2}{T}$ are proven in the same way, so we focus on the first inequality:
\begin{align*}
    &\E{L(\bm{a}_t, H_{t_{b-1} +1}) - f_{\bm{\theta}^*} (\bm{a}_t)} \\
    &= \E{f_{\hat{\bm{\theta}}_{t_{b-1}}} (\bm{a}_t) - f_{\bm{\theta}^*} (\bm{a}_t) - \sum_{i \in \bm{a}_t} \sqrt{\frac{8 \log T}{N_{t_{b-1}} (i)}}} \\
    &= \E{\sum_{i \in \bm{a}_t} \hat{\theta}_{i, t_{b-1}} - \sum_{i \in \bm{a}_t} \theta^*_i - \sum_{i \in \bm{a}_t} \sqrt{\frac{8 \log T}{N_{t_{b-1}} (i)}}} \\
    & = \E{\sum_{i \in \bm{a}_t} \left( \hat{\theta}_{i, t_{b-1}} - \theta^*_i -  \sqrt{\frac{8 \log T}{N_{t_{b-1}} (i)}}\right)}\\
    \intertext{(We let $[x]^+ := \max(0,x)$)}
    &\leq \E{\sum_{i \in \bm{a}_t} \left[ \hat{\theta}_{i, t_{b-1}} - \theta^*_i -  \sqrt{\frac{8 \log T}{N_{t_{b-1}} (i)}}\right]^+} \\
    &\leq \E{\sum_{i \in \mathcal{A}} \left[ \hat{\theta}_{i, t_{b-1}} - \theta^*_i -  \sqrt{\frac{8 \log T}{N_{t_{b-1}} (i)}}\right]^+} \\
    &= \sum_{i \in \mathcal{A}} \E{\left[ \hat{\theta}_{i, t_{b-1}} - \theta^*_i -  \sqrt{\frac{8 \log T}{N_{t_{b-1}} (i)}}\right]^+} \\
    &\leq \sum_{i \in \mathcal{A}} \E{\left[ \vert \hat{\theta}_{i, t_{b-1}} - \theta^*_i \vert -  \sqrt{\frac{8 \log T}{N_{t_{b-1}} (i)}}\right]^+} \\
    &= \sum_{i \in \mathcal{A}} \E{ \vert \hat{\theta}_{i, t_{b-1}} - \theta^*_i \vert -  \sqrt{\frac{8 \log T}{N_{t_{b-1}} (i)}} \cond \vert \hat{\theta}_{i, t_{b-1}} - \theta^*_i \vert \geq  \sqrt{\frac{8 \log T}{N_{t_{b-1}} (i)}} } \cdot \prob{\vert \hat{\theta}_{i, t_{b-1}} - \theta^*_i \vert \geq  \sqrt{\frac{8 \log T}{N_{t_{b-1}} (i)}}} \\
    \intertext{(By Lemma \ref{lem:b_ts_cmab_bad_expectation_bound} and Lemma \ref{lem:b_ts_cmab_bad_probability_bound})}
    & \leq \sum_{i \in \mathcal{A}} \frac{2}{T^2} \\
    \intertext{(By Assumption \ref{asm:low_dimensional}, $\cardA \leq T$)}
    &\leq \frac{2}{T} .
\end{align*}
\end{proof}

To bound the expected overestimation (or, correspondingly, underestimation) in the second-to-last inequality of the proof for Lemma \ref{lem:b_ts_cmab_bad_event_terms}, we derive two intermediate results in Lemma \ref{lem:b_ts_cmab_bad_expectation_bound} and Lemma \ref{lem:b_ts_cmab_bad_probability_bound}. For both of the lemmas, we let $\Bar{\nu}_{i,x}$ be the average reward of base arm $i$ over the first $x$ times it has been played, i.e., contained in any played super-arm. In other words, for each batch $b \in [B]$ we have that $\hat{\theta}_{i,t_{b-1}} = \Bar{\nu}_{i,N_{t_{b-1}} (i)}$. Additionally, for the proofs of both lemmas, we note that the average $\Bar{\nu}_{i,x}$ is Gaussian with mean $\theta^*_i$ and variance $\sigma_i^2 / x$. Since, by Assumption \ref{asm:finite_variance}, we have that $\sigma_i^2 \leq 1$, this implies that $(\Bar{\nu}_{i,x} - \theta^*_i)$ has mean 0 and variance $\leq 1$.

\begin{lemma}
\label{lem:b_ts_cmab_bad_expectation_bound}
For any batch $b \in [B]$ and base arm $i \in \mathcal{A}$, it holds that
\begin{equation*}
    \E{\vert\hat{\theta}_{i, t_{b-1}} - \theta^*_i\vert -  \sqrt{\frac{8 \log T}{N_{t_{b-1}} (i)}} \cond  \vert \hat{\theta}_{i, t_{b-1}} - \theta^*_i \vert \geq  \sqrt{\frac{8 \log T}{N_{t_{b-1}} (i)}}} \leq 1 .
\end{equation*}
\end{lemma}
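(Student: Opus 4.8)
The plan is to remove the absolute value by symmetry, condition on the number of plays, and reduce the claim to a one-dimensional Gaussian tail estimate. First I would condition on the realization $N_{t_{b-1}}(i) = n$, so that $\hat{\theta}_{i,t_{b-1}} = \bar{\nu}_{i,n}$ and the centered deviation $Y := \hat{\theta}_{i,t_{b-1}} - \theta^*_i$ is $\mathcal{N}(0, \sigma_i^2/n)$. Since both $Y$ and the threshold $c_n := \sqrt{8\log T/n}$ are symmetric about the origin, the absolute value can be dropped: $\E{\abs{Y} - c_n \mid \abs{Y} \geq c_n} = \E{Y - c_n \mid Y \geq c_n}$.

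Next I would standardize. Writing $Y = (\sigma_i/\sqrt{n})\,W$ with $W \sim \mathcal{N}(0,1)$, the event $\{Y \geq c_n\}$ becomes $\{W \geq a\}$ with standardized threshold $a := c_n/(\sigma_i/\sqrt{n}) = \sqrt{8\log T}/\sigma_i$. The key observation is that $a$ is \emph{independent of} $n$: the $1/\sqrt{n}$ decay of the confidence width exactly matches the standard deviation of $\bar{\nu}_{i,n}$, and $a \geq \sqrt{8\log T} \geq 1$ because $\sigma_i \leq 1$ by Assumption \ref{asm:finite_variance} (and $T \geq 2$). The conditional expectation then equals $(\sigma_i/\sqrt{n})\,\paren{\frac{\phi(a)}{1-\Phi(a)} - a}$, i.e. the standard-normal mean-residual-life at $a$ scaled by $\sigma_i/\sqrt{n}$.

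To finish I would apply the standard Mills-ratio inequality $\frac{\phi(a)}{1-\Phi(a)} \leq a + \frac{1}{a}$ for $a>0$, which bounds the mean-residual-life by $1/a$. Substituting $a = \sqrt{8\log T}/\sigma_i$ gives the per-$n$ bound $\frac{\sigma_i}{\sqrt{n}}\cdot\frac{1}{a} = \frac{\sigma_i^2}{\sqrt{8n\log T}} \leq \frac{1}{\sqrt{8\log T}} \leq 1$, using $\sigma_i \leq 1$, $n \geq 1$ and $T \geq 2$. Because this holds uniformly in $n$, averaging over the distribution of $N_{t_{b-1}}(i)$ yields a mixture of quantities each at most $1$, hence the claim.

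The step I expect to be delicate is the legitimacy of conditioning on $N_{t_{b-1}}(i) = n$. Since plays of arm $i$ are scheduled adaptively, the event $\{N_{t_{b-1}}(i) = n\}$ is correlated with the first $n$ rewards of arm $i$, so it is not automatic that $\bar{\nu}_{i,n}$ keeps its clean $\mathcal{N}(\theta^*_i, \sigma_i^2/n)$ law under this conditioning. I would handle this exactly as in the companion bound of Lemma \ref{lem:b_ts_cmab_bad_probability_bound}: view each arm's rewards as a fixed i.i.d. tape indexed by play number, so that $\bar{\nu}_{i,n}$ has the stated Gaussian law for every \emph{fixed} $n$, and dispatch the randomness of $N_{t_{b-1}}(i)$ via the tower rule and a union bound over $n \leq T$. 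The residual Gaussian estimates are then routine.
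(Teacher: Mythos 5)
Your proposal is correct and follows essentially the same route as the paper's proof: condition on the number of plays via the tower rule, use the symmetry of the zero-mean Gaussian deviation to drop the absolute value, and reduce the claim to the mean of a Gaussian truncated at the confidence threshold. The only difference is the final estimate: the paper relaxes the truncation point to $0$ using a monotonicity property of truncated Gaussian means (citing \cite{horrace2015moments}) and evaluates $\phi(0)/(1-\Phi(0)) \approx 0.798$, whereas you keep the actual standardized threshold $a = \sqrt{8\log T}/\sigma_i$ and apply the Mills-ratio bound, which in fact yields the slightly sharper bound $1/\sqrt{8\log T} \leq 1$; both suffice for the lemma, and your treatment of the adaptive-stopping subtlety matches the paper's implicit handling.
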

\begin{proof}
We have that:

\begin{align}
    &\E{\vert\hat{\theta}_{i, t_{b-1}} - \theta^*_i\vert -  \sqrt{\frac{8 \log T}{N_{t_{b-1}} (i)}} \cond  \vert \hat{\theta}_{i, t_{b-1}} - \theta^*_i \vert \geq  \sqrt{\frac{8 \log T}{N_{t_{b-1}} (i)}}} \nonumber\\
    \intertext{(Tower rule)}
    &= \mathbb{E}_{x \sim P\left({N_{t_{b-1}} (i)}\right)} \left[ \E{\vert \Bar{\nu}_{i,x} - \theta^*_i \vert -  \sqrt{\frac{8 \log T}{x}} \cond \vert \Bar{\nu}_{i,x} - \theta^*_i \vert \geq  \sqrt{\frac{8 \log T}{x}} \;\wedge\; N_{t_{b-1}} (i) = x}  \right] \nonumber\\
    \intertext{($\E{\Bar{\nu}_{i,x}} = \theta^*_i$, hence $\Bar{\nu}_{i,x} - \theta^*_i$ is $0$-mean Gaussian, and $\Bar{\nu}_{i,x} - \theta^*_i \overset{d}{=} \theta^*_i - \Bar{\nu}_{i,x}$)}
    &= \mathbb{E}_{x \sim P\left({N_{t_{b-1}} (i)}\right)} \left[ \E{ \Bar{\nu}_{i,x} - \theta^*_i  -  \sqrt{\frac{8 \log T}{x}} \cond  \Bar{\nu}_{i,x} - \theta^*_i  \geq  \sqrt{\frac{8 \log T}{x}} \;\wedge\; N_{t_{b-1}} (i) = x}  \right] . \label{eq:truncated_expectation}
\end{align}

For any fixed integer $x > 0$, we have that $\left(\Bar{\nu}_{i,x} - \theta^*_i - \sqrt{\frac{8 \log T}{x}}\right)$ is Gaussian with expected value $\left(- \sqrt{\frac{8 \log T}{x}}\right) < 0$. The inner expectation in Eq. \ref{eq:truncated_expectation} is the expected value of the corresponding truncated (below 0) Gaussian distribution, which (by, e.g., Theorem 2 of \cite{horrace2015moments}) is increasing in $\left(- \sqrt{\frac{8 \log T}{x}}\right)$. Consequently,

\begin{align}
    &\E{ \Bar{\nu}_{i,x} - \theta^*_i  -  \sqrt{\frac{8 \log T}{x}} \cond  \Bar{\nu}_{i,x} - \theta^*_i - \sqrt{\frac{8 \log T}{x}} \geq  0}  \nonumber\\
    &\leq \E{ \Bar{\nu}_{i,x} - \theta^*_i \cond  \Bar{\nu}_{i,x} - \theta^*_i \geq  0} \nonumber\\
    \intertext{(Mean of truncated Gaussian, see \cite{horrace2015moments}, with $\E{\Bar{\nu}_{i,x} - \theta^*_i} = 0$ and $\mathbf{Var}\left[\Bar{\nu}_{i,x} - \theta^*_i\right] = \sigma_i^2 / x$)}
    &= \frac{\sigma_i}{\sqrt{x}} \frac{\phi\left(0\right)}{1 - \Phi\left(0\right)} \nonumber\\
    \intertext{(By Assumption \ref{asm:finite_variance})}
    &\leq \frac{\phi(0)}{1 - \Phi(0)} \; \approx \; 0.798 \nonumber\\
    &\leq 1 . \label{eq:truncated_bound}
\end{align}

The claim follows by bounding the inner expectation of Eq. \ref{eq:truncated_expectation} using Eq. \ref{eq:truncated_bound}.
\end{proof}

\begin{lemma}
\label{lem:b_ts_cmab_bad_probability_bound}
$\prob{\exists b \in [B] \; \exists i \in \mathcal{A}, \vert \theta^*_i - \hat{\theta}_{i,t_{b-1}} \vert \geq \sqrt{\frac{8 \log T}{N_{t_{b-1}} (i)}}} \leq \frac{2}{T^2}$
\end{lemma}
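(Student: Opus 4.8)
The plan is to establish a single uniform concentration bound that holds simultaneously across all batches $b \in [B]$ and all base arms $i \in \mathcal{A}$, and then control the total failure probability by a union bound. The central difficulty is that the play count $N_{t_{b-1}}(i)$ appearing inside the deviation event is itself a data-dependent random variable, so I cannot apply a Gaussian tail bound to $\hat{\theta}_{i,t_{b-1}}$ directly for a fixed sample size. To decouple this dependence, I would exploit the reindexing introduced just before the lemma: writing $\hat{\theta}_{i,t_{b-1}} = \Bar{\nu}_{i,N_{t_{b-1}}(i)}$, where $\Bar{\nu}_{i,x}$ is the average of the first $x$ rewards collected for arm $i$. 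Because each super-arm $\bm{a}_t \subseteq \mathcal{A}$ is a set containing $i$ at most once, we have $N_{t_{b-1}}(i) \le t_{b-1} \le T$, and $N_0(i)=1$ by Assumption \ref{asm:initial_plays}; hence for every $b$ the count always lies in $\{1,\dots,T\}$. Consequently, for a fixed $i$, the event over batches is contained in the event $\{\exists x \in \{1,\dots,T\} : \vert \theta^*_i - \Bar{\nu}_{i,x}\vert \ge \sqrt{8\log T / x}\}$, which is now indexed by a \emph{deterministic} count $x$.

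For each fixed $x$ and $i$, conditioned on $\theta^*_i$, the quantity $\Bar{\nu}_{i,x} - \theta^*_i$ is a zero-mean Gaussian with variance $\sigma_i^2/x \le 1/x$ by Assumption \ref{asm:finite_variance}. A standard Gaussian tail inequality then gives
\begin{equation*}
\prob{\vert \theta^*_i - \Bar{\nu}_{i,x}\vert \ge \sqrt{\tfrac{8\log T}{x}}} \le 2\exp\!\left(-\tfrac{x}{2}\cdot\tfrac{8\log T}{x}\right) = 2\exp(-4\log T) = \tfrac{2}{T^4},
\end{equation*}
uniformly in $x$, in $i$, and in the conditioned-out value of $\theta^*_i$; marginalizing over the prior preserves the bound since the right-hand side is value-free.

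The final step is the union bound. Summing the per-$(i,x)$ failure probability $2/T^4$ over the at most $T$ admissible values of $x$ and over the $\cardA$ base arms yields a total failure probability of at most $\cardA \cdot T \cdot \tfrac{2}{T^4} = \tfrac{2\cardA}{T^3}$. Invoking Assumption \ref{asm:low_dimensional} ($\cardA \le T$) then gives exactly the claimed bound $\tfrac{2}{T^2}$.

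I expect the main obstacle to be the decoupling argument of the first paragraph, namely rigorously justifying that replacing the random sample size $N_{t_{b-1}}(i)$ by a union over all deterministic counts $x \in \{1,\dots,T\}$ is legitimate. The key point is that $\Bar{\nu}_{i,x}$ is the average of the first $x$ i.i.d.\ draws for arm $i$ and is therefore a well-defined Gaussian for each fixed $x$, \emph{independently} of the adaptive policy that determines \emph{when} those draws are observed; this is the usual device for converting an anytime (stopping-time) deviation bound into a union over fixed play counts, and it is what makes the clean $2/T^4$ per-term estimate applicable inside the union.
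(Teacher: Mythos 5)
Your proposal is correct and follows essentially the same route as the paper's proof: replace the random play count $N_{t_{b-1}}(i)$ by a union over deterministic counts $x \in [T]$, apply a Gaussian/subgaussian tail bound to each $\bar{\nu}_{i,x}$ to get $2/T^4$ per term, and conclude by a union bound over $x$ and $i$ together with Assumption \ref{asm:low_dimensional}. Your explicit discussion of conditioning on $\theta^*_i$ and of the legitimacy of the decoupling step is a slightly more careful rendering of what the paper leaves implicit, but the argument is the same.
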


\begin{proof}

We perform a standard concentration analysis using union bounds and Hoeffding inequality, adapted for the batched feedback setting:

\begin{align}
    &\prob{\exists b \in [B] \; \exists i \in \mathcal{A}, \vert \theta^*_i - \hat{\theta}_{i,t_{b-1}} \vert \geq \sqrt{\frac{8 \log T}{N_{t_{b-1}} (i)}}} \nonumber\\
    &\leq \prob{\exists x \in [t_{B-1}] \; \exists i \in \mathcal{A}, \vert \theta^*_i - \Bar{\nu}_{i,x} \vert \geq \sqrt{\frac{8 \log T}{x}}} \nonumber\\
    &\leq \prob{\exists x \in [T] \; \exists i \in \mathcal{A}, \vert \theta^*_i - \Bar{\nu}_{i,x} \vert \geq \sqrt{\frac{8 \log T}{x}}} \nonumber\\
    \intertext{(Union bound)}
    &\leq \sum_{x = 1}^T \sum_{i \in \mathcal{A}} \prob{\vert \theta^*_i - \Bar{\nu}_{i,x} \vert \geq \sqrt{\frac{8 \log T}{x}}} \nonumber\\
    \intertext{(Hoeffding inequality for 1-subgaussian random variables, since $\theta^*_i - \Bar{\nu}_{i,x}$ is Gaussian with mean 0 and variance $\leq 1$, by Assumption \ref{asm:finite_variance})}
    &\leq \sum_{x = 1}^T \sum_{i \in \mathcal{A}} \frac{2}{T^4} \nonumber\\
    \intertext{(By Assumption \ref{asm:low_dimensional}, $\cardA \leq T$)}
    &\leq \frac{2}{T^2} . \nonumber
\end{align}

\end{proof}

With the last two terms of the regret decomposition of Lemma \ref{lem:b_ts_cmab_regret_decomposition} bounded using Lemma \ref{lem:b_ts_cmab_bad_event_terms}, we may focus on the first term. We can bound it in the following way:
\begin{lemma}
\label{lem:b_ts_cmab_good_event_term}
$\sum_{b=1}^B \sum_{t=t_{b-1} +1}^{t_b} \E{U(\bm{a}_t, H_{t_{b-1} +1}) - L(\bm{a}_t, H_{t_{b-1} +1})} \leq 4\sqrt{8 \log T} \cdot ( \cardA \; K  + \cardA \sqrt{T}) .$
\end{lemma}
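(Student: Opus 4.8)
The plan is to reduce the statement to a deterministic, per-arm counting estimate and then control the penalty introduced by freezing the empirical counts inside each batch. First I would expand the summand using the definitions of $U$ and $L$: since $U(\bm{a},H_t)-L(\bm{a},H_t)=2\sum_{i\in\bm{a}}\sqrt{8\log T / N_{t-1}(i)}$ and the relevant history is $H_{t_{b-1}+1}$, the radius for every round $t=t_{b-1}+1,\dots,t_b$ in batch $b$ uses the \emph{same} stale count $N_{t_{b-1}}(i)$. Pulling out the constant $2\sqrt{8\log T}$ and letting $m_b(i)$ be the number of rounds of batch $b$ in which base arm $i$ is played, the inner double sum collapses to $\sum_{i\in\mathcal{A}} m_b(i)/\sqrt{N_{t_{b-1}}(i)}$, where $N_{t_b}(i)=N_{t_{b-1}}(i)+m_b(i)$ and $m_b(i)\le K$ (each super-arm is a set, and the batch has $K$ rounds). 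Because every bound below is uniform over sample paths, the outer expectation can simply be dropped at the end.

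The main obstacle is exactly this frozen count: within a batch an arm may be pulled up to $K$ times while its denominator stays pinned at the stale value $N_{t_{b-1}}(i)$, so the usual telescoping $\sum_j j^{-1/2}\le 2\sqrt{\cdot}$ that works in the non-batched case does not apply directly. To isolate the batching penalty I would split, for each arm $i$ and batch $b$,
\[
\frac{m_b(i)}{\sqrt{N_{t_{b-1}}(i)}} = \frac{m_b(i)}{\sqrt{N_{t_b}(i)}} + m_b(i)\left(\frac{1}{\sqrt{N_{t_{b-1}}(i)}} - \frac{1}{\sqrt{N_{t_b}(i)}}\right),
\]
separating a ``fresh-count'' term from a ``staleness-correction'' term.

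For the fresh-count term I would use $\sqrt{N_{t_b}(i)}-\sqrt{N_{t_{b-1}}(i)}\ge m_b(i)/\big(2\sqrt{N_{t_b}(i)}\big)$ to obtain $m_b(i)/\sqrt{N_{t_b}(i)}\le 2\big(\sqrt{N_{t_b}(i)}-\sqrt{N_{t_{b-1}}(i)}\big)$; summing over $b$ telescopes to $2\sqrt{N_T(i)}\le 2\sqrt{T}$, using $N_T(i)\le T$ (each arm is played at most once per round). For the staleness-correction term I would bound $m_b(i)\le K$ and telescope the nonnegative, monotone increments $1/\sqrt{N_{t_{b-1}}(i)}-1/\sqrt{N_{t_b}(i)}$, which sum to $K\big(N_0(i)^{-1/2}-N_{t_B}(i)^{-1/2}\big)\le K$; here Assumption~\ref{asm:initial_plays} ($N_0(i)=1$) is invoked both to keep all denominators nonzero and to pin the telescoping constant to $K$.

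Adding the two contributions gives $\sum_b m_b(i)/\sqrt{N_{t_{b-1}}(i)}\le 2\sqrt{T}+K$ for each arm. Summing over the $\cardA$ base arms yields $2\cardA\sqrt{T}+\cardA K$ as a pathwise (hence expectation-free) bound, and multiplying by the factored-out $2\sqrt{8\log T}$ produces $2\sqrt{8\log T}\,\big(2\cardA\sqrt{T}+\cardA K\big)\le 4\sqrt{8\log T}\,\big(\cardA K+\cardA\sqrt{T}\big)$, where the last step merely absorbs the leading constant $2$ into $4$. This matches the claimed bound. I expect the only delicate point to be the decomposition above and verifying that the staleness term genuinely telescopes to a $K$-sized (not $K\log T$-sized) penalty; the rest is routine algebra.
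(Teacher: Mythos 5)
Your proof is correct. It shares the paper's overall architecture---splitting the frozen confidence radius $\sqrt{8\log T/N_{t_{b-1}}(i)}$ into a non-batched main term plus a batching correction, and telescoping each---but it executes the split differently. The paper anchors the decomposition at the running count $N_{t-1}(i)$, so its main term is the standard per-round sum $\sum_t\sum_{i\in\bm{a}_t}\sqrt{8\log T/N_{t-1}(i)}$, which it bounds via the pigeonhole argument of Lemma 1 in \cite{russo2014learning} followed by Cauchy--Schwarz to reach $4\cardA\sqrt{8T\log T}$; its correction term is controlled by replacing $N_{t-1}(i)$ with $N_{t_b}(i)$ and extending the sum from $i\in\bm{a}_t$ to all of $\mathcal{A}$, so that the inner sum over $t$ contributes a clean factor of $K$ before telescoping over batches. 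You instead anchor at the end-of-batch count $N_{t_b}(i)$, collapse each batch into multiplicities $m_b(i)$, and telescope both pieces per arm and pathwise: the inequality $m_b(i)/\sqrt{N_{t_b}(i)}\le 2\bigl(\sqrt{N_{t_b}(i)}-\sqrt{N_{t_{b-1}}(i)}\bigr)$ handles the main term with no Cauchy--Schwarz and no external lemma, while $m_b(i)\le K$ together with the monotone telescoping of $1/\sqrt{N_{\cdot}(i)}$ yields the same $\cardA K$ staleness penalty. Both routes land on identical constants (up to the same harmless $N_T(i)\le T$ versus $T+1$ slack, which the telescoping absorbs); yours is somewhat more elementary and self-contained, while the paper's makes explicit that the second piece is exactly the known non-batched combinatorial Thompson Sampling bound.
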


\begin{proof}
\begin{align*}
    &\sum_{b=1}^B \sum_{t=t_{b-1} +1}^{t_b} \E{U(\bm{a}_t, H_{t_{b-1} +1}) - L(\bm{a}_t, H_{t_{b-1} +1})} \\
    &= 2 \sum_{b=1}^B \sum_{t=t_{b-1} +1}^{t_b} \E{\sum_{i \in \bm{a}_t} \sqrt{\frac{8 \log T}{N_{t_{b-1}} (i)}}} \\
    &= 2 \sum_{b=1}^B \sum_{t=t_{b-1} +1}^{t_b} \left(\E{\sum_{i \in \bm{a}_t} \left(\sqrt{\frac{8 \log T}{N_{t_{b-1}} (i)}} - \sqrt{\frac{8 \log T}{N_{t-1} (i)}}\right)} + \E{\sum_{i \in \bm{a}_t} \sqrt{\frac{8 \log T}{N_{t-1} (i)}}}\right) 
    \\
    &= 2 \sum_{b=1}^B \sum_{t=t_{b-1} +1}^{t_b} \E{\sum_{i \in \bm{a}_t} \left(\sqrt{\frac{8 \log T}{N_{t_{b-1}} (i)}} - \sqrt{\frac{8 \log T}{N_{t-1} (i)}}\right)} + 2 \sum_{t=1}^T \E{\sum_{i \in \bm{a}_t} \sqrt{\frac{8 \log T}{N_{t-1} (i)}}} .
\end{align*}

The first term in the last expression above bounds the regret resulting from the batch delays, while the second term bounds the regret of the Thompson Sampling algorithm for the corresponding non-batched combinatorial semi-bandit setting. We start by bounding the first term:

\begin{align*}
    &2 \sum_{b=1}^B \sum_{t=t_{b-1} +1}^{t_b} \E{\sum_{i \in \bm{a}_t} \left(\sqrt{\frac{8 \log T}{N_{t_{b-1}} (i)}} - \sqrt{\frac{8 \log T}{N_{t-1} (i)}}\right)} \\
    &\leq 2 \sum_{b=1}^B \sum_{t=t_{b-1} +1}^{t_b} \E{\sum_{i \in \bm{a}_t} \left(\sqrt{\frac{8 \log T}{N_{t_{b-1}} (i)}} - \sqrt{\frac{8 \log T}{N_{t_b} (i)}}\right)} \\
    &\leq 2 \sum_{b=1}^B \sum_{t=t_{b-1} +1}^{t_b} \E{\sum_{i \in \mathcal{A}} \left(\sqrt{\frac{8 \log T}{N_{t_{b-1}} (i)}} - \sqrt{\frac{8 \log T}{N_{t_b} (i)}}\right)}\\
    &= 2 \sum_{b=1}^B \sum_{t=t_{b-1} +1}^{t_b} \sum_{i \in \mathcal{A}} \E{ \left(\sqrt{\frac{8 \log T}{N_{t_{b-1}} (i)}} - \sqrt{\frac{8 \log T}{N_{t_b} (i)}}\right)}\\
    &= 2 K \sum_{b=1}^B \sum_{i \in \mathcal{A}} \E{ \left(\sqrt{\frac{8 \log T}{N_{t_{b-1}} (i)}} - \sqrt{\frac{8 \log T}{N_{t_b} (i)}}\right)} \\
    &= 2 K \sum_{i \in \mathcal{A}} \E{ \left(\sqrt{\frac{8 \log T}{N_{t_{0}} (i)}} - \sqrt{\frac{8 \log T}{N_{t_B} (i)}}\right)} \\
    &\leq 2 K \sum_{i \in \mathcal{A}} \E{ \left(\sqrt{\frac{8 \log T}{N_{t_{0}} (i)}} \right)} \\
    \intertext{(By Assumption \ref{asm:initial_plays}, $N_{t_0} = 1$)}
    &= 2 K \sum_{i \in \mathcal{A}} \E{ \left(\sqrt{8 \log T} \right)} \\
    &= 2 K \cardA \sqrt{8 \log T}
\end{align*}

We can then continue by bounding the second term:
\begin{align*}
    &2 \sum_{t=1}^T \E{\sum_{i \in \bm{a}_t} \sqrt{\frac{8 \log T}{N_{t-1} (i)}}}\\
    &= 2\sqrt{8 \log T} \sum_{t\in [T]} \E{\sum_{i\in \bm{a}_t} \frac{1}{\sqrt{N_{t-1}(i)}}}\\
    &= 2\sqrt{8 \log T} \sum_{i\in \mathcal{A}} \E{\sum_{t:i\in \bm{a}_t} \frac{1}{\sqrt{N_{t-1}(i)}}}\\
    \intertext{(See the proof of Lemma 1 in \cite{russo2014learning})}
    &\leq 2\sqrt{8 \log T} \sum_{i\in \mathcal{A}} \E{2 \sqrt{N_{T}(i)}} \\
    \intertext{(Cauchy-Schwarz inequality)}
    &\leq 2\sqrt{8 \log T} \cdot \E{2 \sqrt{\cardA \sum_{i\in \mathcal{A}} N_{T}(i)}} \\
    &\leq 2\sqrt{8 \log T} \cdot \E{2 \sqrt{\cardA^2 T}} \\
    &= 4 \cardA \sqrt{8  T \log T} .
\end{align*}

This completes the proof of the lemma.

\end{proof}

With these lemmas, we can finish the proof of Theorem \ref{thm:b_ts_cmab_regret}:

\begin{proof}[Proof of Theorem \ref{thm:b_ts_cmab_regret}]
We bound terms in the regret decomposition of Lemma \ref{lem:b_ts_cmab_regret_decomposition} using Lemma \ref{lem:b_ts_cmab_bad_event_terms} and Lemma \ref{lem:b_ts_cmab_good_event_term}, such that:
\begin{align*}
    &\text{BayesRegret}(T) \\
    &\leq  4 + 4\sqrt{8 \log T} \cdot ( \cardA \; K  + \cardA \sqrt{T}) \\
    &\leq  \Tilde{\mathcal{O}}(\cardA \; K  + \cardA \sqrt{T}) .
\end{align*}
\end{proof}

The result in Theorem \ref{thm:b_ts_cmab_regret} applies to a setting with unbounded Gaussian rewards. While general, it does not directly correspond to either of the models described in Section \ref{sec:energy_model}. However, it is straightforward to modify the proof so that it applies to a setting with rectified Gaussian base arm rewards (i.e., for a batched version of Algorithm \ref{alg:ts_optimization_objective}).

\begin{proposition}
\label{prop:b_ts_cmab_regret_rect}
The Bayesian regret of Algorithm \ref{alg:b_ts_cmab}, modified to sample arms as in Algorithm \ref{alg:ts_optimization_objective}, with horizon $T$ and batch size $K$, satisfies $\text{BayesRegret}(T) = \Tilde{\mathcal{O}}(\cardA \; K + \cardA \sqrt{T})$.
\end{proposition}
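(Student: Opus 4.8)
The plan is to reuse the argument of Theorem \ref{thm:b_ts_cmab_regret} essentially verbatim, changing only the quantity that plays the role of the mean reward. In the rectified setting the posterior is still a Gaussian over $\theta^*_i$, and the observed feedback (negated energy consumption) is still Gaussian with mean $\theta^*_i$, so the empirical mean $\hat{\theta}_{i,t}$ and its concentration behaviour are identical to the unbounded-Gaussian case. What changes is that the oracle now selects a super-arm by optimizing the \emph{rectified} edge weights $\E{z_e}$ of Algorithm \ref{alg:ts_optimization_objective} rather than the sampled means directly. I would therefore define the mean reward of base arm $i$ as $\rho_i := g(\theta^*_i)$, where $g(\theta) := -\,\E{z \mid \theta}$ with $z \sim \mathcal{N}^R(-\theta, \sigma_i^2)$, and take $\bm{a}^*$ to be the maximizer of $\sum_{i\in\bm{a}}\rho_i$; that is, the regret is measured against the very objective that the Dijkstra oracle actually optimizes.

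The single new ingredient is a Lipschitz estimate for $g$. Differentiating the rectified-Gaussian mean gives $\tfrac{d}{d\theta}\,\E{z\mid\theta} = -\,\Phi(-\theta/\sigma_i) \in [-1,0]$, so $g$ is monotone nondecreasing and $1$-Lipschitz. I would then redefine the confidence bounds as $U(\bm{a},H_t) := \sum_{i\in\bm{a}} g(\hat{\theta}_{i,t-1}) + \sum_{i\in\bm{a}}\sqrt{8\log T / N_{t-1}(i)}$ and $L(\bm{a},H_t)$ analogously with a minus sign, keeping the confidence width unchanged; the $1$-Lipschitz bound is precisely what guarantees that the same width still controls $\lvert g(\hat{\theta}_{i,t-1}) - \rho_i\rvert$ whenever $\lvert \hat{\theta}_{i,t-1} - \theta^*_i\rvert$ is controlled.

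With these definitions every step transfers. The decomposition of Lemma \ref{lem:b_ts_cmab_regret_decomposition} uses only that $U,L$ are deterministic functions of a super-arm and a history, and that conditional on $H_{t_{b-1}+1}$ the played arm $\bm{a}_t$ and the optimal arm $\bm{a}^*$ are identically distributed; both facts survive because $\Tilde{\bm{\theta}}\mid H \overset{d}{=} \bm{\theta}^*\mid H$ and the arm is a fixed function of the parameter vector (the oracle composed with $g$). Lemma \ref{lem:b_ts_cmab_good_event_term} is unchanged, since the $U-L$ gap is still $2\sum_{i\in\bm{a}_t}\sqrt{8\log T/N_{t_{b-1}}(i)}$. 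In Lemma \ref{lem:b_ts_cmab_bad_event_terms} the only new step is to bound $g(\hat{\theta}_{i,t_{b-1}}) - g(\theta^*_i) \le \lvert\hat{\theta}_{i,t_{b-1}} - \theta^*_i\rvert$ by $1$-Lipschitzness, which collapses the rectified deviation back to the Gaussian deviation; thereafter Lemma \ref{lem:b_ts_cmab_bad_expectation_bound} and Lemma \ref{lem:b_ts_cmab_bad_probability_bound}, which concern only the Gaussian empirical mean and are untouched by rectification, apply verbatim to recover the $2/T$ bounds.

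I expect the main obstacle to be conceptual rather than technical: one must be careful to state the regret against the rectified objective $\sum_{i\in\bm{a}}\rho_i$ that the oracle minimizes (not the raw linear energy), and to verify that $g$ is genuinely $1$-Lipschitz rather than Lipschitz with some larger constant, since otherwise the confidence width would need to be inflated and the final bound rescaled. Granting the derivative computation above, the proof is otherwise a line-by-line reprise of Theorem \ref{thm:b_ts_cmab_regret}, yielding $\text{BayesRegret}(T) = \Tilde{\mathcal{O}}(\cardA\,K + \cardA\sqrt{T})$.
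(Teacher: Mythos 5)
Your proposal is correct and follows essentially the same route as the paper: the paper likewise defines the rectified reward $f^R_{\bm{\theta}}(\bm{a}) = -\sum_{i\in\bm{a}}\mathbb{E}_{z_i\sim\mathcal{N}^R(-\theta_i,\sigma_i^2)}[z_i]$, observes that the first (confidence-width) term of the decomposition is unaffected, and handles the remaining terms via exactly your key fact that $0 < \tfrac{\partial}{\partial\theta}\,\mathbb{E}_{z\sim\mathcal{N}^R(\theta,\sigma_i^2)}[z] < 1$, i.e., the rectified mean is a monotone $1$-Lipschitz function of the underlying Gaussian mean, which collapses the argument back to Lemma \ref{lem:b_ts_cmab_bad_event_terms}. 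Your explicit passage through $\lvert\hat{\theta}_{i,t_{b-1}}-\theta^*_i\rvert$ via Lipschitzness is, if anything, slightly more careful than the paper's stated intermediate inequality, but the substance is identical.
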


\begin{proof}
Let $f^R_{\bm{\theta}}(\bm{a}) := - \sum_{i \in \bm{a}} \mathbb{E}_{z_i \sim \mathcal{N}^R(-\theta_i, \sigma_i^2)} \left[ z_i \right]$ be the expected super-arm reward function for a combinatorial semi-bandit with rectified Gaussian base arm feedback. Note that, to connect the super-arm reward function to the rectified Gaussian model in Section \ref{sec:rectified_gaussian_model} and the online shortest path problem formulation, we let base arm feedback be negative, with rectification above 0. The first term of the regret decomposition in Lemma \ref{lem:b_ts_cmab_regret_decomposition} is bounded in Lemma \ref{lem:b_ts_cmab_good_event_term} using only the confidence width term of the upper and lower confidence bounds, not involving the estimated expected super-arm rewards. Hence, under the assumption that we can use the same confidence bounds as in the (non-rectified) Gaussian setting, we only need to ensure that the bounds of the last two terms of the regret decomposition still hold. We can do this with a modification of the proof of Lemma \ref{lem:b_ts_cmab_bad_event_terms}.

\begin{align}
    &\E{L(\bm{a}_t, H_{t_{b-1} +1}) - f_{\bm{\theta}^*}^R (\bm{a}_t)} \nonumber\\
    &= \E{f_{\hat{\bm{\theta}}_{t_{b-1}}}^R (\bm{a}_t) - f_{\bm{\theta}^*}^R (\bm{a}_t) - \sum_{i \in \bm{a}_t} \sqrt{\frac{8 \log T}{N_{t_{b-1}} (i)}}} \nonumber\\
    &= \E{-\sum_{i \in \bm{a}_t} \mathbb{E}_{\hat{z}_i \sim \mathcal{N}^R(-\hat{\theta}_{i, t_{b-1}}, \sigma_i^2)} \left[ \hat{z}_i \right]  + \sum_{i \in \bm{a}_t} \mathbb{E}_{z_i \sim \mathcal{N}^R(-\theta^*, \sigma_i^2)} \left[ z_i \right] - \sum_{i \in \bm{a}_t} \sqrt{\frac{8 \log T}{N_{t_{b-1}} (i)}}} \nonumber\\
    &= \E{\sum_{i \in \bm{a}_t} \left( -\mathbb{E}_{\hat{z}_i \sim \mathcal{N}^R(-\hat{\theta}_{i, t_{b-1}}, \sigma_i^2)} \left[ \hat{z}_i \right] + \mathbb{E}_{z_i \sim \mathcal{N}^R(-\theta^*, \sigma_i^2)} \left[ z_i \right] -  \sqrt{\frac{8 \log T}{N_{t_{b-1}} (i)}}\right)} \nonumber\\
    \intertext{(We have that $- \mathbb{E}_{\hat{z}_i \sim \mathcal{N}^R(-\hat{\theta}_{i, t_{b-1}}, \sigma_i^2)} \left[ \hat{z}_i \right] + \mathbb{E}_{z_i \sim \mathcal{N}^R(-\theta^*, \sigma_i^2)} \left[ z_i \right] \leq \hat{\theta}_{i, t_{b-1}} - \theta^*_i$, since $0 < \frac{\partial}{\partial \theta} \mathbb{E}_{z \sim \mathcal{N}^R(\theta, \sigma_i^2)} \left[ z \right] < 1$)}  
    &\leq \E{\sum_{i \in \bm{a}_t} \left( \hat{\theta}_{i, t_{b-1}} - \theta^*_i -  \sqrt{\frac{8 \log T}{N_{t_{b-1}} (i)}}\right)} . \label{eq:rectified_gaussian_mean_difference_bound}
\end{align}

After Eq. \ref{eq:rectified_gaussian_mean_difference_bound}, the rest of the proof of Lemma \ref{lem:b_ts_cmab_bad_event_terms} holds unmodified. Hence, the bound of Theorem \ref{thm:b_ts_cmab_regret} also holds in the case of rectified Gaussian base arm feedback.

\end{proof}

We can extend this result to the multi-agent online shortest path setting through the following corollary (where the set of edges $\mathcal{E}$ corresponds to the set of base arms $\mathcal{A}$ used throughout the proof of Theorem \ref{thm:b_ts_cmab_regret}). We note that recently, a similar result has been derived in \cite{chan2021parallelizing} for frequentist regret in a linear contextual bandit setting. 

\begin{corollary}
\label{cor:ma_regret_bound_ts2}
Let $K$ be the number of agents, $T$ be the horizon and $\text{Regret}_k (T)$ be the regret of each agent $k \in [K]$. In the synchronous multi-agent online shortest path setting (i.e., a fleet of $K$ agents / vehicles working in parallel in each time step), the total fleet regret incurred by invoking Algorithm \ref{alg:b_ts_cmab} satisfies
$\sum_{k=1}^K \text{BayesRegret}_k (T) \leq \Tilde{\mathcal{O}}\left(\cardE \; K + \vert\mathcal{E}\vert \sqrt{TK} \right)$.
\end{corollary}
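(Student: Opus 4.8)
The plan is to obtain this as a direct corollary of Proposition~\ref{prop:b_ts_cmab_regret_rect} by identifying the synchronous multi-agent protocol with the batched single-agent algorithm (Algorithm~\ref{alg:b_ts_cmab}) run at a particular choice of horizon and batch size. The central observation, already anticipated in the introduction to Section~\ref{sec:multi_agent}, is that a fleet of $K$ agents acting in parallel during a single multi-agent round behaves exactly like one batch of $K$ consecutive pulls in the batched setting: at the start of each round every agent samples mean rewards independently from the common posterior computed at the end of the previous round, selects a path via the shortest-path oracle, and all $K$ rewards are only incorporated into the posterior once the round is complete. This matches the inner loop of Algorithm~\ref{alg:b_ts_cmab}, where the $K$ time steps of batch $b$ each draw a fresh sample from the posterior at $t_{b-1}$ before any update. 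Consequently the multi-agent problem over horizon $T$ with $K$ agents is the batched combinatorial semi-bandit with $B = T$ batches of size $K$ and total horizon $TK$, and with base arm set $\mathcal{A} = \mathcal{E}$.

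First I would make the regret bookkeeping precise. Since all agents share the same source and target and the same underlying parameters $\bm{\theta}^*$, they share the optimal super-arm $\bm{a}^*$, so the instantaneous gap incurred by agent $k$ at round $t$ is exactly the gap of the corresponding pull in the flattened single-agent sequence. Summing over the $K$ agents and the $T$ rounds then gives
\begin{align*}
    \sum_{k=1}^K \text{BayesRegret}_k (T) = \text{BayesRegret}(TK),
\end{align*}
where the right-hand side is the Bayesian regret of Algorithm~\ref{alg:b_ts_cmab} (with rectified Gaussian base-arm feedback) at horizon $TK$ and batch size $K$. Next I would invoke Proposition~\ref{prop:b_ts_cmab_regret_rect} with the substitutions $\cardA \leftarrow \cardE$, horizon $TK$, and batch size $K$, yielding
\begin{align*}
    \text{BayesRegret}(TK) = \Tilde{\mathcal{O}}\left(\cardE \; K + \cardE \sqrt{TK}\right),
\end{align*}
which is precisely the claimed bound. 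Along the way I would check that the assumptions transfer: Assumption~\ref{asm:finite_variance} is unchanged, Assumption~\ref{asm:initial_plays} is a benign initialization, and Assumption~\ref{asm:low_dimensional} becomes $TK \geq \cardE$, which is weaker than the single-agent requirement $T \geq \cardE$ and harmless for $K \geq 1$.

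The main obstacle is not any new estimate but rather making the equivalence airtight: I must verify that the independence of the per-round samples across agents mirrors the independence of the per-time-step samples within a batch, and that the posterior used by every agent in a round coincides with the posterior $P(\cdot \mid H_{t_{b-1}+1})$ used throughout batch $b$ in Algorithm~\ref{alg:b_ts_cmab}. Once this correspondence is established, no part of the batched analysis --- the regret decomposition of Lemma~\ref{lem:b_ts_cmab_regret_decomposition}, the confidence-bound estimates of Lemmas~\ref{lem:b_ts_cmab_bad_event_terms}--\ref{lem:b_ts_cmab_bad_probability_bound}, or the summation of confidence widths in Lemma~\ref{lem:b_ts_cmab_good_event_term} --- needs to be re-derived, and the corollary follows immediately. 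This is exactly the route that removes the exponential $\cardP$ factor appearing in Theorem~\ref{thm:ma_regret_bound_ts}, since the batched view never enumerates the super-arm set.
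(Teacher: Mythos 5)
Your proposal is correct and follows essentially the same route as the paper: the paper's own proof simply says to argue as in Theorem~\ref{thm:ma_regret_bound_ts} but substitute Theorem~\ref{thm:b_ts_cmab_regret} for the QPM-D result, which amounts exactly to your identification of the synchronous $K$-agent fleet over $T$ rounds with batched Thompson Sampling at horizon $TK$ and batch size $K$. Your write-up just makes the equivalence (shared posterior per round, independent per-agent samples, assumption transfer) more explicit than the paper does.
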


\begin{proof}
We prove this in the same way as the proof for Theorem \ref{thm:ma_regret_bound_ts}, but use Theorem \ref{thm:b_ts_cmab_regret} instead of the result for QPM-D in \cite{joulani2013online}.
\end{proof}

For completeness, we also formally state the Bayesian regret upper bound mentioned in Section \ref{sec:ts_regret_analysis} as the following corollary of Theorem \ref{thm:b_ts_cmab_regret} and Proposition \ref{prop:b_ts_cmab_regret_rect}, with batch size 1.

\begin{corollary}
\label{cor:graph_regret_bound_ts2}
The Bayesian regret of Algorithm \ref{alg:online_algorithm} is upper bounded by
\begin{align*}
    \text{BayesRegret}(T) \leq \Tilde{\mathcal{O}}\left(\cardE \sqrt{ T}\right) .
\end{align*}
\end{corollary}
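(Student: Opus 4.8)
The plan is to obtain the claim as the $K=1$ specialization of Proposition \ref{prop:b_ts_cmab_regret_rect}, applied to the online shortest path instance. The first step is to establish the algorithmic equivalence: Algorithm \ref{alg:online_algorithm} equipped with the Thompson Sampling edge-weight routine of Algorithm \ref{alg:ts_optimization_objective} (i.e.\ the rectified Gaussian model of Section \ref{sec:rectified_gaussian_model}) is exactly Algorithm \ref{alg:b_ts_cmab} run with batch size $K=1$. Indeed, when each batch contains a single time step, the inner loop over $t = t_{b-1}+1,\dots,t_b$ degenerates to one iteration, so the posterior parameters $\bm{\mu}_{t_{b-1}}, \bm{\varsigma}_{t_{b-1}}$ from which the $\Tilde{\theta}_i$ are drawn coincide with $\bm{\mu}_{t-1}, \bm{\varsigma}_{t-1}$, and the posterior is recomputed after every single play. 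This is precisely the per-round sample-then-update structure of Algorithm \ref{alg:online_algorithm}, with the rectified weight assignment $w_{e,t} \leftarrow \E{z_e}$ matching the rectified super-arm reward function $f^R_{\bm{\theta}}$ used in Proposition \ref{prop:b_ts_cmab_regret_rect}.

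Next I would identify the base-arm set with the edge set, $\mathcal{A} \equiv \mathcal{E}$, exactly as in the combinatorial semi-bandit formulation of Section \ref{sec:cmab_formulation}, so that $\cardA = \cardE$. Substituting $K=1$ and $\cardA = \cardE$ into the bound of Proposition \ref{prop:b_ts_cmab_regret_rect} then gives directly
\[
    \text{BayesRegret}(T) = \Tilde{\mathcal{O}}\paren{\cardE \cdot 1 + \cardE \sqrt{T}}.
\]
Finally, since $\cardE \leq \cardE\sqrt{T}$ for every $T \geq 1$, the additive $\cardE$ term is absorbed into $\cardE\sqrt{T}$, yielding $\text{BayesRegret}(T) \leq \Tilde{\mathcal{O}}\paren{\cardE \sqrt{T}}$, as claimed.

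There is no genuine obstacle here, only two points that warrant a line of care. The first is the batch-size-one equivalence described above, which must be stated explicitly so that Proposition \ref{prop:b_ts_cmab_regret_rect} is legitimately applicable to Algorithm \ref{alg:online_algorithm}. The second is confirming that Assumptions \ref{asm:finite_variance}--\ref{asm:initial_plays} carry over unchanged to the single-agent setting: the finite-variance assumption is a mild scaling convention, $T \geq \cardA$ is harmless for nontrivial horizons, and the single initial play per arm is a standard warm-start that does not affect the asymptotic order. Once these are checked, the corollary follows immediately.
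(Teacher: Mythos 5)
Your proposal is correct and matches the paper's own derivation: the paper obtains this corollary exactly by specializing Theorem \ref{thm:b_ts_cmab_regret} and Proposition \ref{prop:b_ts_cmab_regret_rect} to batch size $1$, identifying $\mathcal{A}$ with $\mathcal{E}$, and absorbing the additive $\cardE$ term into $\cardE\sqrt{T}$. Your explicit check of the batch-size-one equivalence and of Assumptions \ref{asm:finite_variance}--\ref{asm:initial_plays} is a welcome bit of extra care, but it is the same argument.
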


This corollary matches the bound from Proposition 3 of \cite{russo2014learning}, which can be applied to any combinatorial semi-bandit problem with a linear super-arm reward function, when seen as a special case of the linear bandit problem. However, our analysis does not assume that the prior distributions have bounded support.

One way to discuss the optimality of the upper bounds derived in Theorem \ref{thm:b_ts_cmab_regret} and Proposition \ref{prop:b_ts_cmab_regret_rect}, is to compare them with existing lower bounds. To our knowledge, there is no established lower bound for the specific setting studied in this work (i.e., the batched feedback combinatorial semi-bandit problem). However, there are related bounds that one could either possibly derive a lower bound from, or discuss the upper bound in terms of. Perchet et al. derived a lower bound (Theorem 4 in \cite{perchet2016batched}) for the excess regret due to the delay in the two-armed bandit problem, which is a special case of our problem. Furthermore, there are lower bounds for the (non-delayed) combinatorial semi-bandit problem (e.g., by Kveton et al., Proposition 2 in \cite{kveton2015}), which induce a mandatory term in any lower bound for this problem. Combining these two will result in a lower bound to which the upper bound we derive in Theorem 3 is not tight in the excess regret term, since the upper bound includes a linear dependence on the number of base arms. We conjecture that it should also be possible to adapt the lower bound (for linear contextual bandits with adversarially generated contexts) by Ren et al. in Theorem 1 of \cite{ren2020batched}, which includes a square-root factor (i.e., $\sqrt{\vert \mathcal{A} \vert}$ with the notation used in our work) for the excess regret term. It is notable that under both of these conjectured lower bounds, the $\Tilde{\mathcal{O}}\left(\sqrt{T}\right)$ term of our upper bound is optimal up to polylogarithmic factors. 

\section{Experimental Results}

In this section, we describe different experimental studies.
For real-world experiments, we extend the simulation framework presented in \cite{tstutorial} to network/graph bandits with general directed graphs, in order to enable exploration scenarios in realistic road networks. 
Furthermore, we add the ability to generate synthetic networks of specified size to this framework, in order to compare with the derived regret bounds (as the ground truth is provided for the synthetic networks). In all experiments, Dijkstra's algorithm is used to compute the shortest paths through the networks.

\subsection{Real-World Experiments} \label{sec:exp:rw}

For the experiments in real-world road networks, we study one scenario with realistic energy consumption distributions handled by the agents using misspecified wide prior distributions, and another scenario where the prior distributions are completely known and utilized by the agents. In the second scenario, the parameters of the underlying energy consumption distributions are sampled from the prior distributions before each each experiment run, whereas in the first scenario, the underlying distributions are fixed over multiple runs. Based on the second setting, we also consider a third setting where the energy consumption of different edges is correlated.

For each of the settings, we perform experiments using data from three cities: Luxembourg, Monaco and Turin. For Luxembourg, specifically, we study two problem instances (denoted \#1 and \#2) with different source and target vertices. We utilize, respectively for each of the cities, the Luxembourg SUMO Traffic (LuST) \cite{codeca2017luxembourg}, Monaco SUMO Traffic (MoST) \cite{MoSTCodeca2017} and Turin SUMO Traffic (TuST) \cite{rapelli2021vehicular} scenarios to provide realistic traffic patterns and vehicle speed distributions for each hour of the day. This is used in conjunction with altitude data \cite{farr2000shuttle}, and vehicle parameters from an electric vehicle.
The resulting graph $\mathcal{G}$ for Luxembourg has $\cardV = 2247$ nodes and $\cardE = 5651$ edges, representing a road network with $955$ km of highways, arterial roads and residential streets. 

We use the default vehicle parameters provided for the energy consumption model in \cite{basso2019energy}, with vehicle front surface area $A = 8$ $\text{m}^2$, air drag coefficient $C_d = 0.7$ and rolling resistance coefficient $C_r = 0.0064$. The vehicle is a medium duty truck with vehicle mass $m = 14750$ kg, which is the curb weight added to half of the payload capacity. 

We approximate the powertrain efficiency during traction by $\eta^{+} = 0.88$ and powertrain efficiency during regeneration by $\eta^{-} = 1.2$. In addition, we use the constant gravitational acceleration $g = 9.81$ $\text{m/s}^2$ and air density $\rho = 1.2$ $\text{kg/m}^3$.

\subsubsection{Prior distribution misspecified by agent} \label{sec:exp_misspecified} %

\begin{figure*}
  \centering
  \begin{tabular}{cc}
  \begin{subfigure}[b]{.4\textwidth}
    \centering
    {
      \includegraphics[width=\textwidth]{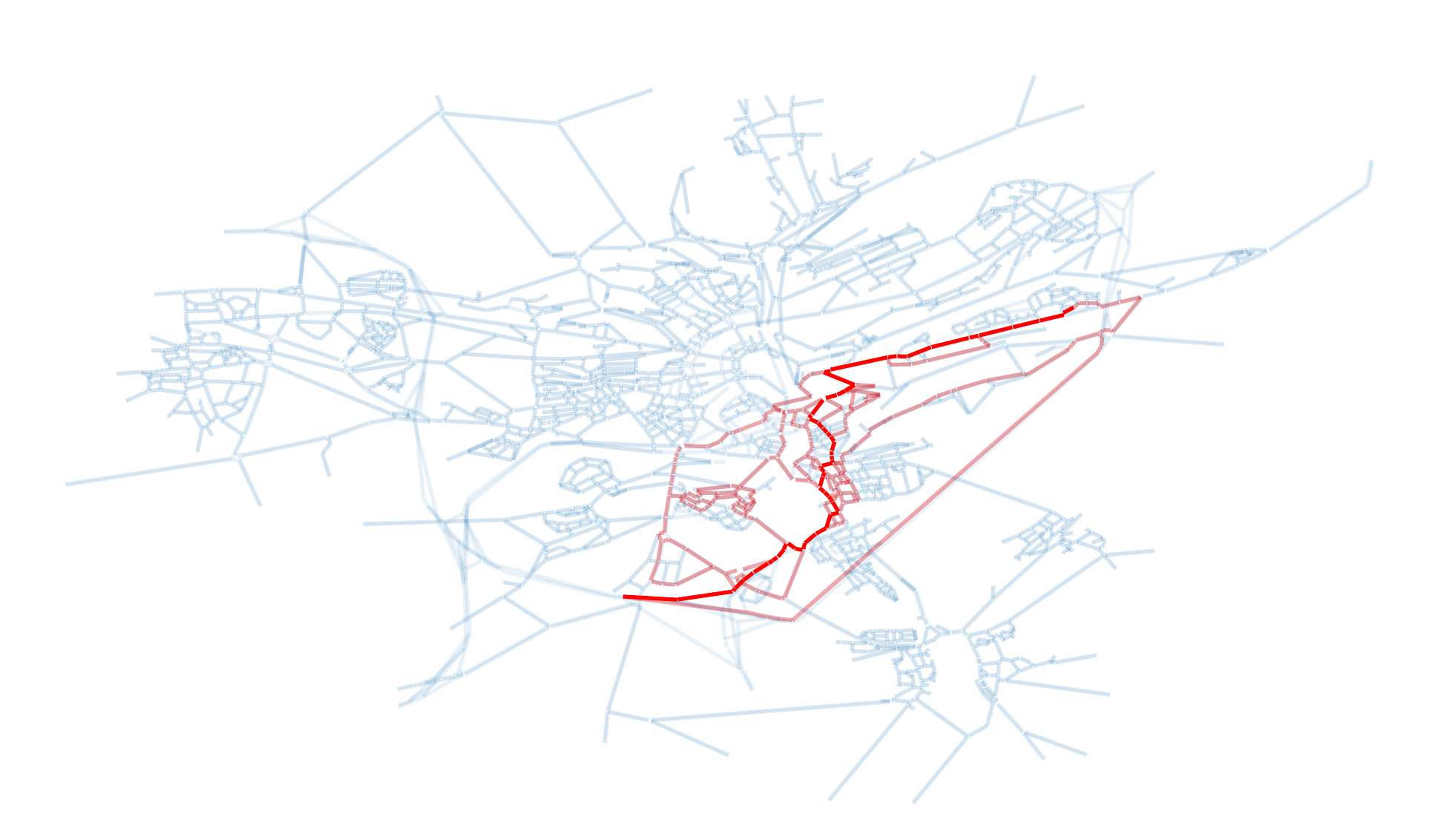}
      \caption{}
      \label{fig:cumulative_graph_luxembourg1_misspecified}
    }
  \end{subfigure}
  &
  \begin{subfigure}[b]{.45\textwidth}
    \centering
    {
      \includegraphics[width=\textwidth]{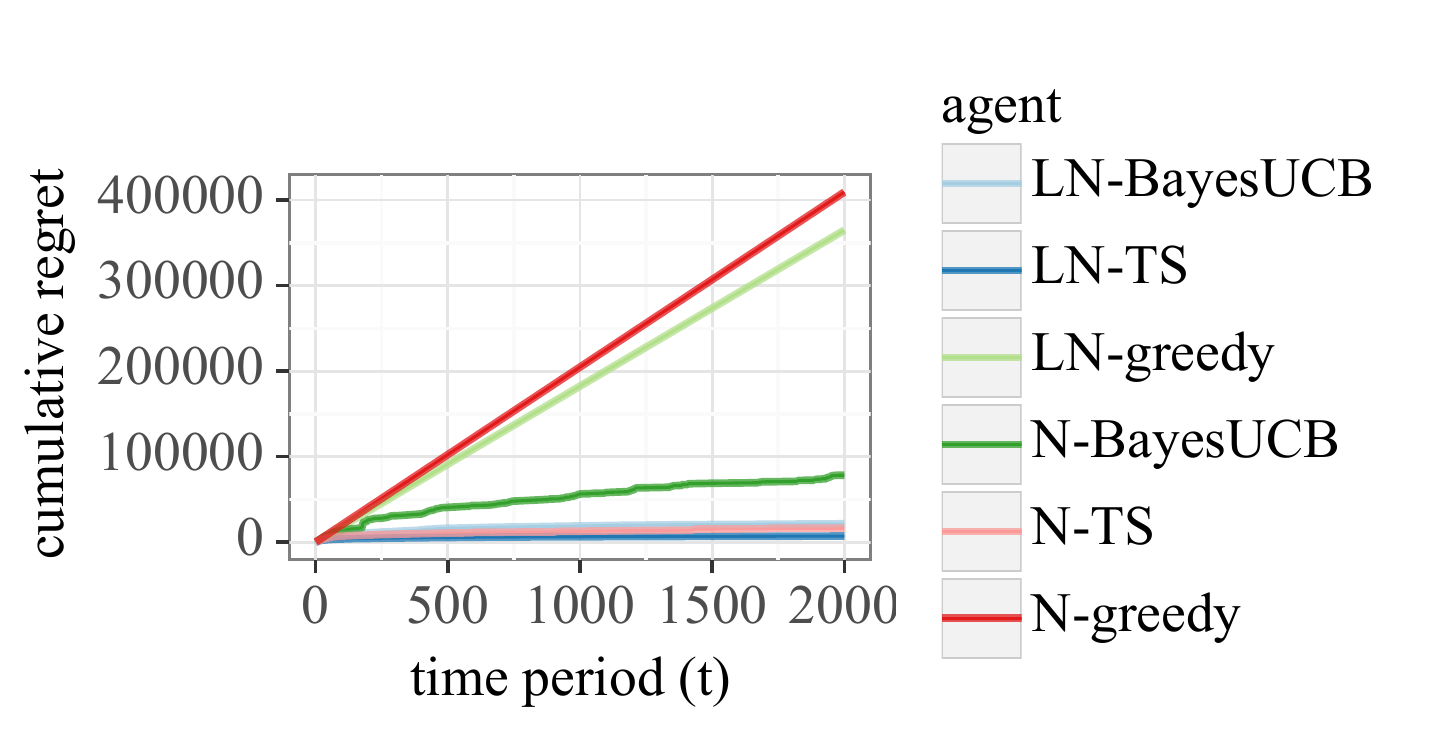}
      \caption{}
      \label{fig:cumulative_regret_luxembourg1_misspecified}
    }
  \end{subfigure}
  \\
  \begin{subfigure}[b]{.4\textwidth}
    \centering
    {
      \includegraphics[width=\textwidth]{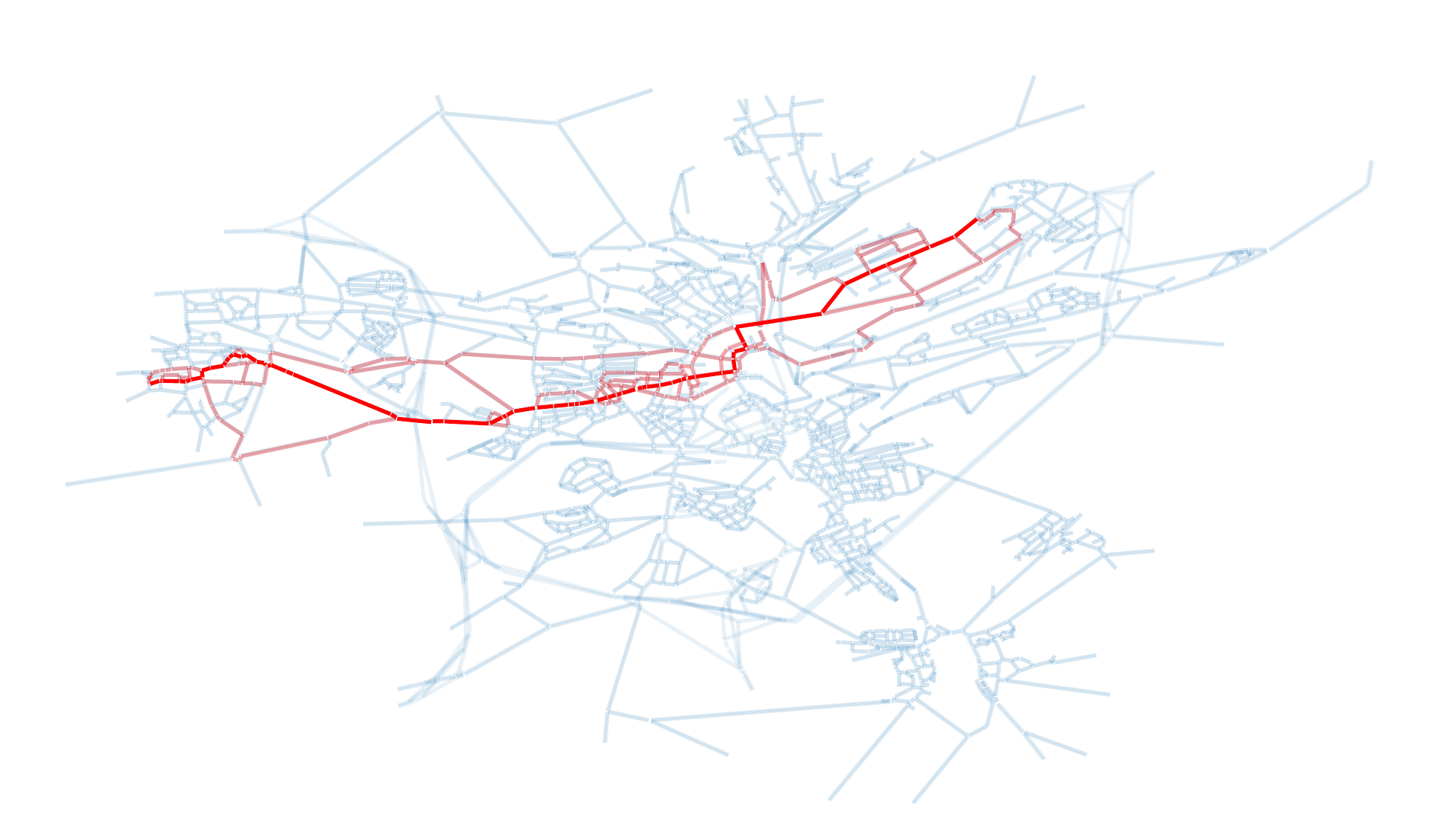}
      \caption{}
      \label{fig:cumulative_graph_luxembourg2_misspecified}
    }
  \end{subfigure}
  &
  \begin{subfigure}[b]{.45\textwidth}
    \centering
    {
      \includegraphics[width=\textwidth]{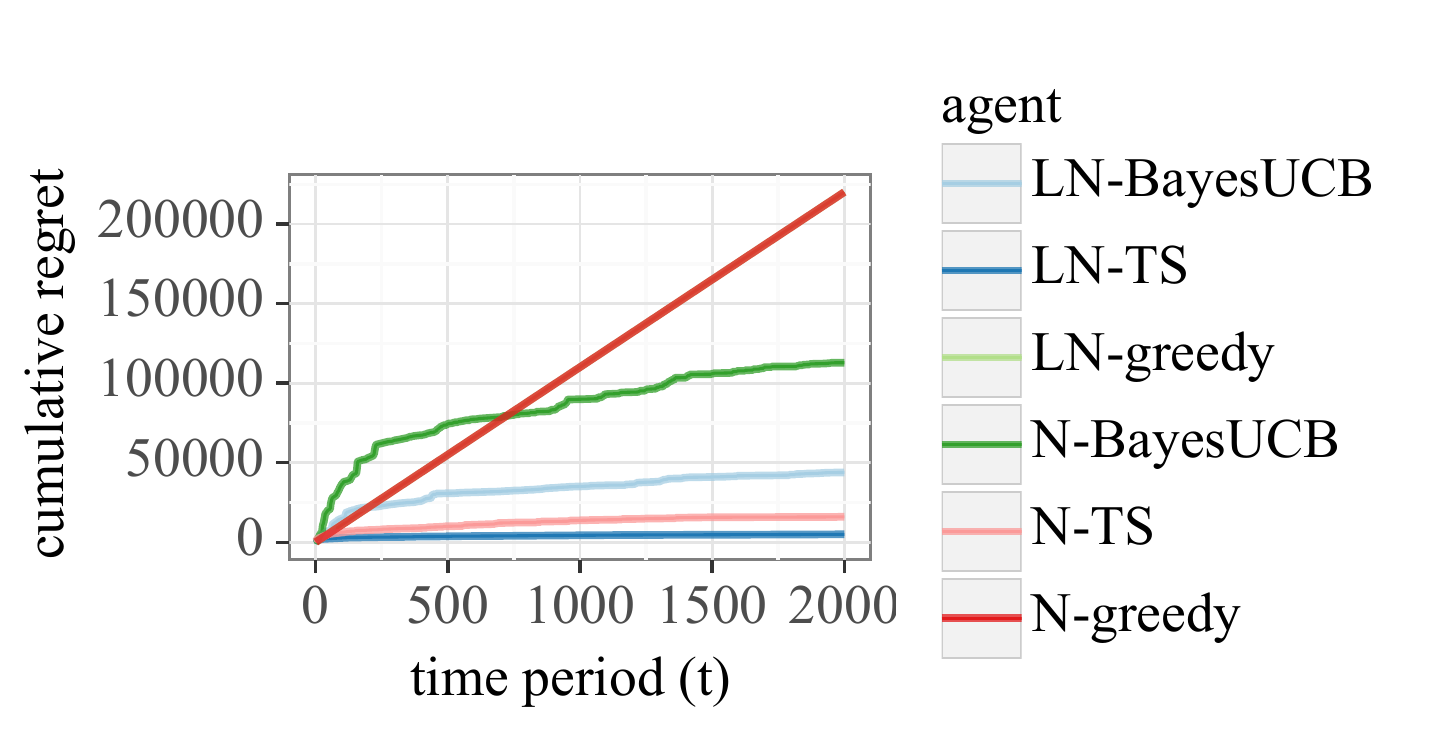}
      \caption{}
      \label{fig:cumulative_regret_luxembourg2_misspecified}
    }
  \end{subfigure}
  \\
  \begin{subfigure}[b]{.4\textwidth}
    \centering
    {
      \includegraphics[width=\textwidth]{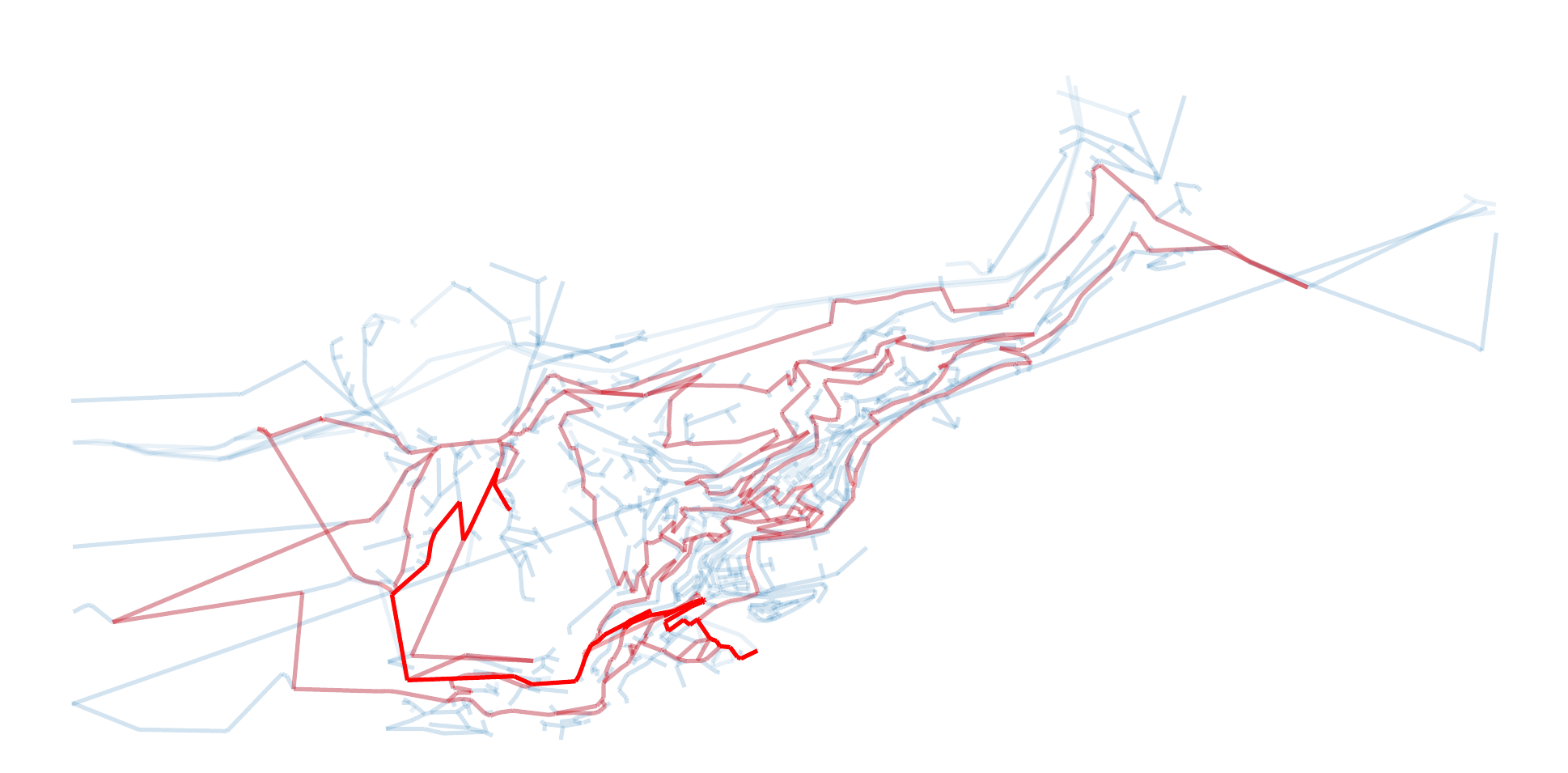}
      \caption{}
      \label{fig:cumulative_graph_monaco1_misspecified}
    }
  \end{subfigure}
  &
  \begin{subfigure}[b]{.45\textwidth}
    \centering
    {
      \includegraphics[width=\textwidth]{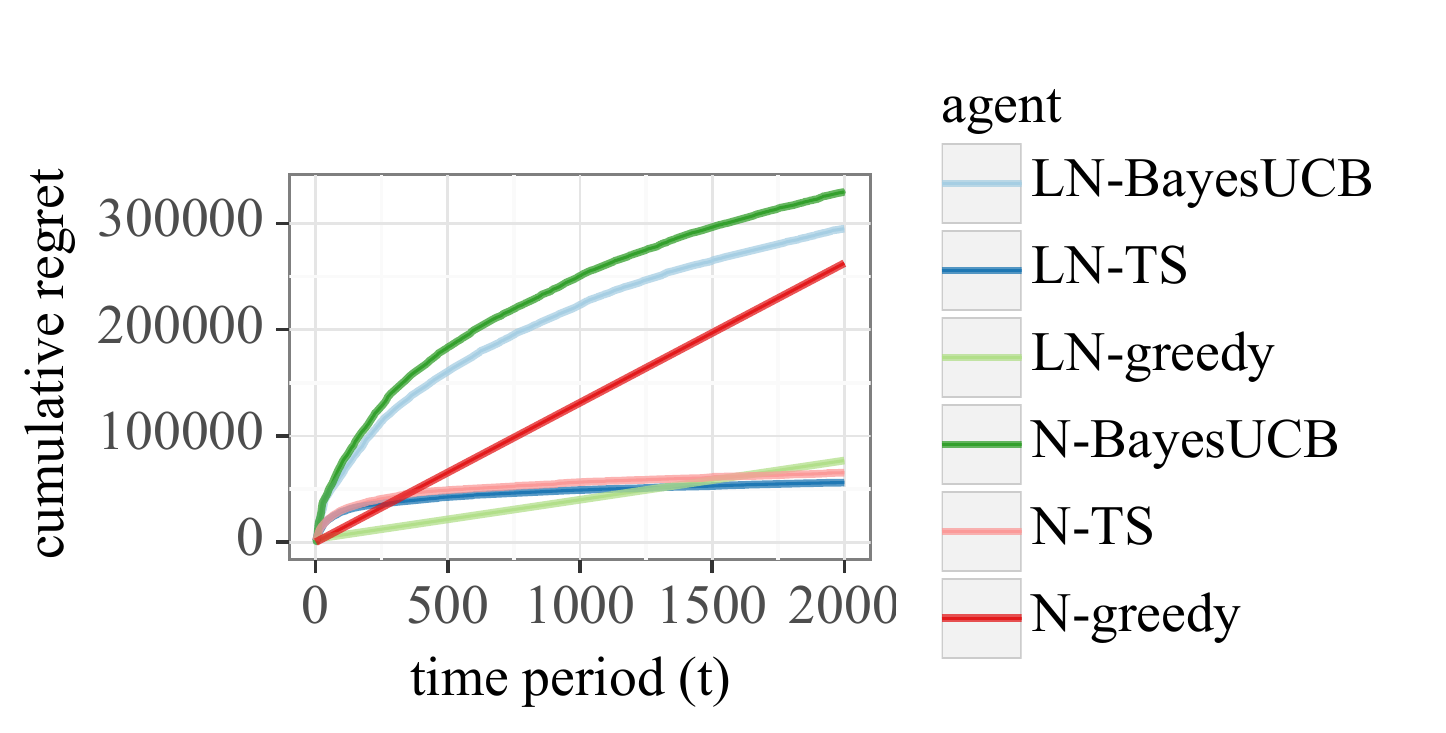}
      \caption{}
      \label{fig:cumulative_regret_monaco1_misspecified}
    }
  \end{subfigure}
  \\
  \begin{subfigure}[b]{.4\textwidth}
    \centering
    {
      \includegraphics[width=\textwidth]{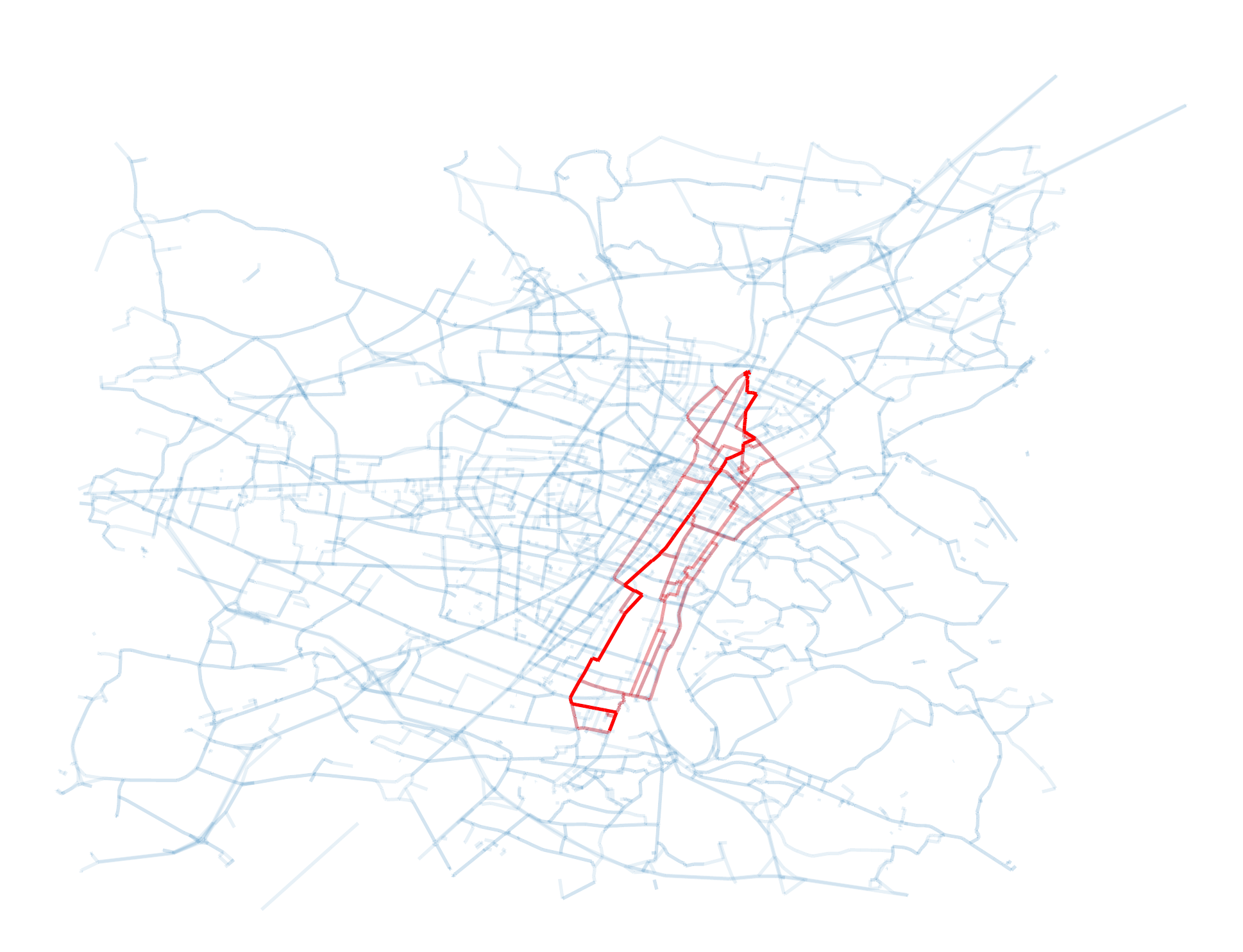}
      \caption{}
      \label{fig:cumulative_graph_turin1_misspecified}
    }
  \end{subfigure}
  &
  \begin{subfigure}[b]{.45\textwidth}
    \centering
    {
      \includegraphics[width=\textwidth]{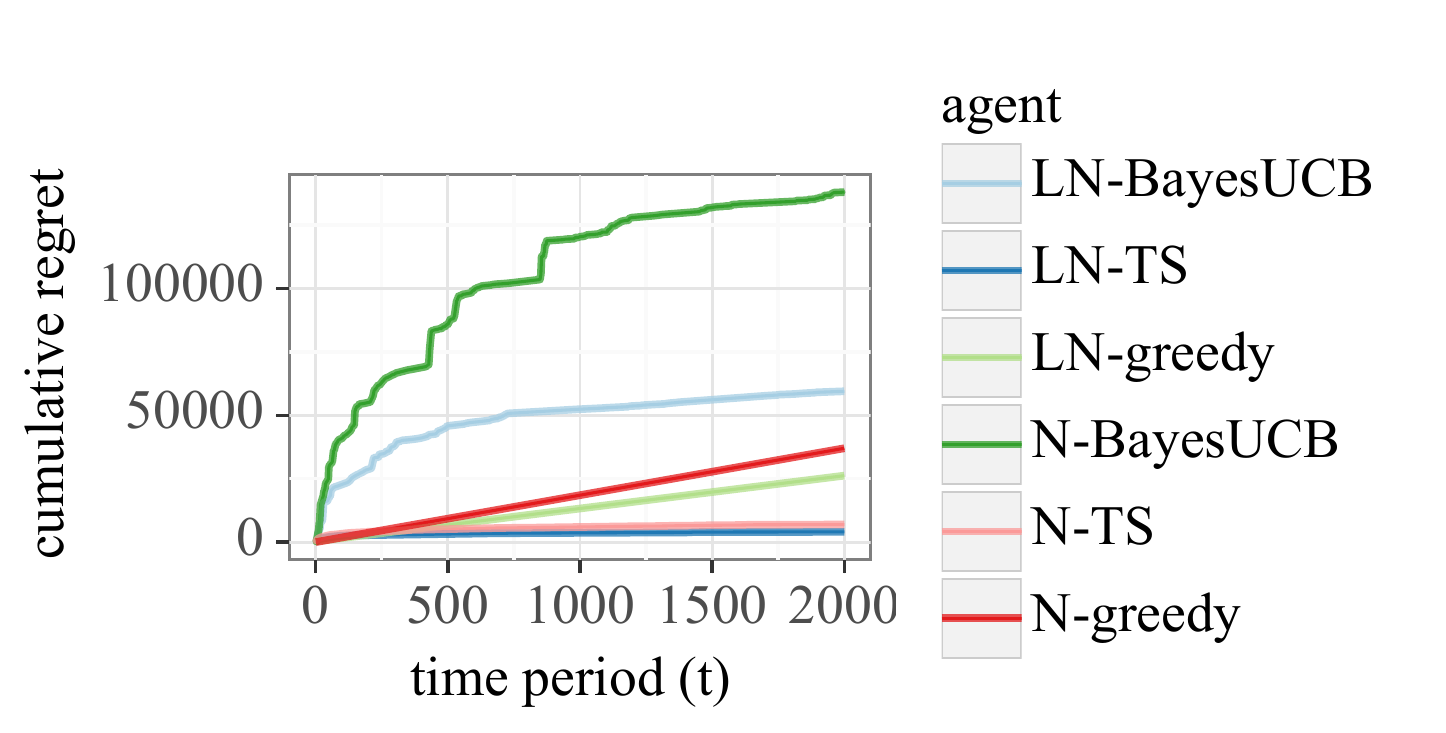}
      \caption{}
      \label{fig:cumulative_regret_turin1_misspecified}
    }
  \end{subfigure}
  \end{tabular}
      \caption{Experimental results on the real-world road networks in the scenario where agents use misspecified priors. For Luxembourg \#1, Luxembourg \#2, Monaco and Turin, respectively, (a), (c), (e) and (g) show the exploration of Thompson Sampling in the road networks, where the red lines indicate the edges visited by the agent during exploration. Paths more frequently traveled are indicated with darker shades of red. Plots (b), (d), (f) and (h) show the average cumulative regret results for Thompson Sampling (TS), BayesUCB and probabilistic greedy algorithms, applied using rectified Gaussian (prefix N) and Log-Gaussian (prefix LN) energy consumption models.}
    \label{fig:exp:results_misspecified}
\end{figure*}

In this set of experiments, with results shown in Figure \ref{fig:exp:results_misspecified} and Table \ref{tbl:standard_misspecified}, we study a scenario where agents do not have access to the true prior distributions of the environment. To simulate the ground truth of the energy consumption, we take the average speed $v_e$ of each edge $e$ from a full 24 hour scenario in each city road network. In particular, for LuST we observe the values during a peak hour (8 AM), with approximately 5500 vehicles active in the network. This hour is selected to increase the risk of traffic congestion, hence finding the optimal path becomes more challenging. We also get the variance of the speed of each road segment from the SUMO scenarios. Using this information, we sample the speed value for each visited edge  and use the energy consumption model to generate the rewards for the arms.

For the probabilistic model, we assume $\sigma_e$ to be proportional to $E_e$ in Eq. \ref{eq:energy_consumption}, such that $\sigma_e^2 = (\varphi E_e)^2$, where we set $\varphi = 0.1$.
For the prior distribution of an edge $e \in \mathcal{E}$, we misspecify it by using the speed limit of $e$ as $v_e$, indicating that the real average speed is unknown. Then $\mu_{e,0} = -E_e$ and $\varsigma_{e,0}^2 = (\vartheta \mu_{e,0})^2$, where $\vartheta = 0.25$.

As a baseline, we consider the greedy algorithm for both the rectified Gaussian and Log-Gaussian models, where the exploration rule is to always choose the path with the lowest currently estimated expected energy consumption, similar to the recent method in \cite{basso2021electric}.

\begin{table}
\centering
\begin{tabular}{||c||c|c||c|c||}
\hline
City&\multicolumn{2}{c||}{Luxembourg \#1}&\multicolumn{2}{c||}{Luxembourg \#2}\\
\hline
Agent&AVG&SD&AVG&SD\\
\hline
LN{-}BayesUCB & 21514.64 & 469.83 & 43892.62 & 970.91 \\
LN{-}TS & \textbf{7176.85} & 780.50 & \textbf{4995.93} & 465.94 \\
LN{-}greedy & 364785.95 & 0.00 & 220100.48 & 0.00 \\
N{-}BayesUCB & 78264.31 & 3310.40 & 112856.00 & 1583.15 \\
N{-}TS & 16349.08 & 3082.75 & 16011.48 & 2135.49 \\
N{-}greedy & 409337.40 & 0.00 & 220100.48 & 0.00 \\
\hline
\hline
City&\multicolumn{2}{c||}{Monaco}&\multicolumn{2}{c||}{Turin}\\
\hline
Agent&AVG&SD&AVG&SD\\
\hline
LN{-}BayesUCB & 295057.14 & 2750.97 & 59497.53 & 327.61 \\
LN{-}TS & \textbf{56110.55} & 1822.57 & \textbf{4056.06} & 566.55 \\
LN{-}greedy & 76880.85 & 25947.25 & 26217.88 & 17912.42 \\
N{-}BayesUCB & 329570.78 & 3748.02 & 138046.11 & 2033.91 \\
N{-}TS & 65407.80 & 4739.13 & 6938.36 & 556.36 \\
N{-}greedy & 262622.60 & 0.00 & 37024.61 & 0.00 \\
\hline
\end{tabular}
\vspace{4pt}
\caption{Average and standard deviation of regret at $T = 2000$ of agents with misspecified prior distributions. Bold average values indicate the agent with the lowest regret in each scenario.}\label{tbl:standard_misspecified} %
\end{table}

We run the simulations for the BayesUCB, TS and greedy algorithms with a horizon of $T = 2000$ (i.e., $T=2000$ time steps). Table \ref{tbl:standard_misspecified} and Figures \ref{fig:cumulative_regret_luxembourg1_misspecified}, \ref{fig:cumulative_regret_luxembourg2_misspecified}, \ref{fig:cumulative_regret_monaco1_misspecified} and \ref{fig:cumulative_regret_turin1_misspecified} show the cumulative regret for the rectified Gaussian and Log-Gaussian models (indicated in all tables and figures with prefixes ``N-'' and ``LN-'', respectively, before the name of each algorithm), where the regret is averaged over 10 runs for each agent in each city. The intuition is that the energy saved by using the TS and BayesUCB agents instead of the baseline greedy agent is the difference in regret, expressed in watt-hours. It is clear that Thompson Sampling with the Log-Gaussian model has the best performance in terms of cumulative regret, but the other non-greedy agents also achieve good results. To illustrate that Thompson Sampling explores the road network in a reasonable way, Figures \ref{fig:cumulative_graph_luxembourg1_misspecified}, \ref{fig:cumulative_graph_luxembourg2_misspecified}, \ref{fig:cumulative_graph_monaco1_misspecified} and \ref{fig:cumulative_graph_turin1_misspecified} visualize the road network and the paths visited by this exploration algorithm in each city. Each plot displays all paths visited by the agent during a single experiment, where more frequently traveled paths are indicated with darker shades of red. We observe that in Figures \ref{fig:cumulative_graph_luxembourg1_misspecified}, \ref{fig:cumulative_graph_luxembourg2_misspecified} and \ref{fig:cumulative_graph_turin1_misspecified}, no significant detours are performed, in the sense that most paths are close to the optimal path. While there are some detours shown in Figure \ref{fig:cumulative_graph_monaco1_misspecified}, we note that the distances in Monaco are small compared to the other cities, and that Figure \ref{fig:cumulative_regret_monaco1_misspecified} indicates that the detours do not result in much additional regret. %

\begin{figure*}[!t]
  \centering
  \vspace{2mm}
  \includegraphics[trim={5pt 15pt 0pt 20pt}, width=0.5\textwidth]{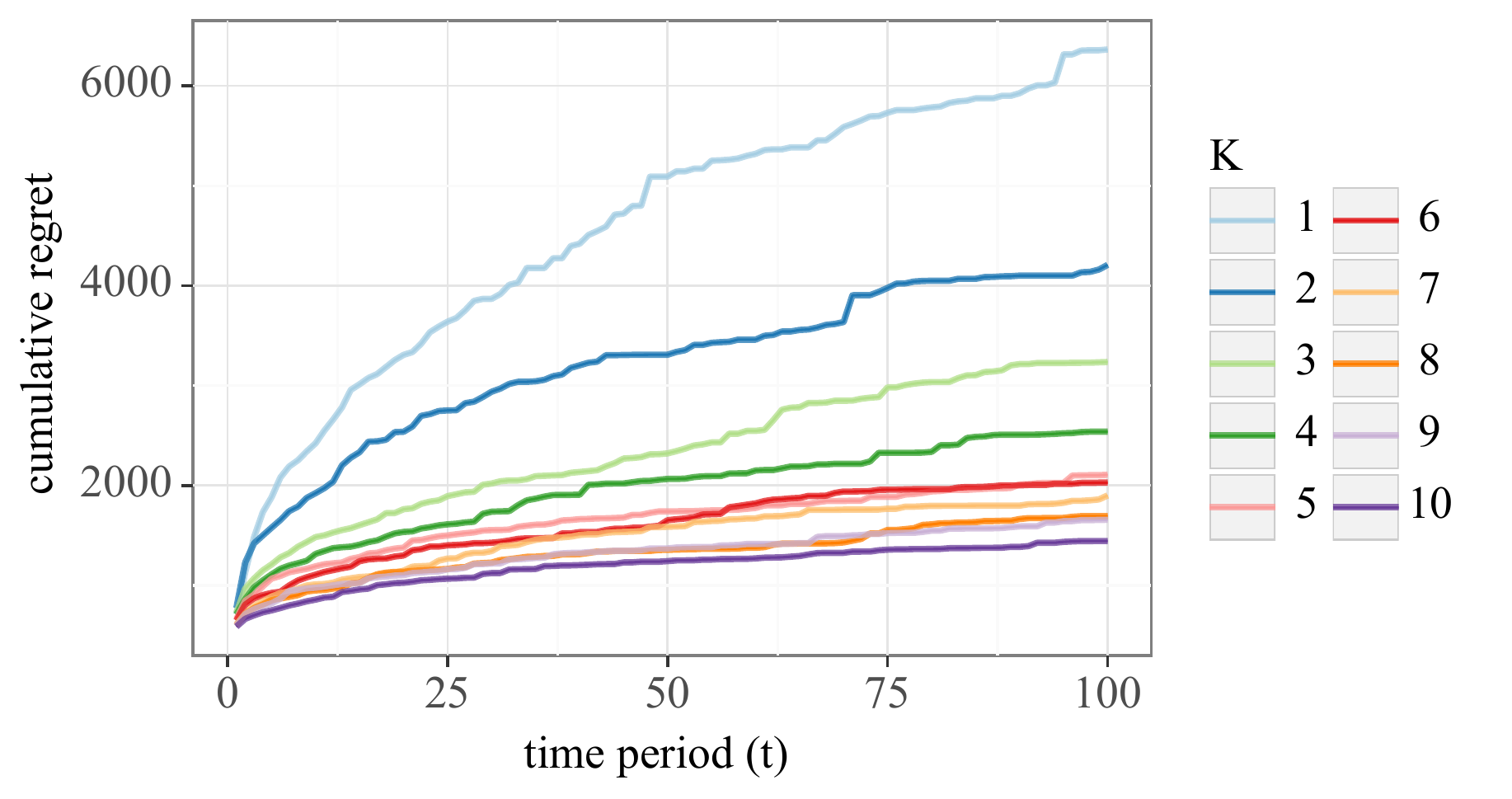}
  \caption{Experimental results in the multi-agent setting, with each line showing the average cumulative regret (horizon $T=100$) for each agent in fleets of size $K$, using Thompson Sampling.}
  \label{fig:multi_cumulative_regret}
\end{figure*}

For the multi-agent case, we use LuST and a horizon of $T = 100$ and 10 scenarios where we vary the number of concurrent agents by $K \in [1, 10]$ in each scenario. The cumulative regret averaged over the agents in these scenarios is shown in Figure \ref{fig:multi_cumulative_regret} for each $K$. In the figure, the final cumulative regret for each agent decreases sharply with the addition of just a few agents to the fleet. This continues until there are five agents, after which there seems to be diminishing returns in adding more agents. While there is some overhead (parallelism cost), just enabling two agents to share knowledge with each other decreases their average cumulative regret at $t=T$ by almost a third. This observation highlights the benefit of providing collaboration early in the exploration process, which is also supported by the regret bound in Corollary \ref{cor:ma_regret_bound_ts2}. %

\subsubsection{Prior distribution known by agent} \label{sec:exp_known_prior}

\begin{figure*}
  \centering
  \begin{tabular}{cc}
  \begin{subfigure}[b]{.48\textwidth}
    \centering
    {
      \includegraphics[width=\textwidth]{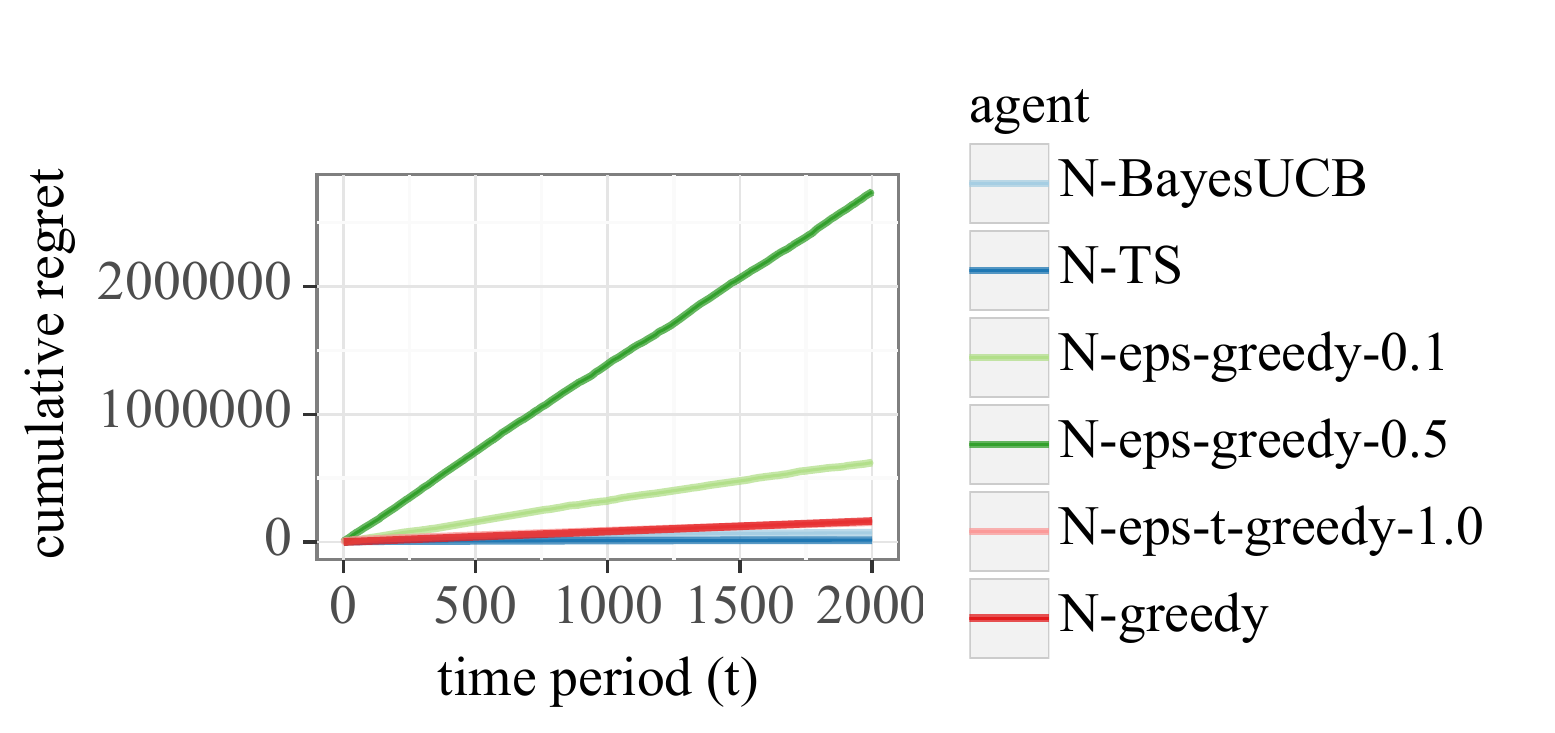}
      \caption{}
      \label{fig:cumulative_regret_luxembourg1_known}
    }
  \end{subfigure}
  &
  \begin{subfigure}[b]{.48\textwidth}
    \centering
    {
      \includegraphics[width=\textwidth]{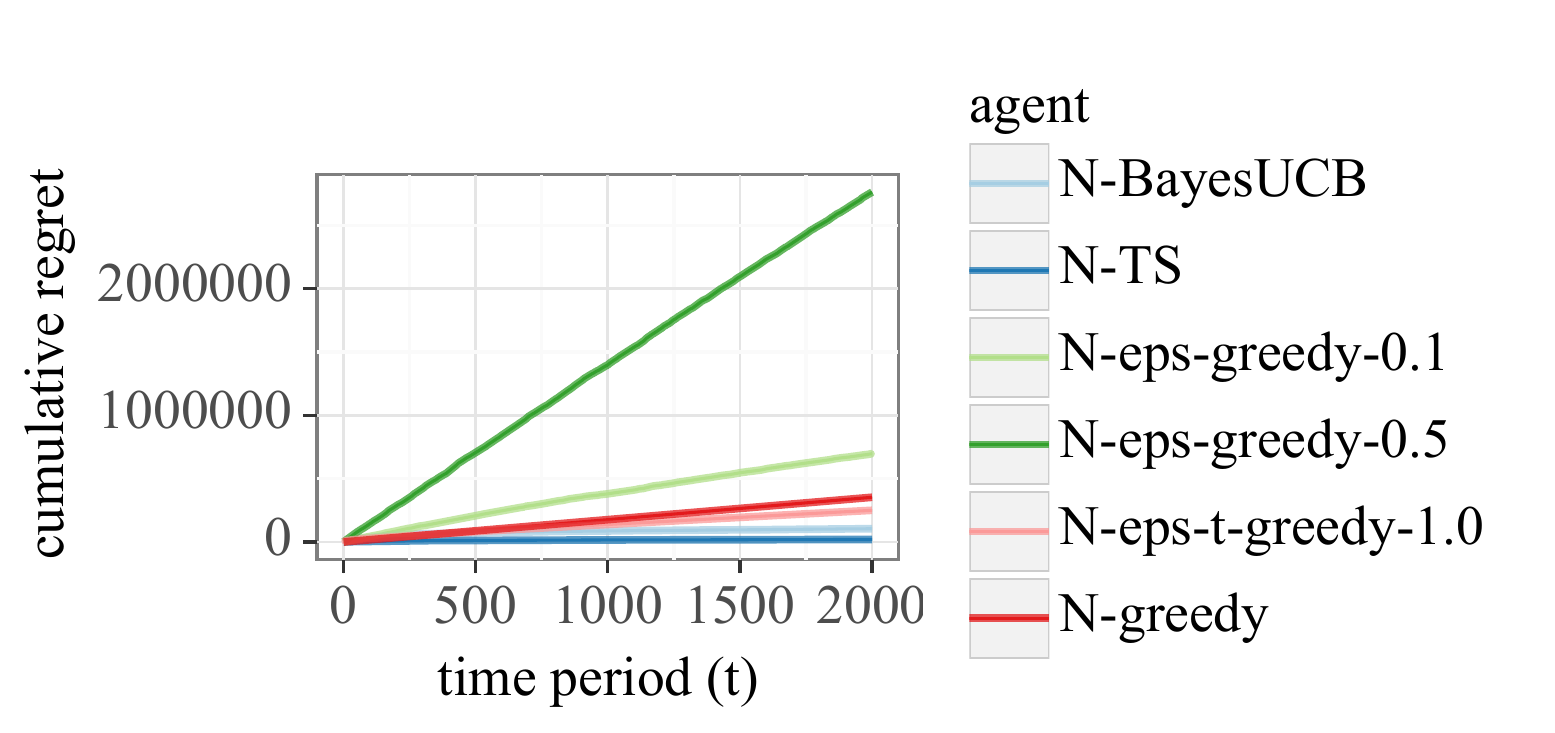}
      \caption{}
      \label{fig:cumulative_regret_luxembourg2_known}
    }
  \end{subfigure}
  \\
  \begin{subfigure}[b]{.48\textwidth}
    \centering
    {
      \includegraphics[width=\textwidth]{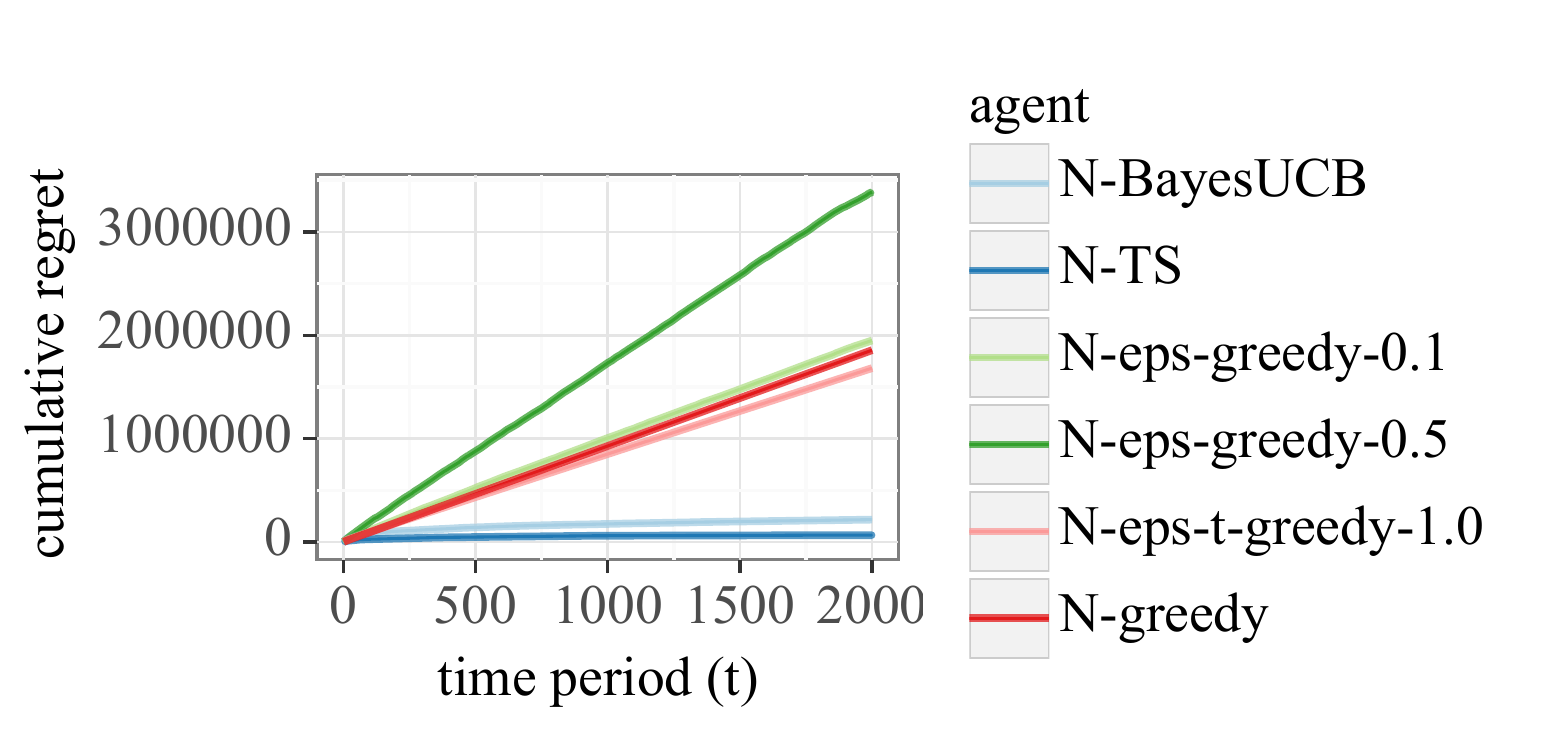}
      \caption{}
      \label{fig:cumulative_regret_monaco1_known}
    }
  \end{subfigure}
  &
  \begin{subfigure}[b]{.48\textwidth}
    \centering
    {
      \includegraphics[width=\textwidth]{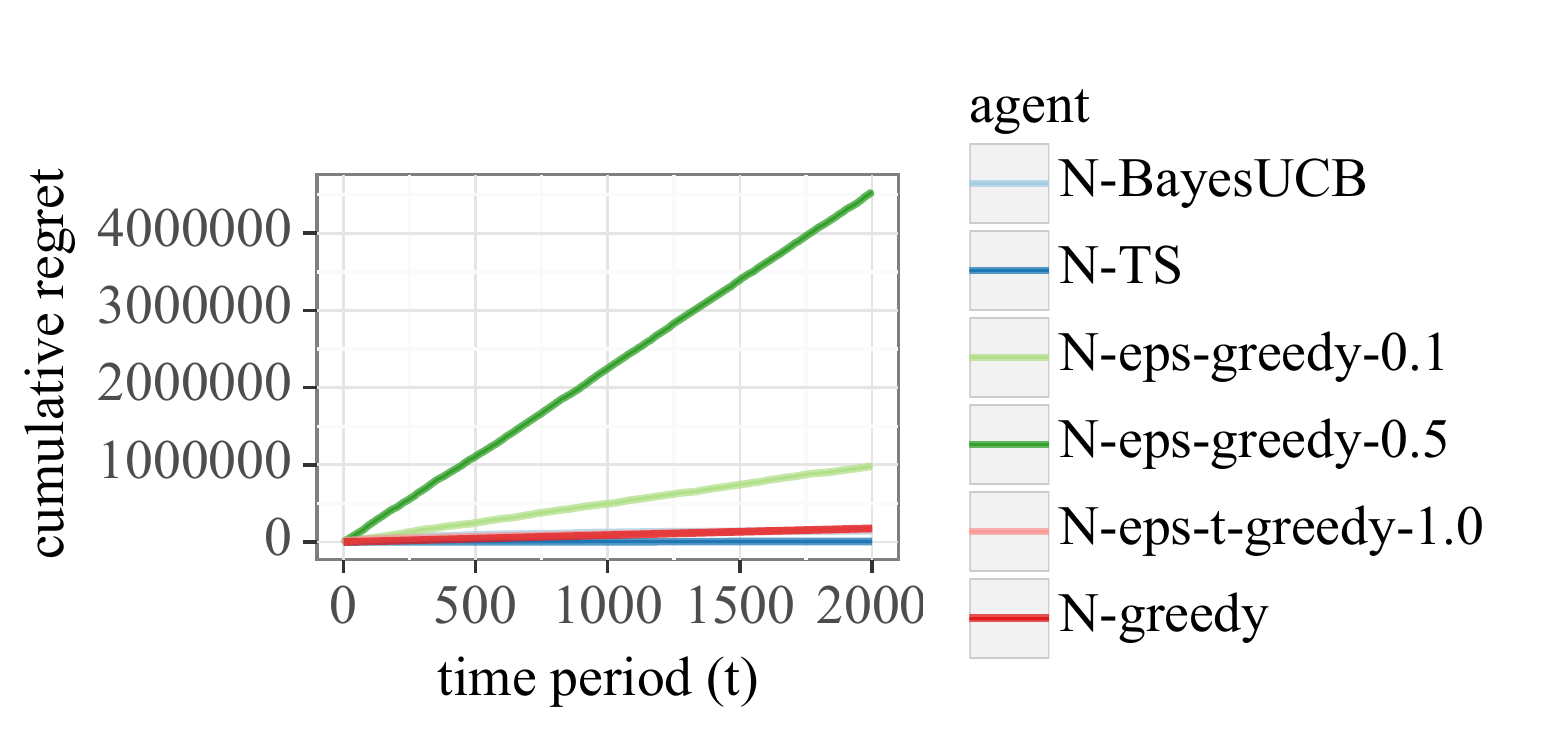}
      \caption{}
      \label{fig:cumulative_regret_turin1_known}
    }
  \end{subfigure}
  \end{tabular}
      \caption{Experimental results on the real-world road networks in the scenario where agents use known priors. For Luxembourg \#1, Luxembourg \#2, Monaco and Turin, respectively, (a), (b), (c) and (d) show the average cumulative regret results for the Thompson Sampling (TS), BayesUCB, $\epsilon_t$-greedy with fixed $\epsilon_t = 0.1$ (eps-greedy-0.1), $\epsilon_t = 0.5$ (eps-greedy-0.5), $\epsilon_t$-greedy with decaying $\epsilon_t$, and probabilistic greedy algorithms, with rectified Gaussian energy consumption models.}
    \label{fig:exp:results_known}
\end{figure*}

\begin{table}
\centering
\begin{tabular}{||c||c|c||c|c||}%
\hline
City&\multicolumn{2}{c||}{Luxembourg \#1}&\multicolumn{2}{c||}{Luxembourg \#2}\\
\hline
Agent&AVG&SD&AVG&SD\\
\hline
N{-}BayesUCB & 72712.10 & 5476.08 & 105222.10 & 7097.73 \\
N{-}TS & \textbf{15062.19} & 3711.03 & \textbf{20747.41} & 3942.83 \\
N{-}eps{-}greedy{-}0.1 & 621259.95 & 93672.80 & 697639.05 & 115938.33 \\
N{-}eps{-}greedy{-}0.5 & 2737852.81 & 97107.13 & 2762055.80 & 160802.01 \\
N{-}eps{-}t{-}greedy{-}1.0 & 155814.50 & 117049.01 & 250879.22 & 224149.22 \\
N{-}greedy & 164903.85 & 119811.93 & 353713.63 & 307249.42 \\
\hline
\hline
City&\multicolumn{2}{c||}{Monaco}&\multicolumn{2}{c||}{Turin}\\
\hline
Agent&AVG&SD&AVG&SD\\
\hline
N{-}BayesUCB & 217256.84 & 70577.34 & 147010.93 & 8855.33 \\
N{-}TS & \textbf{67054.53} & 53218.99 & \textbf{7874.74} & 3559.68 \\
N{-}eps{-}greedy{-}0.1 & 1949999.04 & 2840684.71 & 980895.00 & 134349.00 \\
N{-}eps{-}greedy{-}0.5 & 3385113.34 & 2767510.10 & 4532613.64 & 201149.99 \\
N{-}eps{-}t{-}greedy{-}1.0 & 1682411.76 & 3029371.15 & 167010.98 & 138348.82 \\
N{-}greedy & 1856111.47 & 2954720.83 & 178214.82 & 186056.16 \\
\hline
\end{tabular}
\vspace{4pt}
\caption{Average and standard deviation of regret at $T = 2000$ of agents with known prior distributions. Bold average values indicate the agent with the lowest regret in each scenario.}\label{tbl:standard_known}
\end{table}

 In Section \ref{sec:exp_misspecified} we had realistic unknown energy consumption distributions (fixed across all experiment runs), handled by the agents using misspecified prior distributions. For the second set of experiments, with results shown in Figure \ref{fig:exp:results_known} and Table \ref{tbl:standard_known}, we instead assume that the prior distributions are completely known by the agents. In other words, the environment samples the unknown mean vector $\bm{\theta}^*$ from the prior before all of the agents are applied to the problem instance specified by $\bm{\theta}^*$. Again, the regret results are averaged over 10 runs of each agent, in this setting resulting in an estimate of the Bayesian regret for each agent.

Since we assume that each agent is aware of the true prior distribution in this problem setting, we settle on the rectified Gaussian model of energy consumption for these experiments, with Gaussian prior distributions. As replacements for the Log-Gaussian agents, we increase the number of baselines by implementing a version of $\epsilon$-greedy adapted to combinatorial semi-bandits, based on Algorithm 1 introduced in the supplementary material of \cite{chen2013combinatorial}.

\begin{center}
\begin{algorithm}[th!]
\caption{$\epsilon_t$-greedy for Combinatorial Semi-Bandits}
\label{alg:epsilon_greedy_cmab}
\begin{algorithmic}[1]
\Require Time horizon $T$, prior parameters $\bm{\mu}_0, \bm{\varsigma}_0$, exploration probability $\epsilon_t$ for $t \in [T]$.
\For{$t \leftarrow 1, \dots, T$}
    \State Sample $x \sim \text{Bernoulli}(\epsilon_t)$
    \If{$x = 1$}
        \State Sample an edge $(u_h, u_{h'})$ uniformly from $\mathcal{E}$. 
        \State $\bm{p}_1 \leftarrow$ Shortest path w. r. t. $\bm{\mu}_{t-1}$, between source vertex and $u_h$.
        \State $\bm{p}_2 \leftarrow$ Shortest path w. r. t. $\bm{\mu}_{t-1}$, between $u_{h'}$ and target vertex.
        \State $\bm{a}_t \leftarrow$ Concatenate $\bm{p}_1$ and $\bm{p}_2$.
    \Else
        \State $\bm{a}_t \leftarrow$ Shortest path w. r. t. $\bm{\mu}_{t-1}$, between source and target vertices.
    \EndIf    
    \State Play $\bm{a}_t$, update posterior parameters $\bm{\mu}_t, \bm{\varsigma}_t$ using observed rewards $r_t (\bm{a}_t)$.
\EndFor
\end{algorithmic}
\end{algorithm}
\end{center}

As outlined in Algorithm \ref{alg:epsilon_greedy_cmab}, at each time step $t$ with probability $\epsilon_t$, we select an edge $(u_h, u_{h'}) \in \mathcal{E}$ uniformly at random. We then find the shortest paths with respect to the posterior mean vector, between (1) the source vertex of the problem instance and $u_h$, and (2) $u_{h'}$ and the target vertex. The resulting concatenated path, including the edge $(u_h, u_{h'})$, is used to explore the road network graph. With probability $1-\epsilon_t$, we instead greedily select the shortest path between the source and target vertices, exploiting the current posterior mean estimates.

We evaluate agents using constant values of $\epsilon_t$ ($0.1$ and $0.5$), as well as an agent $\epsilon_t$ decaying in $t$ (with $\epsilon_t = \frac{1}{t}$). We motivate the latter with Theorem 4 in the supplementary material of \cite{chen2013combinatorial}, where the authors show a sub-linear upper bound on the expected regret of their $\epsilon_t$-greedy algorithm, with $\epsilon_t$ in the order of $\frac{1}{t}$ (with an additional constant factor derived from information about the problem instance).

As shown in Figures \ref{fig:cumulative_regret_luxembourg1_known}, \ref{fig:cumulative_regret_luxembourg2_known}, \ref{fig:cumulative_regret_monaco1_known} and \ref{fig:cumulative_regret_turin1_known}, the results from the experiments with the TS, BayesUCB and (pure) greedy agents closely match the corresponding experiments in the misspecified prior problem setting of the previous section, while $\epsilon_t$-greedy with decaying $\epsilon_t$ has comparable performance to the greedy agent. The $\epsilon_t$-greedy agents with constant $\epsilon_t$ perform consistently worse than the other agents. Also supported by Table \ref{tbl:standard_known}, the regret of the TS agent still saturates rapidly and achieves the best average regret out of the evaluated agents for all cities.

\subsubsection{Networks with correlated edge weights} \label{sec:exp_correlated}

\begin{figure*}
  \centering
  \begin{tabular}{cc}
  \begin{subfigure}[b]{.48\textwidth}
    \centering
    {
      \includegraphics[width=\textwidth]{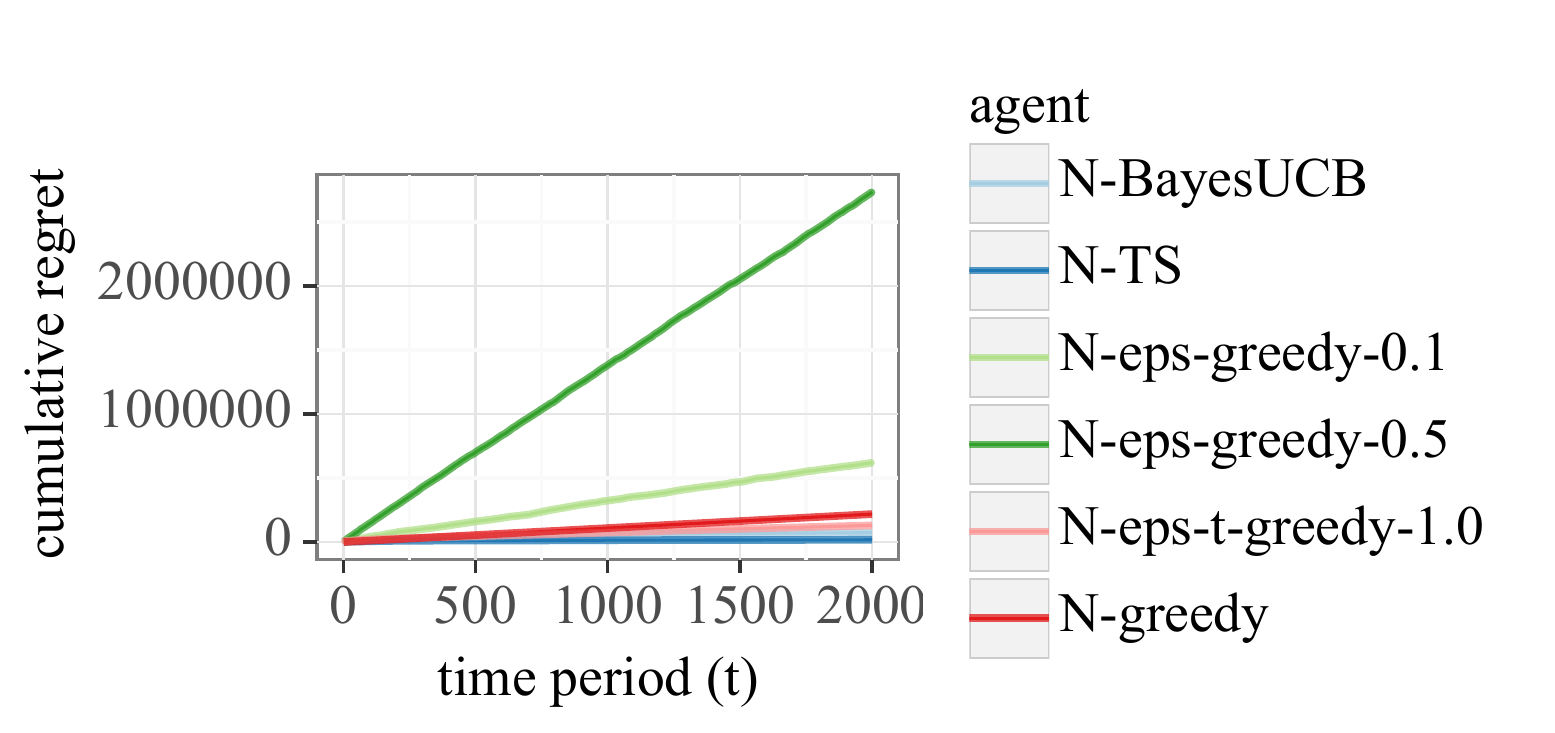}
      \caption{}
      \label{fig:cumulative_regret_luxembourg1_correlated}
    }
  \end{subfigure}
  &
  \begin{subfigure}[b]{.48\textwidth}
    \centering
    {
      \includegraphics[width=\textwidth]{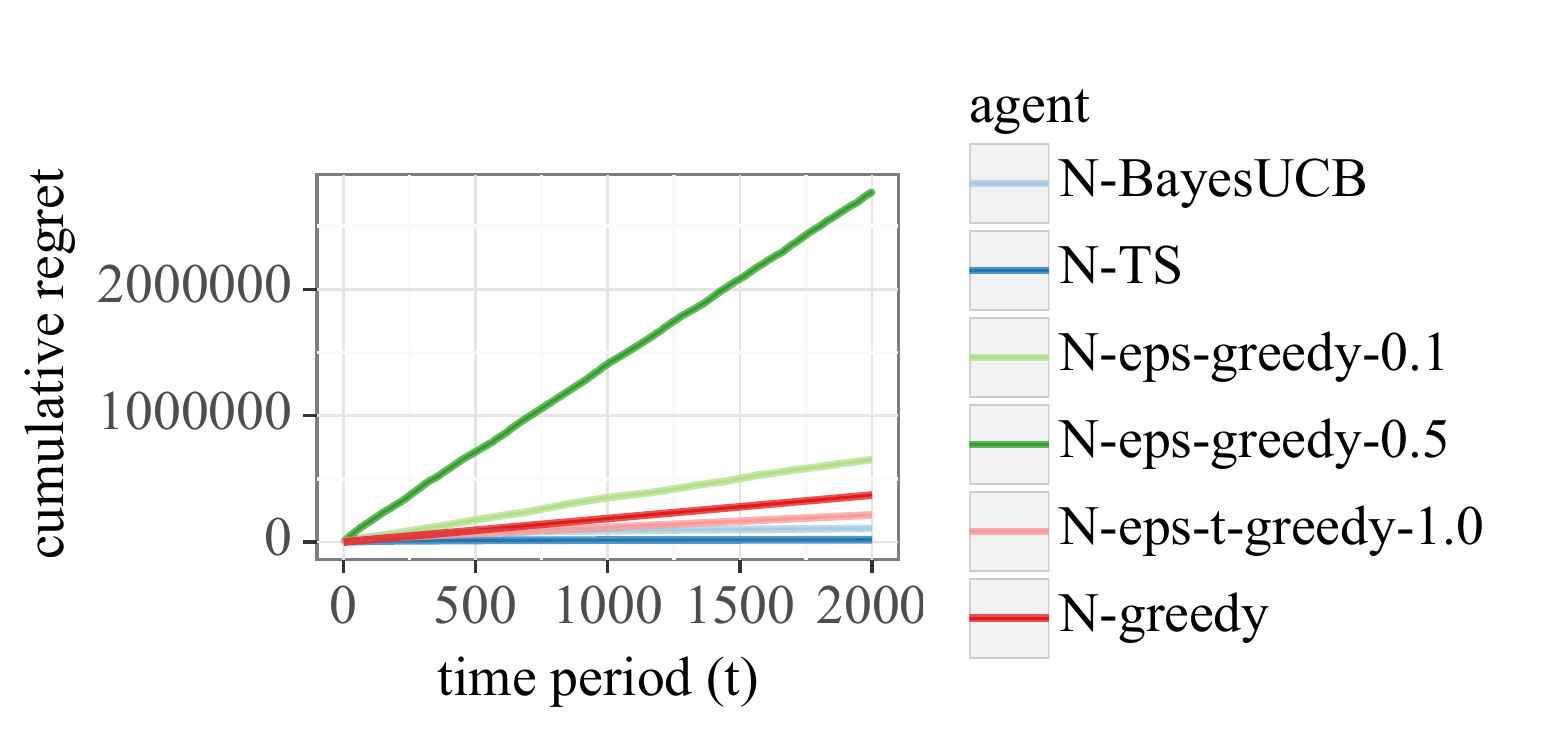}
      \caption{}
      \label{fig:cumulative_regret_luxembourg2_correlated}
    }
  \end{subfigure}
  \\
  \begin{subfigure}[b]{.48\textwidth}
    \centering
    {
      \includegraphics[width=\textwidth]{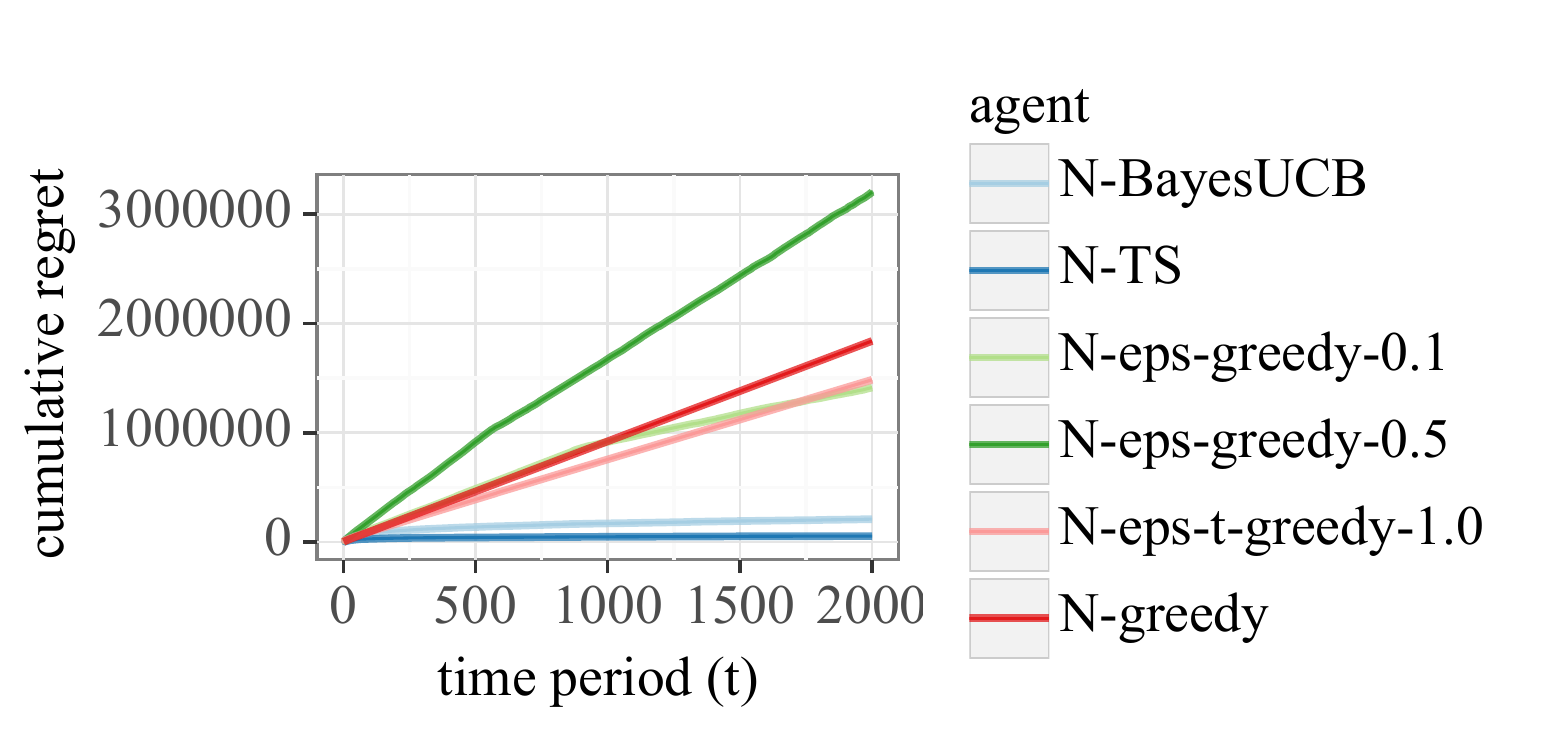}
      \caption{}
      \label{fig:cumulative_regret_monaco1_correlated}
    }
  \end{subfigure}
  &
  \begin{subfigure}[b]{.48\textwidth}
    \centering
    {
      \includegraphics[width=\textwidth]{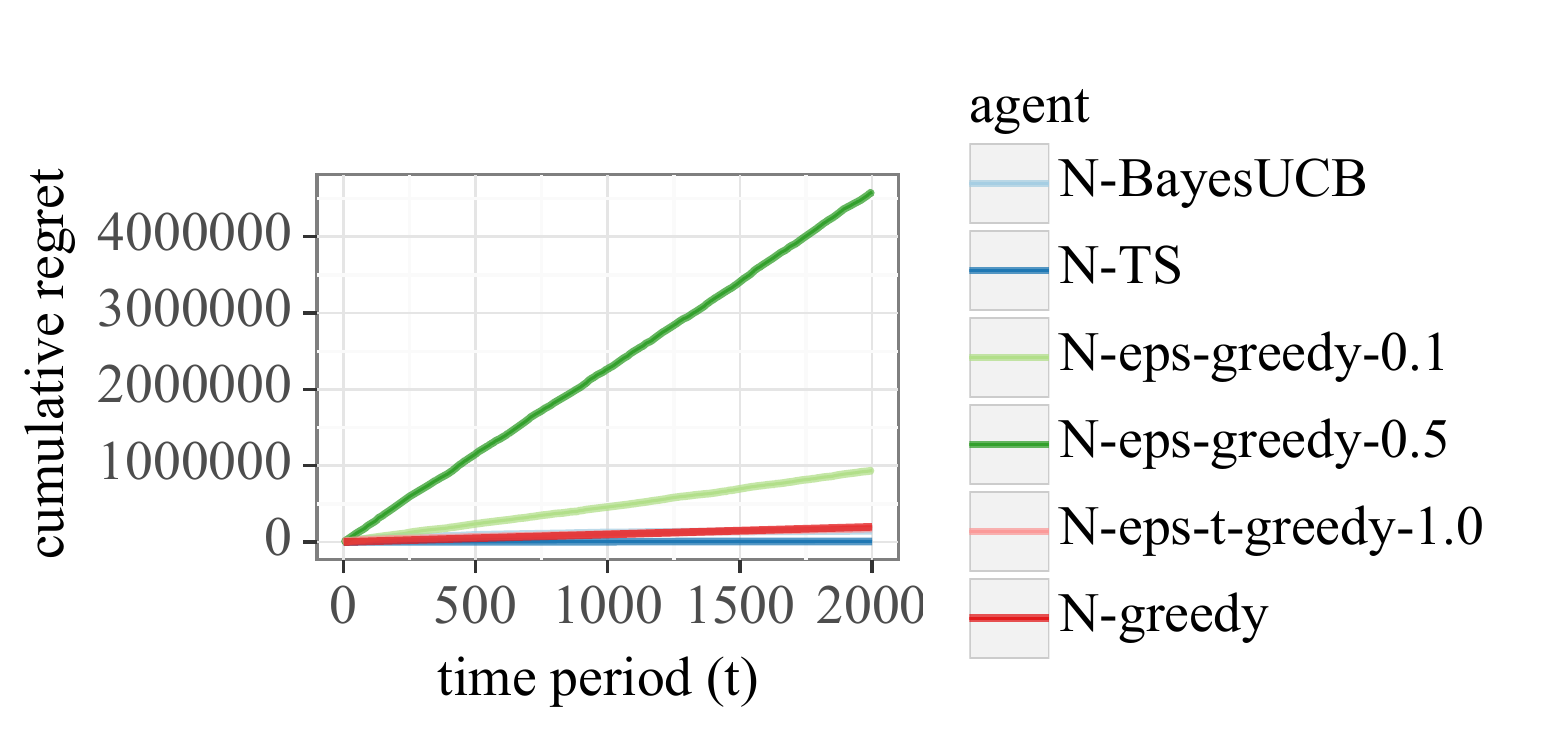}
      \caption{}
      \label{fig:cumulative_regret_turin1_correlated}
    }
  \end{subfigure}
  \end{tabular}
      \caption{Experimental results on the real-world road networks in the scenario where there is correlation between edges in the environments. For Luxembourg \#1, Luxembourg \#2, Monaco and Turin, respectively, (a), (b), (c) and (d) show the average cumulative regret results for the Thompson Sampling (TS), BayesUCB, $\epsilon_t$-greedy with fixed $\epsilon_t = 0.1$ (eps-greedy-0.1), $\epsilon_t = 0.5$ (eps-greedy-0.5), $\epsilon_t$-greedy with decaying $\epsilon_t$, and probabilistic greedy algorithms, with rectified Gaussian energy consumption models.}
    \label{fig:exp:results_correlated}
\end{figure*}

\begin{table}
\centering
\begin{tabular}{||c||c|c||c|c||}%
\hline
City&\multicolumn{2}{c||}{Luxembourg \#1}&\multicolumn{2}{c||}{Luxembourg \#2}\\
\hline
Agent&AVG&SD&AVG&SD\\
\hline
N{-}BayesUCB & 73439.29 & 7133.48 & 108704.02 & 9960.32 \\
N{-}TS & \textbf{18313.43} & 2235.25 & \textbf{18351.64} & 4221.91 \\
N{-}eps{-}greedy{-}0.1 & 619592.02 & 100767.70 & 652295.17 & 115089.44 \\
N{-}eps{-}greedy{-}0.5 & 2732324.05 & 160432.90 & 2768669.58 & 141189.83 \\
N{-}eps{-}t{-}greedy{-}1.0 & 130141.09 & 85763.04 & 214824.01 & 173082.45 \\
N{-}greedy & 218541.56 & 208088.80 & 371926.16 & 326922.34 \\
\hline
\hline
City&\multicolumn{2}{c||}{Monaco}&\multicolumn{2}{c||}{Turin}\\
\hline
Agent&AVG&SD&AVG&SD\\
\hline
N{-}BayesUCB & 209794.25 & 57753.94 & 148316.20 & 8128.75 \\
N{-}TS & \textbf{54797.54} & 22390.01 & \textbf{8809.96} & 4657.47 \\
N{-}eps{-}greedy{-}0.1 & 1414164.08 & 1406368.66 & 936641.81 & 91197.23 \\
N{-}eps{-}greedy{-}0.5 & 3202314.86 & 1601363.76 & 4584797.59 & 162979.92 \\
N{-}eps{-}t{-}greedy{-}1.0 & 1485022.56 & 2844251.84 & 178088.28 & 142590.73 \\
N{-}greedy & 1840158.61 & 3102682.97 & 199636.06 & 186795.70 \\
\hline
\end{tabular}
\vspace{4pt}
\caption{Average and standard deviation of regret at $T = 2000$ of agents, where there is correlation between edges in the environments. Bold average values indicate the agent with the lowest regret in each scenario.}\label{tbl:correlated_known}
\end{table}

To demonstrate that the proposed framework performs well even when a few environment assumptions are relaxed, we run an additional set of experiments in a variation of the setting described in Section \ref{sec:exp_known_prior}, with results shown in Figure \ref{fig:exp:results_correlated} and Table \ref{tbl:correlated_known}. Whereas in the previous sections the stochastic weights of all edges are assumed to be mutually independent, we now introduce correlation between edge weights. An example of this in real-world road networks can be that traffic congestion on one road segment is likely to affect nearby road segments as well. 

As in the previous section, a mean vector $\bm{\theta}^*$ unknown to the agents is generated by the environment, where each element is sampled independently from the (Gaussian) prior distribution of each edge in the road network. Subsequently, we randomly assign all edges in $\mathcal{E}$ to a set of $\vert \mathcal{E} \vert / 2$ pairs of edges. We let the energy consumption of the individual edges in each such pair of edges $(e, e') \in \mathcal{E} \times \mathcal{E}$ be perfectly correlated, but we define the marginal distributions according to the model in Section \ref{sec:rectified_gaussian_model}. In each time step, we jointly sample the energy consumption for each pair $(e, e')$ from a two-dimensional distribution with mean vector $\bm{\theta}^*_{(e, e')}$ and covariance matrix $\Sigma_{(e, e')}$, defined as

\begin{align*}
    \bm{\theta}^*_{(e, e')} &= \begin{bmatrix}
           \theta^*_{e} \\
           \theta^*_{e'}
           \end{bmatrix}, \;\;\;
           \Sigma_{(e, e')} = \begin{bmatrix}
           \sigma_e^2 & \sigma_e \sigma_{e'} \\
           \sigma_e \sigma_{e'} & \sigma_{e'}^2
           \end{bmatrix} .
\end{align*}

Beyond the generation of correlated energy consumption by the environment, the experiments are set up exactly as in Section \ref{sec:exp_known_prior}. The agents are assumed to be unaware of the correlation, and only attempt to estimate the parameters of the marginal distributions. As shown in Figures \ref{fig:cumulative_regret_luxembourg1_correlated}, \ref{fig:cumulative_regret_luxembourg2_correlated}, \ref{fig:cumulative_regret_monaco1_correlated} and \ref{fig:cumulative_regret_turin1_correlated}, as well as in Table \ref{tbl:correlated_known}, when compared with results in the previous section, the performance of the agents is not noticeably affected by the presence of correlation. 

\subsection{Synthetic Networks} \label{sec:exp:sn}


\begin{figure*}[!t]
  \centering
  \includegraphics[trim={0pt 15pt 0pt 20pt}, width=0.5\textwidth]{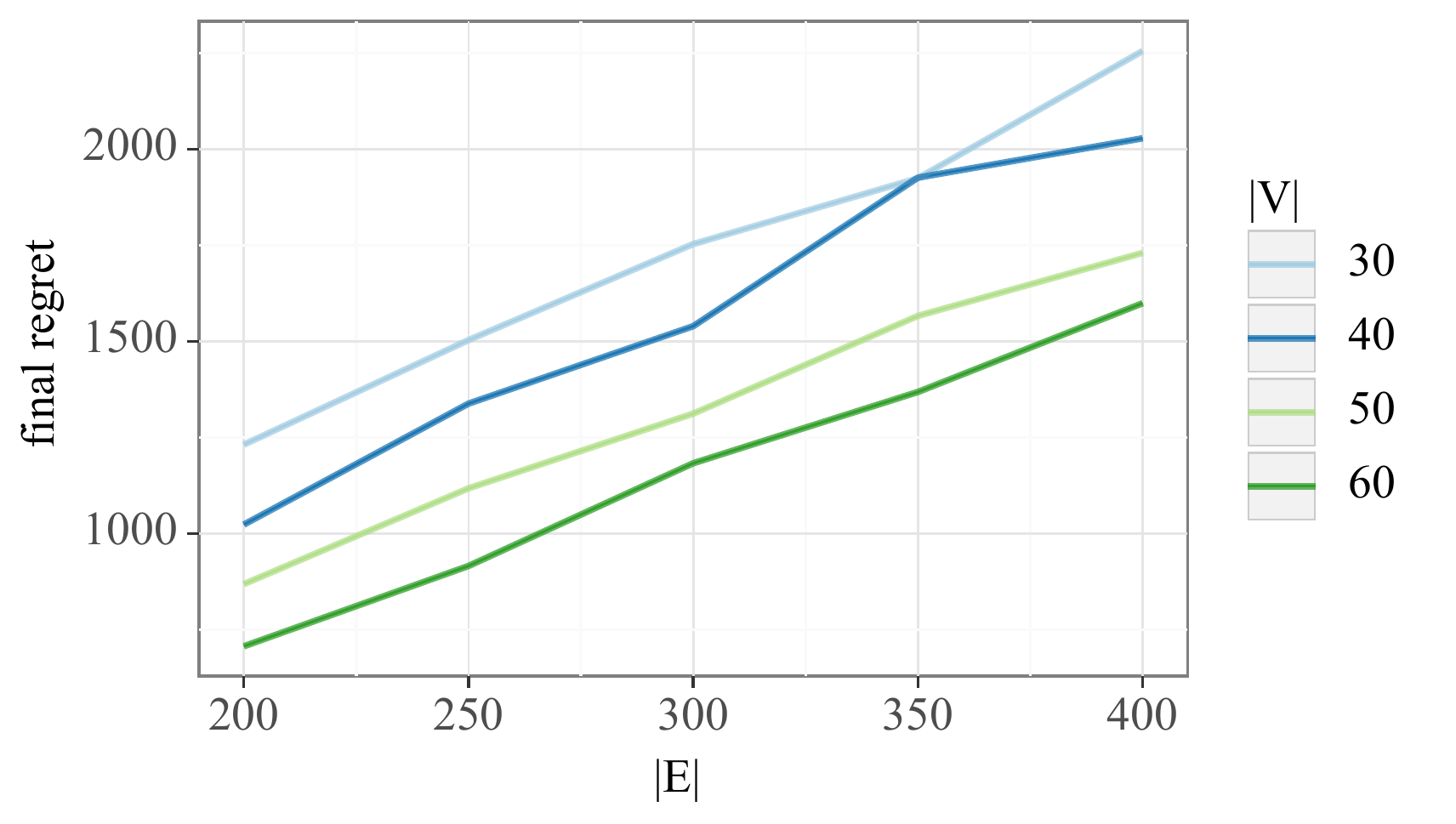}
  \caption{Final cumulative regret ($T = 2000$) on synthetic networks as a function of $\cardE$.}
  \label{fig:cumulative_regret_edges}
\end{figure*}


In order to evaluate the regret bound in Proposition \ref{prop:graph_regret_bound_ts}, we design synthetic directed acyclic network instances $\mathcal{G}(\mathcal{V},\mathcal{E},\bm{w})$ according to a specified number of vertices $n$ and number of edges $o$ (with the constraint that $n - 1 \leq o \leq n (n - 1) / 2$). We start the procedure by adding $n$ vertices $u_1, \dots, u_{n}$ to $\mathcal{V}$. Then for each $h \in [1,n-1]$ we add an edge $(u_h,u_{h+1})$ to $\mathcal{E}$. This ensures that the network contains a path with all vertices in $\mathcal{V}$. Finally, we add $o-n$ edges $(u_h,u_{h'})$ uniformly at random to $\mathcal{E}$, such that $h \neq h'$, $h+1 \neq h'$ and $h < h'$.

Since these networks are synthetic, instead of modeling probabilistic energy consumption, we design instances where it is difficult for an exploration algorithm to find the path with the lowest expected cost. Given a synthetic network $\mathcal{G}$ generated according to the aforementioned procedure, we select $\bm{p} = \langle u_1, \dots, u_n \rangle$ to be the optimal path. In other words, $\bm{p}$ contains every vertex $u \in \mathcal{V}$. The reward distribution for each edge $e$ in $\bm{p}$ is chosen to be $\mathcal{N}(-\Tilde{E}_e | \theta^*_e, \sigma_e^2)$ with $\theta^*_e = -10$ and $\sigma_e^2 = 4$. For $(u_h, u_{h'}) \in E$ where $(u_h, u_{h'}) \notin \bm{p}$, we set $\theta^*_e = -11(h'-h)$, where $h'-h$ is the number of vertices skipped by the shortcut. This guarantees that no matter the size of the network and the number of edges that form shortcuts between vertices in $\bm{p}$, $\bm{p}$ will always have a lower expected cost than any other path in $\mathcal{G}$.

For the agent prior $\mathcal{N}(\theta^*_e | \mu_{e,0}, \varsigma_{e,0}^2)$, we set $\mu_{e,0} = -11(h'-h)$ and $\varsigma_{e,0}^2 = 8$. This choice of prior mean implies according to our prior beliefs, every path from the source $u_1$ to the target $u_n$ will initially have the same estimated expected cost.

We run the synthetic network experiment with $T = 2000$ time steps, varying the number of vertices $\cardV \in \{30, 40, 50, 60\}$ and edges $\cardE \in \{200, 250, 300, 350, 400\}$. In Figure \ref{fig:cumulative_regret_edges}, each plot represents the cumulative regret at $T = 2000$ for a fixed $\cardV$, as a function of $\cardE$. We observe that the regret increases no more than linearly with the number of edges, which is consistent with the theoretical regret bound in Corollary \ref{cor:graph_regret_bound_ts2}. %

\section{Conclusion}
We developed a Bayesian online learning framework for the problem of energy efficient navigation of electric vehicles. Our Bayesian model assumes a rectified Gaussian or Log-Gaussian energy model. To learn  the unknown parameters of the model, we adapted exploration methods such as Thompson Sampling and BayesUCB within the online learning framework. We extended the framework to multi-agent and batched feedback settings, and established theoretical regret bounds. Finally, we demonstrated the performance of the framework with several real-world and synthetic experiments. %

\section*{Acknowledgement}
Niklas {\AA}kerblom is a PhD student employed by Volvo Car Corporation. This work is funded by the Strategic Vehicle Research and Innovation Programme (FFI) of Sweden, through the project EENE (reference number: 2018-01937).

\appendix
\section{Notation}\label{sec:notation}
\setcounter{table}{0}
\renewcommand{\thetable}{A\arabic{table}}

The notation used throughout the paper is summarized below, in Table \ref{tbl:notation}. Note that since each each edge $e \in \mathcal{E}$ corresponds to a base arm $i \in \mathcal{A}$, these are used interchangeably as subscript indices to various variables.

\begin{longtable}{||c|c||}
\hline%
Notation&Description\\%
\hline%
\endhead
\hline%
\caption{Summary of the notation used throughout the paper.}\\ %
\multicolumn{2}{r}{\footnotesize Continued on the next page}
\endfoot 
\caption{Summary of the notation used throughout the paper.} \label{tbl:notation}
\endlastfoot
$\bm{a}$ & A super-arm\\
$\bm{a}_t$ & Super-arm selected at time $t$\\
$\bm{a}^*$ & Optimal super-arm\\
$a$ & An arm\\
$a_t$ & Arm selected at time $t$\\
$b$ & Batch index (in batched feedback setting)\\
$\bm{b}$ & A super-arm (alternative)\\
$\bm{c}$ & A cycle (path)\\
$d$ & Number of base arms\\
$e$ & An edge\\
$g$ & Gravitational acceleration ($\text{m}/\text{s}^2$)\\
$h$ & Vertex index\\
$h'$ & Vertex index (alternative)\\
$i$ & A base arm\\
$j$ & A base arm (alternative)\\
$k$ & Agent index (in multi-agent setting)\\
$l_e$ & Length of edge $e$ (m)\\
$m$ & Vehicle mass (kg)\\
$n$ & Final (vertex) index (of, e.g., a path)\\
$o$ & Final number of edges in synthetic network setting\\
$\bm{p}$ & A path (connected sequence of vertices / edges)\\
$s$ & A time step / round (alternative)\\
$t$ & A time step / round\\
$t_b$ & Last time step / round of batch $b$\\
$u$ & A vertex\\
$v$ & Vehicle speed (m/s)\\
$\bm{w}_t$ & Edge weight vector at time $t$\\
$w_{e, t}$ & Weight of edge $e$ at time $t$\\
$z_e$ & Rectified (Gaussian) energy consumption edge $e$\\
$A$ & Front surface area of vehicle ($\text{m}^2$)\\
$B$ & Number of batches\\
$C_{r}$ & Rolling resistance coefficient of edge $e$\\
$C_d$ & Air drag coefficient of vehicle\\
$E_e$ & Approximated energy consumption of edge $e$ (Wh)\\
$\Tilde{E}_e$ & Stochastic energy consumption of edge $e$ (Wh)\\
$H_t$ & History (random) of actions and rewards until time $t$\\
$H$ & A realized (fixed) history of actions and rewards\\
$K$ & Number of agents in the multi-agent setting\\
$M$ & A Markov Decision Process (MDP)\\
$T$ & Time horizon\\
$\alpha_e$ & Inclination angle of edge $e$ (radians)\\
$\beta$ & Probability threshold parameter of quantile function\\
$\Delta_t$ & Instant regret (suboptimality gap) at time $t$\\
$\epsilon$ & Exploration probability of $\epsilon$-greedy algorithm\\
$\epsilon_t$ & Exploration probability of $\epsilon_t$-greedy algorithm at time $t$\\
$\eta$ & Powertrain efficiency of vehicle\\
$\eta^+$ & Powertrain efficiency of vehicle during traction\\
$\eta^-$ & Powertrain efficiency of vehicle during braking\\
$\bm{\theta}$ & A mean vector\\
$\hat{\bm{\theta}}_t$ & Average reward vector until time $t$\\
$\tilde{\bm{\theta}}$ & Sampled mean reward vector\\
$\bm{\theta}^*$ & True mean reward vector\\
$\bm{\theta}^*_{(i, j)}$ & True mean reward vector of correlated base arms $i$ and $j$\\
$\hat{\theta}_{i,t}$ & Average reward of base arm $i$ until time $t$\\
$\tilde{\theta}_i$ & Sampled mean reward of base arm $i$\\
$\theta_i^*$ & True mean reward of base arm $i$\\
$\vartheta$ & Factor of the mean to calculate prior standard deviation\\
$\kappa_t$ & Feedback delay of arm selected at time $t$\\
$\lambda$ & A distribution\\
$\bm{\mu}_{0}$ & Prior mean vector\\
$\bm{\mu}_{t}$ & Posterior mean vector at time $t$\\
$\mu_{i,0}$ & Prior mean of base arm $i$\\
$\mu_{i,t}$ & Posterior mean of base arm $i$ at time $t$\\
$\bar{\nu}_{i, x}$ & Average reward of base arm $i$ over first $x$ plays\\
$\rho$ & Air density ($\text{kg} / \text{m}^3$)\\
$\sigma_i^2$ & Noise variance of base arm $i$\\
$\Sigma_{(i, j)}$ & Covariance matrix of correlated base arms $i$ and $j$\\
$\bm{\varsigma}_{0}$ & Prior standard deviation vector\\
$\bm{\varsigma}_{t}$ & Posterior standard deviation vector at time $t$\\
$\varsigma_{i, 0}^2$ & Prior variance of base arm $i$\\
$\varsigma_{i, t}^2$ & Posterior variance of base arm $i$ at time $t$\\
$\tau$ & Episode length of reinforcement learning problem\\
$\varphi$ & Factor of the mean to calculate noise standard deviation\\
$\psi_i$ & Gaussian prior mean of base arm $i$ \\
$\mathcal{A}$ & Set of base arms\\
$\mathcal{D}_t$ & Set of delayed rewards at time $t$\\
$\mathcal{E}$ & Set of edges\\
$\mathcal{G}(\mathcal{V}, \mathcal{E}, \bm{w})$ & A (weighted) graph with vertices $\mathcal{V}$, edges $\mathcal{E}$, and weights $\bm{w}$\\
$\mathcal{I}$ & Set of super-arms\\
$\mathcal{M}$ & Set of Markov Decision Processes (MDPs)\\
$\mathcal{P}$ & Set of paths\\
$\mathcal{V}$ & Set of vertices\\
$\mathbb{R}$ & Set of real numbers\\
$\mathbb{R}^+$ & Set of non-negative real numbers\\
$\text{BayesRegret}(T)$ & Bayesian regret until horizon $T$\\
$\text{BayesRegret}_k (T)$ & Bayesian regret of agent $k$ until horizon $T$\\
$\text{Bernoulli}(\cdot)$ & Bernoulli distribution\\
$\mathbb{E}[\cdot]$ & Expected value of random variable\\
$f_{\bm{\theta}} (\bm{a})$ & Expected reward of super-arm $\bm{a}$, given the mean vector $\bm{\theta}$\\
\multirow{2}{*}{$f^R_{\bm{\theta}} (\bm{a})$} & Expected reward of super-arm $\bm{a}$, given the mean vector $\bm{\theta}$, \\ & under rectified Gaussian base arm feedback\\
$L(\bm{a}, H)$ & Lower confidence bound of super-arm $\bm{a}$ given history $H$\\
$\mathcal{LN}(\cdot)$ & Log-Gaussian distribution\\
$\mathbf{Mode}[\cdot]$ & Mode of random variable\\
$\mathcal{N}(\cdot)$ & Gaussian distribution\\
$\mathcal{N}^R(\cdot)$ & Rectified Gaussian distribution\\
$N_t (i)$ & Number of plays of base arm $i$ until time $t$\\
$\mathcal{O}(\cdot)$ & Order of a function\\
$\tilde{\mathcal{O}}(\cdot)$ & Order of a function (excluding polylogarithmic factors)\\
$P(\cdot)$ & Probability distribution of random variable\\
$\text{Pr}\{\cdot\}$ & Probability of event\\
$Q(\beta, \lambda)$ & Quantile function of distribution $\lambda$ with probability threshold $\beta$\\
$\text{Queue}[\bm{a}]$ & Delayed feedback queue of super-arm $\bm{a}$\\
$r_t (\bm{a})$ & Reward of super-arm $\bm{a}$ at time $t$\\
$\text{Regret}(T)$ & Frequentist regret until horizon $T$\\
$\text{Regret}_k (T)$ & Frequentist regret of agent $k$ until horizon $T$\\
$U(\bm{a}, H)$ & Upper confidence bound of super-arm $\bm{a}$ given history $H$\\
$\mathbf{Var}[\cdot]$ & Variance of random variable\\
$\phi(x)$ & Standard Gaussian probability density function (PDF)\\
$\Phi(x)$ & Standard Gaussian cumulative distribution function (CDF)\\
\hline%
\end{longtable}


\bibliography{main}

\end{document}